\documentclass[11pt, a4paper]{thuc3i}
\usepackage[sort&compress]{natbib}
\setheadertext{How Far Can Unsupervised RLVR Scale LLM Training?}

\usepackage[utf8]{inputenc} % allow utf-8 input
\usepackage[T1]{fontenc}    % use 8-bit T1 fonts
\usepackage{hyperref}       % hyperlinks
\usepackage{url}            % simple URL typesetting
\usepackage{tabularx}
\usepackage{makecell}
\usepackage{array,ragged2e,booktabs}
\newcolumntype{P}[1]{>{\RaggedRight\arraybackslash}p{#1}}
\usepackage{longtable}
\usepackage{amsfonts}       % blackboard math symbols
\usepackage{amsthm}         % theorem environments
\newtheorem{theorem}{Theorem}

\newtheorem{proposition}{Proposition}
\usepackage{nicefrac}       % compact symbols for 1/2, etc.
\usepackage{microtype}      % microtypography
\usepackage{xcolor}         % colors

\usepackage{algorithm}
\usepackage{algorithmicx}
\usepackage{algpseudocode}

\definecolor{darkblue}{rgb}{0, 0, 0.5}
\hypersetup{colorlinks=true, citecolor=darkblue, linkcolor=darkblue, urlcolor=darkblue}

\usepackage{xurl}

\usepackage{soul}
\usepackage{amsmath}
\usepackage{subcaption}
\usepackage{graphicx}
\usepackage{changes}
\usepackage{mathtools}
\usepackage{changes}
\usepackage{pifont}
\usepackage{tocloft}

\usepackage{lipsum}
\usepackage{colortbl}
\usepackage{wrapfig}
\usepackage{xspace}
\usepackage{multirow}
\usepackage{enumitem}

\usepackage{pifont}
\usepackage{listings}
\usepackage{fontawesome5} 

\usepackage{wrapfig}
\usepackage{caption}

\usepackage{makecell}

\usepackage{tabularx}
\usepackage{afterpage}

\usepackage[edges]{forest}
\usepackage[normalem]{ulem}
\usepackage{caption}
\usepackage{CJKutf8}
\usepackage{bbding}
\usepackage[most,skins,theorems]{tcolorbox}
\usepackage{epigraph}

\usepackage[tikz]{bclogo}
\usepackage[framemethod=tikz]{mdframed}

\definecolor{darkblue}{rgb}{0, 0, 0.5}
\hypersetup{colorlinks=true, citecolor=darkblue, linkcolor=darkblue, urlcolor=darkblue}

\definecolor{lightorange}{HTML}{faa755}
\definecolor{lightblue}{RGB}{220,235,250}
\usepackage[most,skins,theorems]{tcolorbox}
\tcbset{
  takeawaysbox/.style={
    title=Takeaway,
    colback=lightblue!80,
    colframe=black,
    fonttitle=\bfseries\small,
    coltitle=white,
    colbacktitle=black,
    enhanced,
    attach boxed title to top left={xshift=2.5mm,yshift=-2.5mm},
    boxed title style={rounded corners, size=small, colframe=black, colback=black},
    width=\linewidth,
    arc=3.5mm
  }
}
\tcbset{
  taskbox/.style={
    title=Task Definition,
    colback=lightblue!80,
    colframe=black,
    fonttitle=\bfseries\small,
    coltitle=white,
    colbacktitle=black,
    enhanced,
    attach boxed title to top left={xshift=2.5mm,yshift=-2.5mm},
    boxed title style={rounded corners, size=small, colframe=black, colback=black},
    width=\linewidth,
    arc=2mm
  }
}
\usepackage{xcolor}

\usepackage{xspace}

\usepackage{cleveref}

\usepackage{enumitem}

\usepackage[table]{xcolor}  % 支持表格上色

\definecolor{darkred}{RGB}{200, 0, 0}

\newcommand{\topic}{\textbf{\textcolor{darkred}{Unsupervised RLVR}}\xspace}

\NewDocumentCommand{\yuxin}{ mO{} }{\textcolor{red}{\textsuperscript{yuxin}\textbf{\small[#1]}}}

\NewDocumentCommand{\bx}{ mO{} }{\textcolor{blue}{\textsuperscript{bingxiang}\textbf{\small[#1]}}}

\NewDocumentCommand{\xiusi}{ mO{} }{\textcolor{cyan}{\textsuperscript{\textit{Xiusi}}\textsf{\textbf{\small[#1]}}}}

\def\github{\raisebox{-1pt}{\includegraphics[height=1.05em]{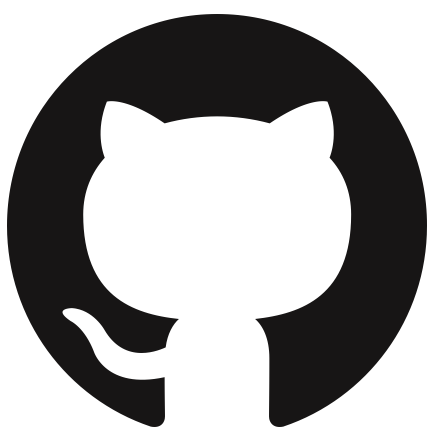}}\kern+0.2em}

\usepackage{perpage}
\MakePerPage{footnote}

\title{How Far Can Unsupervised RLVR Scale LLM Training?}

\author{%
    Bingxiang He$^{*1}$, Yuxin Zuo$^{*\dagger1,2}$, Zeyuan Liu$^{*1}$, Shangziqi Zhao$^{*3}$, Zixuan Fu$^{1}$, Junlin Yang$^{1}$, Cheng Qian$^{4}$, Kaiyan Zhang$^{1,5}$, Yuchen Fan$^{6}$, Ganqu Cui$^{2}$, Xiusi Chen$^{4}$, Youbang Sun$^{1}$, Xingtai Lv$^{1}$, Xuekai Zhu$^{6}$, Li Sheng$^{1}$, Ran Li$^{1}$, Huan-ang Gao$^{1}$, Yuchen Zhang$^{7}$, Bowen Zhou$^{\ddagger1,2}$, Zhiyuan Liu$^{\ddagger1}$, Ning Ding$^{\ddagger1,2}$ \\
    $^{1}$Tsinghua University \quad
    $^{2}$Shanghai AI Lab \quad
    $^{3}$Xi'an Jiaotong University \quad
    $^{4}$University of Illinois Urbana-Champaign \\
    $^{5}$Frontis.AI \quad
    $^{6}$Shanghai Jiao Tong University \quad
    $^{7}$Peking University
    \vskip -1mm
    \textbf{$^*$Equal Contribution. Orders are determined randomly.} \quad 
    \textbf{$^\dagger$Project Lead.} \quad \textbf{$^\ddagger$Corresponding Authors.}
    \vskip -1mm
    \github \url{https://github.com/PRIME-RL/TTRL}. \\
    \vskip 1mm
    \faEnvelope[regular] \texttt{hebx24@mails.tsinghua.edu.cn, dingning@mail.tsinghua.edu.cn}
}

\renewcommand{\today}{2026-03-10}

\begin{abstract}
Unsupervised reinforcement learning with verifiable rewards (URLVR) offers a pathway to scale LLM training beyond the supervision bottleneck by deriving rewards without ground truth labels.
Recent works leverage model intrinsic signals, showing promising early gains, yet their potential and limitations remain unclear.
In this work, we revisit URLVR and provide a comprehensive analysis spanning taxonomy, theory and extensive experiments.
We first classify URLVR methods into intrinsic versus external based on reward sources, then establish a unified theoretical framework revealing that all intrinsic methods converge toward sharpening the model's initial distribution
This sharpening mechanism succeeds when initial confidence aligns with correctness but fails catastrophically when misaligned.
Through systematic experiments, we show intrinsic rewards consistently follow a rise-then-fall pattern across methods, with collapse timing determined by model prior rather than engineering choices.
Despite these scaling limits, we find intrinsic rewards remain valuable in test-time training on small datasets, and propose Model Collapse Step to measure model prior, serving as a practical indicator for RL trainability.
Finally, we explore external reward methods that ground verification in computational asymmetries, showing preliminary evidence they may escape the confidence-correctness ceiling.
Our findings chart boundaries for intrinsic URLVR while motivating paths toward scalable alternatives.

\end{abstract}

\begin{document}

\maketitle

% \vspace{1em}
% \noindent\makebox[\textwidth]{
% \parbox{0.8\textwidth}{\centering
% \textit{``He that is taught only by himself has a fool for a master.''}\\[0.5em]
% \hfill --- Ben Jonson
% }}
% \vspace{1em}

% \hspace{100pt}\parbox[b]{0.75\textwidth}
% {
% \epigraph{\textit{``He that is taught only by himself has a fool for a master.''}}{--- Ben Jonson}
% }

% "If intelligence was a cake, unsupervised learning would be the cake, supervised learning would be the icing on the cake, and reinforcement learning would be the cherry on the cake." -- Yann LeCun

% "Unsupervised learning is the only thing that makes sense. Humans don’t learn with full supervision." -- Geoffrey Hinton

% "A lot of human learning comes from unsupervised learning where you're just sort of observing the world around you and understanding how things behave." - Jeff Dean

\begin{figure}[h]
    \centering
    \includegraphics[width=\linewidth]{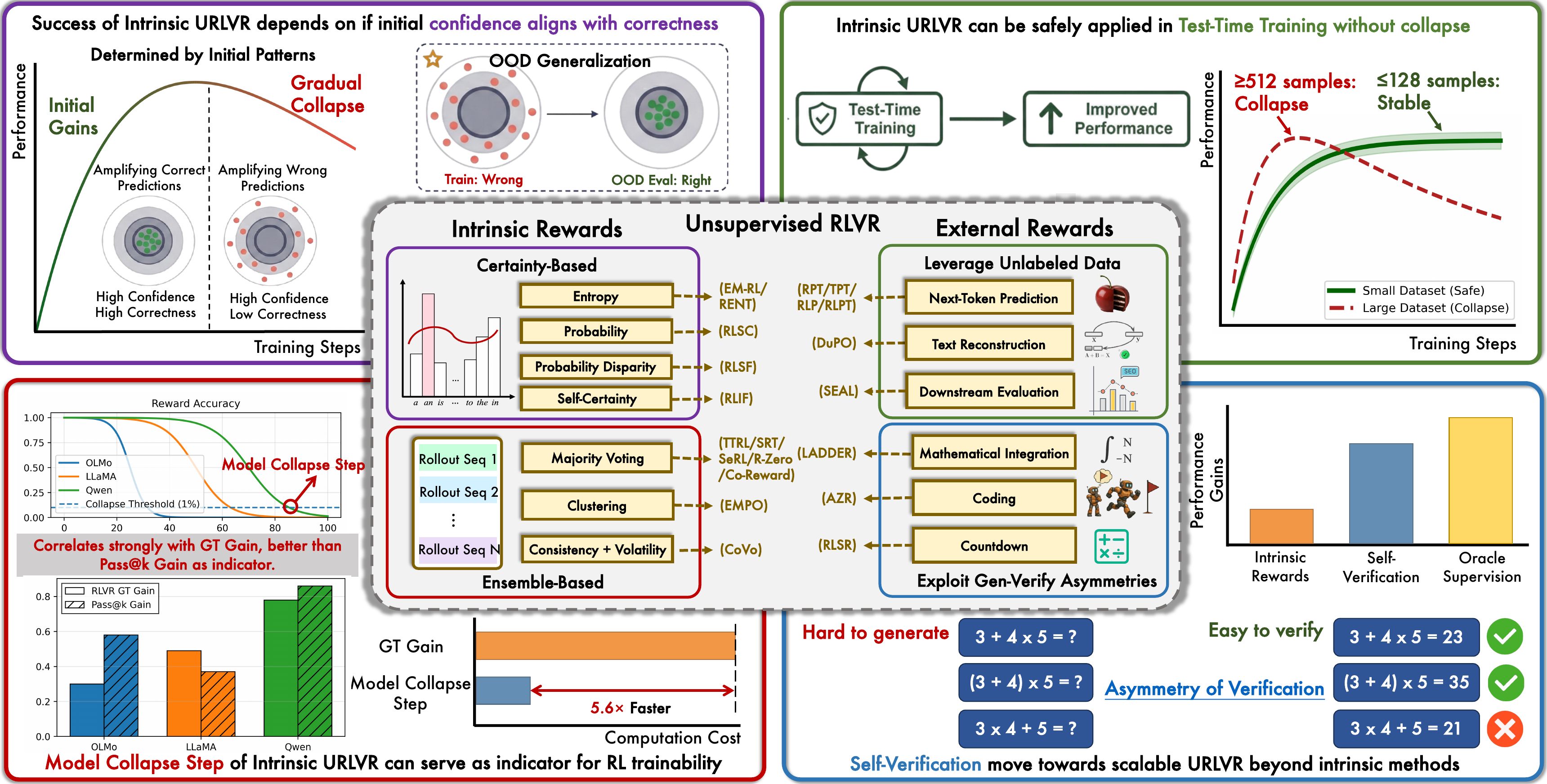}
    \caption{Overview of our paper's framework. At the center is the taxonomy of Unsupervised RLVR methods, categorized into intrinsic rewards and external rewards. The four surrounding panels illustrate the key findings of our empirical investigation.}
    \label{fig:framework}
\end{figure}

\newpage
\begingroup
\setlength{\baselineskip}{1.25\baselineskip}
% {
% \hypersetup{linkcolor=RoyalBlue, linktoc=page}
% \tableofcontents
% }
\tableofcontents
\endgroup
% {
%   \hypersetup{linkcolor=RoyalBlue, linktoc=page}
%   %\hypersetup{linkcolor=black, linktoc=section}
%   \tableofcontents
% }

\newpage

\section{Introduction}

Reinforcement learning with verifiable rewards (RLVR) has been central to recent breakthroughs in enhancing reasoning capability in large language models (LLMs).
In RLVR, models learn from rewards that can be verified against ground truth, such as correctness in mathematics or successful code execution.
Recent leading models including
% OpenAI's o3~\citep{openai-o3},
DeepSeek-R1~\citep{guo2025deepseek}, Gemini 2.5~\citep{comanici2025gemini25} and the Qwen3 series~\citep{yang2025qwen3,qwq32b} have achieved remarkable performance on mathematics, coding and science benchmarks by scaling supervised RLVR.
However, on the path toward superintelligence, this approach faces a crucial limitation: scaling supervision requires prohibitively high human costs, and as models reach or surpass human expertise in specialized domains, obtaining reliable ground truth supervision becomes increasingly infeasible~\citep{burns2023weak,silver2025welcome}.

This supervision bottleneck has spurred growing interest in Unsupervised RLVR (URLVR)~\citep{zuo2025ttrl,zhao2025absolute}, which derives rewards without ground truth labels.
Just as pretraining scaling laws~\citep{brown2020language,raffel2020exploring} transformed large-scale computation into intelligence on vast amounts of unlabeled data, URLVR promises to extend this paradigm to post-training, scaling reinforcement learning beyond reliance on human-provided labels.

Current URLVR methods have primarily relied on leveraging the model's intrinsic signals as rewards.
Many works have proposed intrinsic reward methods, from majority voting across multiple rollouts~\citep{zuo2025ttrl} to entropy-based metrics~\citep{agarwal2025unreasonable} as rewards, reporting encouraging early training gains.
Yet these gains come with growing concerns, as several studies highlight critical failure modes including reward hacking and model collapse~\citep{shafayat2025can,agarwal2025unreasonable,zhang2025no}.
Moreover, diverse methodologies have been applied across different model families and evaluation settings without systematic comparison or consensus on what constitutes reliable unsupervised rewards.
This raises a fundamental question for the field: \textbf{Can intrinsic rewards truly scale LLM training?}
% , or do their early successes mask deeper limitations?}

% It is worth noting that URLVR methods have primarily relied on leveraging a model’s intrinsic signals as training rewards.
% Common approaches include majority voting across multiple rollouts~\citep{zuo2025ttrl} or the adoption of entropy-based metrics~\citep{agarwal2025unreasonable}.
% These forms of intrinsic reward have shown notable performance gains.
% Yet, such seemingly unequivocal successes come with concerns, as several works highlight critical failure modes such as reward hacking and model collapse~\citep{shafayat2025can,agarwal2025unreasonable,zhang2025no}.
% Moreover, diverse methodologies have been applied across different model families and evaluation settings, yet there remains neither a systematic comparison nor a consensus regarding what constitutes reliable unsupervised rewards.
% \textit{\textbf{So behind the flourishing progress of such methods, might there lie certain hidden risks and uncertainties?}}

To answer this, we conduct a comprehensive study of URLVR, spanning taxonomy, theory and extensive experiments.
We begin by classifying URLVR methods into intrinsic and external based on the source of rewards (\Cref{sec:urlvr}).
We then theoretically analyze the underlying mechanism of intrinsic URLVR, revealing that despite diverse design choices, all intrinsic methods converge toward sharpening the model's initial distribution, amplifying existing preferences rather than discovering new knowledge (\Cref{sec:theory}).
This sharpening mechanism has both strengths and limitations, depending on the model prior: whether the model's initial confidence aligns with correctness.
To validate this, we implement intrinsic reward methods and show that intrinsic URLVR consistently follows a rise-then-fall pattern across all methods, differing only in when rather than whether collapse occurs (\Cref{sec:rise}).

Despite these scaling limitations, intrinsic rewards remain valuable on small and domain-specific datasets, avoiding collapse even when all initial preferences are wrong, making it well-suited for test-time training (\Cref{sec:q3}).
Furthermore, we can leverage the rise-then-fall pattern to measure model prior, proposing Model Collapse Step to serve as a practical model prior indicator, predicting RL trainability without expensive training runs (\Cref{sec:prior}).
Finally, since intrinsic rewards face fundamental scalability limits rooted in sharpening mechanism, we investigate the scalability of external URLVR methods, taking the self-verification as an example, which generate verifiable rewards through generation-verification asymmetries rather than internal model states, showing that it exhibits sustained improvement without the collapse patterns inherent to intrinsic methods (\Cref{sec:beyond_intrinsic}).

Overall, our analysis charts a clear path from understanding why current intrinsic URLVR fails to identifying how future methods can scale.
Our findings reveal that intrinsic rewards work within well-defined boundaries determined by the model prior, enabling efficient and safe gains in test-time training while risking reward hacking when confidence misaligns with correctness.
These limitations motivate exploration of external reward methods,
% that leverage external verification mechanisms,
from generation-verification asymmetries in structured domains to self-supervised signals from vast unlabeled corpora, which offer pathways toward more robust and scalable improvement.
% Beyond training itself, we also uncover a practical diagnostic: early intrinsic reward dynamics serve as a fast indicator of model-task priors, offering a lightweight alternative to $pass@k$ for assessing RL trainability.
% \input{Sections/2_Preliminaries}
% \section{Taxonomy of Intrinsic Rewards}
\section{Taxonomy of Unsupervised RLVR}
\label{sec:urlvr}

While RLVR has demonstrated remarkable success in improving the performance of LLMs, it relies on curated high-quality datasets with ground truth labels, which are difficult to obtain as further improvement of models depends on more challenging datasets, imposing a fundamental scalability bottleneck. This limitation has motivated recent efforts to explore alternative reward sources. In particular, an emerging line of research aims to extend the scalability of RLVR through the lens of scalable rewards~\citep{zuo2025ttrl,zhao2025learning,jayalath2025compute}.

In our paper, we investigate this special form of RLVR, termed \textbf{Unsupervised RLVR (URLVR)}:

\begin{tcolorbox}
[title = \textbf{Unsupervised RLVR}, colback=Salmon!20, colframe=Salmon!90!Black]
\textbf{Problem Setting:} We investigate reinforcement learning for verifiable tasks where ground-truth labels are difficult to obtain. In \topic, models must learn from proxy reward signals derived without relying on human efforts.
\end{tcolorbox}

Unsupervised RL is already a concept built in recent works~\citep{agarwal2025unreasonable,zhang2025no}, and we added ``Verifiable Rewards'' to precisely define the domain of tasks we are studying. On one hand, some methods we investigate, like TTRL~\citep{zuo2025ttrl}, rely on the matching between model responses and a verifiable ground truth label (e.g., a final numerical answer in a math problem). On the other hand, we want to distinguish from a wide range of general-domain ``self-rewarding'' methods~\citep{yuan2024self,huang2024self}, which are outside the scope of analysis.

As shown in the middle panel of \Cref{fig:framework}, based on the source of the reward, we categorize URLVR methods into two main types: intrinsic reward methods and external reward methods.

% \begin{tcolorbox}[
%     colback=blue!5!white,
%     colframe=blue!75!black,
%     title={\textbf{URLVR: Learning Without Ground Truth}},
%     fonttitle=\bfseries,
%     coltitle=black,
%     colbacktitle=blue!25!white,
%     enhanced,
%     attach boxed title to top left={xshift=3mm,yshift=-3mm},
%     boxed title style={sharp corners, size=small, colframe=blue!75!black},
%     width=\linewidth,
%     arc=2mm,
%     boxrule=1pt
% ]

% \textbf{Problem Setting:} We investigate reinforcement learning for reasoning tasks where ground-truth rewards are unavailable. In \textbf{Unsupervised RLVR (URLVR)}, models must learn from proxy reward signals derived without any external supervision—no human labels, no automated verifiers, no ground-truth answers.
% \end{tcolorbox}

% \input{Sections/taxonomy}

\subsection{Intrinsic Reward Methods}

A line of works within URLVR leverages proxy intrinsic rewards generated solely by the model itself, eliminating the need for ground-truth labels in reinforcement learning. We distinguish two intrinsic reward paradigms by \emph{how} rewards are constructed from the model:

\begin{itemize}[topsep=0pt, partopsep=0pt, leftmargin=18pt, itemsep=0pt]
    \item \textbf{Certainty-Based Rewards}: Derive a reward from policy’s confidence (e.g., logits) along a trajectory, encouraging low-entropy, high-confidence predictions.
    \item \textbf{Ensemble-Based Rewards}: Derive a reward from agreement across multiple rollouts (e.g., majority voting), assuming that cross-sample consistency correlates with correctness.
\end{itemize}

\noindent\textbf{Notation.} Let $x$ be the input prompt and $y = (y_1, \dots, y_{|y|})$ be the generated output sequence, consisting of reasoning trajectory $c$ and answer $a$ extracted from \texttt{\textbackslash boxed\{\}} in $y$. The model is an LLM policy $\pi_\theta$ parameterized by $\theta$, where $\pi_\theta(\cdot | x, y_{<t})$ represents the probability distribution over the vocabulary for the next token $y_t$.

% \textbf{Certainty-Based Rewards}
% \label{sec:certainty-based}

% \noindent\textbf{Background and Motivation.}
\textbf{Certainty-Based Rewards} measure the model’s confidence in its outputs, which aligns with the classic approaches in Test-Time Adaptation~(TTA)~\citep{wang2020tent}, under the assumption that higher confidence correlates with correctness.
This confidence is derived from the model’s internal state, specifically its output probability distributions (logits).
This concept is rooted in traditional machine learning, where the low-density separation principle~\citep{chapelle2005semi} suggests that decision boundaries should avoid high-density regions.
This principle was implemented through techniques like entropy minimization~\citep{grandvalet2004semi} and high-confidence pseudo-labeling~\citep{lee2013pseudo} in classification tasks.
These ideas have been successfully adapted to domains such as TTA in image classification~\citep{krishnan2020improving, wang2020tent}, and unsupervised skill discovery in robotics~\citep{eysenbach2018diversity, kim2023variational}.

\begin{table}[t]
\centering
\scriptsize
\resizebox{\textwidth}{!}{
\begin{tabular}{@{\hskip 2mm}l@{\hskip 5mm}ll >{\centering\arraybackslash}p{2cm}}
\toprule
\addlinespace[8pt]
\textbf{Method} & \textbf{Estimator} & \textbf{Formula} \\
\addlinespace[4pt]
\midrule

\addlinespace[4pt]
% Intuitor~\citep{zhao2025learning}        
\textbf{RLIF}  
& \textbf{Self-Certainty}       & \( r(x, y) = \frac{1}{|y|} \sum_{t=1}^{|y|} D_{\mathrm{KL}}(U \| \pi_{\theta}(\cdot | x, y_{<t})) \)\\

\addlinespace[8pt]
% EMRL~\citep{agarwal2025unreasonable} 
\textbf{EM-RL}
& \textbf{Trajectory-Level Entropy}  & \( r(x, y) = \frac{1}{|y|} \sum_{t=1}^{|y|} \log \pi_{\theta}(y_t | x, y_{<t}) \) \\

\addlinespace[8pt]
% EMRL, RENT~\citep{prabhudesai2025maximizing}
\textbf{EM-RL, RENT}   
& \textbf{Token-Level Entropy}   & \( r(x, y) = - \frac{1}{|y|} \sum_{t=1}^{|y|} H(\pi_{\theta}(\cdot | x, y_{<t})) \) \\

\addlinespace[8pt]
% RLSC~\citep{li2025confidence}
\textbf{RLSC}          
& \textbf{Probability}           & \( r(x, y) = \prod_{t=1}^{|y|} \pi_{\theta}(y_t | x, y_{<t}) \) \\

\addlinespace[8pt]
% RLSF~\citep{van2025post}
\textbf{RLSF} 
& \textbf{Probability Disparity}  & 
% \( r(x, y) = \frac{1}{M} \sum_{t=1}^{|a|} [ \max_{a_t} \pi_{\theta}(a_t | x, c, a_{<t}) - \max_{a_t \neq \arg \max \pi_{\theta}} \pi_{\theta}(a_t | x, c, a_{<t}) ] \) \\
$r(x, y)=\frac{1}{M}\sum_{t=1}^{|a|} \big[
\max_{a_t}\pi_{\theta}(a_t|x,c,a_{<t})
-\max_{a_t\neq\arg\max\pi_{\theta}}\pi_{\theta}(a_t|x,c,a_{<t})
\big]$ \\

\addlinespace[4pt]
\bottomrule
\end{tabular}}
\caption{Overview of certainty-based rewards, estimators and their formulas. All variants reward high-confidence predictions through different formalizations of model certainty.}
\label{tab:certainty_based_formulas}
\end{table}

% \noindent\textbf{Related Works.}
In \Cref{tab:certainty_based_formulas}, we summarize several estimators that have been proposed to formalize this notion of confidence as intrinsic rewards.
RLIF~\citep{zhao2025learning} proposes Self-Certainty, defined as the average KL divergence between a uniform distribution $U$ over the vocabulary and the model's next-token distribution, which rewards peaked and low-entropy distributions.
Similarly, EM-RL~\citep{agarwal2025unreasonable} and RENT~\citep{prabhudesai2025maximizing} use negative Token-Level Entropy as a direct reward signal to penalize uncertainty at each step.
Aggregating confidence over an entire sequence, works including EM-RL~\citep{agarwal2025unreasonable} use the total Trajectory-Level Entropy (i.e., the sequence log-probability) or its exponentiated form, the raw Probability as used in RLSC~\citep{li2025confidence}.
Besides, RLSF~\citep{van2025post} refines this by considering the Probability Disparity between top tokens, capturing distribution sharpness beyond the maximum.

% \noindent\textbf{Unified Perspective.}
At their core, all certainty-based rewards are different mathematical formalizations for rewarding and reinforcing high-confidence predictions.
Self-Certainty and Token-Level Entropy are step-wise measures that assess the model's confidence at each generation step. Minimizing entropy is functionally similar to maximizing the KL divergence from a uniform distribution.
Trajectory-Level Entropy and Probability are sequence-level aggregations of this confidence, with one simply being the logarithm of the other.
The Probability Disparity measures the gap between top-2 probability tokens, which is another angle for estimating confidence.

\textbf{Ensemble-Based Rewards} leverage the ``wisdom of the crowd'' by generating an ensemble of diverse candidate solutions for the same prompt instead of relying on the certainty of a single output.
The core assumption is that consistency among model-generated answers correlates positively with their correctness, making consensus a robust proxy for correctness.

% \noindent\textbf{Related Works.}
As summarized in \Cref{tab:ensemble_based_formulas}, several methods have been proposed to formalize this notion of consistency.
An early work in this area is TTRL~\citep{zuo2025ttrl}, which uses majority voting on the final answers from multiple rollouts to create a pseudo-label.
This pseudo-label provides a verifiable reward signal for RL.
Building on this foundation, SRT~\citep{shafayat2025can} analyzes limitations including reward hacking, ETTRL~\citep{liu2025ettrl} improves efficiency through entropy-based tree search, Co-Reward~\citep{zhang2025co} enhances robustness via question paraphrasing, and RLCCF~\citep{yuan2025wisdom} incorporates multi-model collectives. More nuanced approaches include EMPO~\citep{zhang2025right} using semantic clustering for soft majority voting and CoVo~\citep{zhang2025consistent} deriving rewards from intermediate reasoning consistency.
% Building on this, SRT~\citep{shafayat2025can} extends the approach to training data, analyzing its limitations and identifying the risk of reward hacking when the model's initial capabilities are low.

% Subsequent works have refined this concept in several ways.
% ETTRL~\citep{liu2025ettrl} improves the efficiency of rollouts by using an entropy-based tree search, reducing the computational overhead of generating multiple solutions.
% Co-Reward~\citep{zhang2025co} enhances robustness by enforcing consistency not only across multiple rollouts of the same question but also across semantically similar, paraphrased questions.
% RLCCF~\citep{yuan2025wisdom} extends this approach to a multi-model ensemble, where a diverse set of LLMs co-evolve, and rewards are derived from weighted collective voting, thus mitigating the risk of a single model's overconfidence.

% Some works provide further refinements, focusing on more nuanced forms of consistency.
% As an early exploration, EMPO~\citep{zhang2025right} clusters rollouts based on semantic similarity in a latent space, assigning rewards proportionally to the cluster likelihood rather than using binary pseudo-labels, which helps maintain output diversity through a ``soft'' majority vote.
% CoVo~\citep{zhang2025consistent} approaches consistency differently, deriving rewards from the consistency of intermediate reasoning states across different trajectories.
% It also introduces a volatility penalty and curiosity bonus to encourage exploration.

\begin{table}[t]
\centering
\scriptsize
\renewcommand{\arraystretch}{1.2}
\begin{tabular}{@{\hskip 2mm}l@{\hskip 5mm}ll >{\centering\arraybackslash}p{2cm}}
\toprule
\addlinespace[8pt]
\textbf{Method} & \textbf{Estimator} & \textbf{Formula} \\
\addlinespace[4pt]

\midrule

\addlinespace[6pt]

\textbf{\makecell[l]{TTRL, SRT, ETTRL \\ SeRL, SQLM, R-Zero}} & \textbf{Majority Voting} & 
  $ r(x, y) = \mathbb{1}\left[y = \arg\max_{y'} \sum_{i=1}^{N} \mathbb{1}[y_i = y']\right], \quad \{y_i\}_{i=1}^{N} \sim \pi_\theta(\cdot|x) $  \\

% \midrule
\addlinespace[8pt]
\textbf{Co-Reward} & \textbf{\makecell[l]{Majority Voting\\across Rephrased Question}}   &
  $\begin{aligned} 
  r(x, y) &= \mathbb{1}[y = \arg\max_{y^*}\textstyle\sum_{i=1}^{N} \mathbb{1}[y_i = y^*]], \quad \{y_i\}_{i=1}^{N} \sim \pi_\theta(\cdot|x) \\
  &+ \mathbb{1}[y = \arg\max_{y^*} \textstyle\sum_{j=1}^{N} \mathbb{1}[y_j' = y^*]], \quad \{y_j'\}_{j=1}^{N} \sim \pi_\theta(\cdot|\text{rephrase}(x))
  \end{aligned}$ \\

% \midrule
\addlinespace[4pt]
\textbf{RLCCF} & \textbf{\makecell[l]{Self-consistency \\ Weighted Voting}} &
  $ \begin{aligned}
r(x, y) = \mathbb{1}\bigg[y = \arg\max_{a} \textstyle\sum_{n=1}^{N} &\left(\max_{a'}\textstyle\sum_{k=1}^{K} \mathbb{1}[o_{n,k} = a']\right) \cdot \textstyle\sum_{k=1}^{K} \mathbb{1}[a = o_{n,k}]\bigg], \\
&\{o_{n,k}\}_{k=1}^{K} \sim \pi_{\theta_n}(\cdot|x), \quad n = 1, \ldots, N
\end{aligned} $ \\

% \midrule
\addlinespace[8pt]
\textbf{EMPO} & \textbf{Semantic Similarity} &
  $ r(x, y) = \frac{|\mathcal{C}(y)|}{G}, \quad \mathcal{C}(y) \in \texttt{SemanticCluster}(\{o_i\}_{i=1}^{G}), \quad \{o_i\}_{i=1}^{G} \sim \pi_\theta(\cdot|x) $ \\

% \midrule
\addlinespace[8pt]
\textbf{CoVo} & \textbf{\makecell[l]{Trajectory Consistency\\and Volatility}} &
  $ \begin{aligned}
r(x, y) = &\frac{1}{G}\left\|\textstyle\sum_{i=1}^{G} \texttt{Con}(y_i) \cdot [\cos(\texttt{Vol}(y_i)), \sin(\texttt{Vol}(y_i))]\right\| + r_{\text{cur}}, \\
&\{y_i\}_{i=1}^{N} \sim \pi_\theta(\cdot|x), \quad G = |\{i: \text{ans}(y_i) = \text{ans}(y)\}|
\end{aligned} $ \\

\addlinespace[4pt]
\bottomrule
\end{tabular}
\caption{Overview of ensemble-based rewards, estimators and their formulas. All variants operationalize the assumption that consistency across independent samples correlates with correctness.}
\label{tab:ensemble_based_formulas}
\end{table}

Another line of works involves the use of asymmetric proposer-solver architectures~\citep{kirchner2024prover}, where one model generates tasks and another solves them.
These systems establish a self-sustaining ecosystem where models co-evolve, and the solver is trained through intrinsic rewards.
The key innovation in these systems lies in rewarding the proposer.
SeRL~\citep{fang2025serl} does not directly reward the proposer but uses heuristic online filtering for proposed instructions.
R-Zero~\citep{huang2025r} rewards the proposer for generating tasks that push the solver’s uncertainty towards a 50\% confidence level, thus maximizing reward when the solver is unsure. Combined with repetition penalties, this encourages the proposer to create diverse and appropriately challenging problems.
SQLM~\citep{chen2025self} rewards the proposer for generating problems that are neither too easy nor too hard, further enhancing the diversity of tasks, while CPMobius~\citep{li2026cpmobius} rewards the coach based on changes in the player's performance.

% \noindent\textbf{Trade-offs and Limitations.}
% \yuxin{not sure}
% Ensemble-based rewards improve robustness by aggregating multiple responses, but they also come with the trade-off of sharpening the distribution towards the majority-voted answer. This can reduce the model's ability to explore diverse solutions, as the optimization focuses on the most consistent responses. 

% \noindent\textbf{Unified Perspective.}
All ensemble methods assume that consistency across independent samples correlates with correctness. However, relying on consensus introduces computational overhead, especially when generating a large number of rollouts or utilizing multi-model ensembles. Also, ensemble-based rewards are constrained by the availability of suitable extractors and comparison rules. For example, mathematical tasks are well-suited to such methods due to clear answer formats, but many general tasks lack appropriate criteria for voting, which limits their applicability.

\subsection{External Reward Methods}
\label{sec:external}

Another line of works explores alternative URLVR methods that generate verifiable rewards through external mechanisms rather than internal model states. We identify two paradigms that leverage data structure and computational asymmetries as follows.

\textbf{Leveraging Unlabeled Data for Reward Generation.\ \ } Large-scale unlabeled corpora provide natural verification signals by converting language modeling into reward-based tasks.
Different from intrinsic methods which derive reward from the model's own outputs, this paradigm derives rewards in the unlabeled corpus itself. The text provides latent ground truth that the model can be evaluated against, independent of the model's internal state.
For example, RPT~\citep{dong2025reinforcement} exemplifies this by rewarding correct next-token predictions on unlabeled text, transforming trillions of tokens into scalable reward signals.
Extensions include TPT~\citep{wang2025thinking}, which predicts tokens through step-by-step reasoning, and RLPT~\citep{li2025reinforcement}, which operates at the segment level rather than token level.
Similarly, RLP~\citep{hatamizadeh2025rlp} trains models to generate an internal reasoning chain before predicting the next token on unlabeled text, and rewards the model based on the information gain that its chain-of-thought provides for the prediction.

A natural extension of this paradigm is to exploit the structure within unlabeled text beyond next-token prediction.
DuPO~\citep{she2025dupo} pairs a primary task with a dual reconstruction objective. The model must recover the original unlabeled input from its own output, and the quality of this reconstruction serves as a self-supervised reward to optimize the primal task. Meta-learning approaches like SEAL~\citep{zweiger2025self} take this further, where models generate their own QA pairs from unlabeled contexts and receive rewards based on downstream self-supervised performance, creating autonomous improvement loops. Beyond raw text corpora, web-scale question-answering data with programmatically verifiable structure provides another form of unsupervised reward. Nemotron-CrossThink~\citep{akter2025nemotron} curates multi-domain QA from CommonCrawl and open web sources spanning law, physics and social science, and converts them into multiple-choice format so that answer correctness can be checked without human annotation. This converts raw internet text, a practically infinite resource, into a source of verifiable rewards across diverse reasoning domains, extending the unlabeled-data paradigm beyond the language-modeling objective.
These methods scale naturally with data availability, unbounded by human annotation capacity.

\textbf{Exploiting Generation-Verification Asymmetries.\ \ }
A fundamental but underexploited property of many reasoning tasks is that generation and verification are not equally hard~\citep{burns2023weak,song2024mind}.
Producing a correct solution may require deep search over an exponentially large space, yet checking whether a candidate solution is correct can often be done in constant time with a deterministic procedure, like executing a program, querying a proof assistant, evaluating a puzzle rule or running a numerical simulator. This asymmetry is precisely what URLVR can exploit. Rather than relying on human labels or on the model's own internal confidence, the verification procedure itself acts as an external, objective and infinitely scalable reward. The model is free to generate arbitrarily many candidate solutions, and each one receives a reliable reward at an affordable cost.

This principle manifests across a wide spectrum of domains with different verification mechanisms.
In mathematical computation, LADDER~\citep{simonds2025ladder} demonstrates the idea for indefinite integration. While finding an antiderivative in closed form is genuinely difficult, verifying a proposed answer requires only numerical evaluation at sampled points, which is automatic and cheap.
Similarly, RLSR~\citep{simonds2025rlsrreinforcementlearningself} applies the same logic to Countdown arithmetic puzzles, where checking whether a sequence of operations reaches a target number is trivial even though constructing such a sequence is not.

In software engineering, the gap is even sharper. Writing a program that passes all test cases demands sophisticated multi-step reasoning, yet executing the program against those tests is deterministic and instantaneous.
Absolute Zero~\citep{zhao2025absolute} exploits this for code generation, automatically creating correct reference solutions through execution and using test-based verification as the sole reward signal, requiring neither human-labeled problems nor human-written solutions.
Besides, in formal theorem proving, discovering a proof of a non-trivial theorem may take a human mathematician weeks or years, but verifying a proposed proof in Lean takes seconds and ensures no ambiguity.
DeepSeekMath-V2~\citep{shao2025deepseekmath} leverages self-verification outcomes as RL rewards, while
AlphaProof~\citep{hubert2025olympiad} pushes this to its limit by training on millions of auto-formalized mathematical problems.

\textbf{External Rewards Enable Scalable URLVR.\ \ }
The distinction between intrinsic and external reward methods is not merely taxonomic but reflects a fundamental difference in scalability.
Intrinsic reward methods derive their signal entirely from the model's own probability distributions, and are therefore limited by what the model already knows. 

External reward methods escape this ceiling through two complementary mechanisms.
Unlabeled-data methods derive rewards that grow with corpus size rather than with model capability.
Because the quantity and diversity of available text is effectively unbounded, the reward landscape expands with data scale, providing fresh learning signal even after the model has exhausted the knowledge in its initialization.
Generation-verification asymmetry methods ground rewards in external computation from compilers, proof assistants, game engines or scientific simulators, which are entirely independent of the model's internal state.
A compiler does not become a weaker verifier as the model improves and a Lean proof checker does not hallucinate.
This independence means reward quality does not degrade with scale. As the model generates harder and more sophisticated outputs, the external verifier checks them with equal reliability.

We therefore position external reward methods as the more promising direction for long-run URLVR scaling, and highlight the development of diverse verifiable environments and the extension of verification asymmetries to new scientific domains as the key open challenges ahead.

\section{The Sharpening Mechanism of Intrinsic Rewards}
\label{sec:theory}

We start from intrinsic reward methods in URLVR, investigating their potential in scaling LLM training. Existing studies \citep{zhang2025right,zuo2025ttrl,agarwal2025unreasonable,shafayat2025can} have empirically assessed the strengths and weaknesses of these intrinsic methods, but the underlying mechanisms of how and why they lead to the observed effects, have been underexplored.

In this section, we theoretically investigate the underlying mechanism of intrinsic URLVR, revealing that despite diverse design choices in \Cref{tab:certainty_based_formulas,tab:ensemble_based_formulas}, the model optimized through intrinsic rewards converges towards sharpening its initial distribution. Interestingly, this sharpening mechanism yields both notable strengths and limitations, depending on the model's initial knowledge of the relationship between confidence and correctness. When initial confidence aligns with correctness, the sharpening acts as a beneficial amplifier. When misaligned, the same mechanism systematically reinforces errors.

% Despite diverse design choices in \Cref{tab:certainty_based_formulas,tab:ensemble_based_formulas}, they systematically encourage the model to produce more decisive outputs.

% In this section, we take TTRL as a representative intrinsic method, analyzing the optimal policy induced by its majority voting reward. Our investigation reveals that it converges toward sharpening the model’s initial distribution, yielding both notable strengths and limitations. This trade-off depends critically on the model's initial knowledge of the relationship between confidence and correctness. When initial confidence aligns with correctness, the sharpening acts as a beneficial amplifier. When misaligned, the same mechanism systematically reinforces errors.

% Intrinsic rewards exhibit diverse design choices and mathematical formulations. However, a deeper analysis reveals that these methods share a fundamental commonality: they systematically encourage the model to produce sharper, more decisive probability distributions. This convergence toward higher confidence outputs represents a unified underlying mechanism with important implications for both strengths and limitations.

\subsection{Dynamics of One-Step Update}

Training with intrinsic rewards involves multiple gradient steps toward convergence. In this section, we take TTRL~\citep{zuo2025ttrl} and its majority voting reward as a representative intrinsic method to demonstrate the dynamics of a one-step update. 

Consider the standard KL-regularized RL objective:

\begin{equation}
\max_{\pi_\theta} \mathbb{E}_{y\sim\pi_\theta(\cdot|x)}\left[r(x,y)\right] - \beta D_{\text{KL}}\left[\pi_\theta(\cdot|x) \| \pi_{\text{ref}}(\cdot|x)\right],
\label{eq:rl_obj}
\end{equation}

where $\pi_{\text{ref}}$ is the reference policy and $\beta$ controls the strength of regularization. Refer to~\citep{rafailov2023direct}, the optimal policy for this objective has the well-known closed form:

\begin{equation}
\pi_\theta^*(y|x) = \frac{1}{Z(x)}\pi_{\text{ref}}(y|x)\exp\left(\frac{1}{\beta}r(x,y)\right),
\label{eq:optimal_policy_general}
\end{equation}

where $Z(x) = \sum_y \pi_{\text{ref}}(y|x)\exp\left(\frac{1}{\beta}r(x,y)\right)$ is the partition function. The majority voting reward used in TTRL at iteration $k$ is defined as

\begin{equation}
r_k(x,y) = \mathbf{1}[\text{ans}(y) = \text{maj}_k(Y_k)],
\end{equation}

where $Y_k = \{y^{(1)}, \ldots, y^{(N)}\}$ denotes $N$ rollouts sampled from $\pi_\theta^{(k)}$, and $\text{maj}_k(Y_k) = \arg\max_{a} |\{i \in [N] : \text{ans}(y^{(i)}) = a\}|$ is the most frequent answer among the rollouts. Then applying \Cref{eq:optimal_policy_general} with the majority voting reward $r_k$, if we held $r_k$ constant and performed infinite updates starting from reference policy $\pi_\theta^{(k)}$, we would converge to the optimal policy:

\begin{equation}
\pi_\theta^{*,(k+1)}(y|x) = \frac{\pi_\theta^{(k)}(y|x) \cdot \exp\left(\frac{r_k(x,y)}{\beta}\right)}{Z_k(x)}.
\end{equation}

Since $r_k$ is binary, the exponential term takes only two values: $e^{1/\beta}$ for the majority-voted $\text{maj}_k(Y_k)$ and $e^0 = 1$ for all others. This yields the explicit form:

\begin{equation}
\pi_\theta^{*,(k+1)}(y|x) = \begin{cases}
\frac{\pi_\theta^{(k)}(y|x) \cdot e^{1/\beta}}{Z_k(x)}, & \text{if } \text{ans}(y) = \text{maj}_k(Y_k), \\[3mm]
\frac{\pi_\theta^{(k)}(y|x)}{Z_k(x)}, & \text{otherwise},
\end{cases}
\label{eq:MV_optimal_detailed}
\end{equation}

where the partition function ensures proper normalization:
\begin{equation}
Z_k(x) = \sum_{y: \text{ans}(y) = \text{maj}_k(Y_k)} \pi_\theta^{(k)}(y|x) \cdot e^{1/\beta} + \sum_{y: \text{ans}(y) \neq \text{maj}_k(Y_k)} \pi_\theta^{(k)}(y|x).
\end{equation}

For cleaner notation, let $p_{\text{maj}}^{(k)} = \sum_{y: \text{ans}(y) = \text{maj}_k(Y_k)} \pi_\theta^{(k)}(y|x)$ denote the current policy's total probability mass on trajectories leading to the majority answer. Then the partition function simplifies to:
\begin{equation}
Z_k(x) = p_{\text{maj}}^{(k)} \cdot e^{1/\beta} + (1 - p_{\text{maj}}^{(k)}).
\end{equation}

% Having established the unified framework, we now analyze what optimal policies these rewards induce. This analysis reveals both the mechanisms behind their effectiveness and the boundaries of their applicability.

% To illustrate these theoretical insights concretely, we analyze the Majority Voting estimator widely used across intrinsic rewards, as shown in \Cref{tab:unified_rewards}, then generalize to other methods using the unified framework (Appendix~\ref{app:unified_convergence}). It instantiates our unified framework with answer-level granularity $\mathcal{I} = \{\mathcal{A}\}$, anchor distribution $q = \delta_{\text{maj}(Y)}$ (one-hot at the majority answer), sign factor $\sigma = -1$, and transformation $\psi(z) = \exp(z)$.

The structure of \Cref{eq:MV_optimal_detailed} reveals the method's update mechanism. At the optimal policy $\pi_\theta^{*,(k+1)}$, each trajectory $y$ where $\text{ans}(y) = \text{maj}_k(Y_k)$ has its probability amplified by factor $e^{1/\beta}$. The probability mass on the majority trajectories would be:

\begin{equation}
p_{\text{maj}}^{*,(k+1)} = \frac{p_{\text{maj}}^{(k)} \cdot e^{1/\beta}}{p_{\text{maj}}^{(k)} \cdot e^{1/\beta} + (1 - p_{\text{maj}}^{(k)})}.
\label{eq:update_on_p}
\end{equation}

% \begin{itemize}[topsep=0pt, itemsep=2pt, leftmargin=18pt]
%     \item \textbf{Majority trajectories}: Each trajectory $y$ where $\text{ans}(y) = \text{maj}_k(Y_k)$ has its probability amplified by factor $e^{1/\beta}$. The collective probability mass for all trajectories leading to the majority answer becomes:
%     \begin{equation}
%     p_{\text{maj}}^{*,(k+1)} = \frac{p_{\text{maj}}^{(k)} \cdot e^{1/\beta}}{p_{\text{maj}}^{(k)} \cdot e^{1/\beta} + (1 - p_{\text{maj}}^{(k)})}.
%     \label{eq:update_on_p}
%     \end{equation}
    
%     \item \textbf{Non-majority trajectories}: Each trajectory leading to other answers maintains its relative proportion within the non-majority group but experiences reduced absolute probability mass. The total probability for all non-majority trajectories becomes:
%     \begin{equation}
%     p_{\text{non-maj}}^{*,(k+1)} = 1 - p_{\text{maj}}^{*,(k+1)} = \frac{1 - p_{\text{maj}}^{(k)}}{p_{\text{maj}}^{(k)} \cdot e^{1/\beta} + (1 - p_{\text{maj}}^{(k)})}.
%     \end{equation}
% \end{itemize}

\noindent\textbf{Actual Training Dynamics.} In practice, our training performs only one gradient update per iteration, not reaching the optimum $\pi_\theta^{*,(k)}$ but moving partway toward it. The actual probability mass after one update $p_{\text{maj}}^{(k+1)}$ satisfies: 
\begin{equation}
p_{\text{maj}}^{*,(k+1)} \geq p_{\text{maj}}^{(k+1)} \geq p_{\text{maj}}^{(k)}.
\end{equation}
This ordering holds because: (1) policy gradient methods with positive rewards on majority trajectories tend to increase their probability mass (lower bound), and (2) the optimal policy achieves maximum expected reward, so one-step updates cannot exceed it (upper bound). See Appendix~\ref{app:order} for detailed theoretical justification and empirical validation.

\subsection{Convergence Towards Sharpening Initial Distribution}

The one-step update creates a ``rich-get-richer'' dynamic that trajectories leading to the majority have their probabilities consistently increased, while others are proportionally diminished. Iterating this process, the policy converges geometrically toward a deterministic policy concentrated on the initial majority answer:

\begin{theorem}
\label{theorem:mv_convergence}
\textbf{Geometric Convergence to Deterministic Policy.} Consider the training process where at each iteration $k$, we: (1) sample $N$ rollouts $Y_k$ from $\pi_\theta^{(k)}$, (2) compute majority $\text{maj}_k(Y_k)$, (3) perform one-step update with reward $r_k(x,y) = \mathbf{1}[\text{ans}(y) = \text{maj}_k(Y_k)]$ to obtain $\pi_\theta^{(k+1)}$. Let $p_{\text{maj}}^{(k)} = \sum_{y: \text{ans}(y) = \text{maj}_k(Y_k)} \pi_\theta^{(k)}(y|x)$ denote the probability mass on majority trajectories at iteration $k$.

Suppose the following assumptions hold, validated empirically in Appendix~\ref{app:order}:
\begin{enumerate}[label=\textup{(A\arabic*)}]
    \item \textit{Majority stability:} $\mathrm{maj}_k(Y_k) = \mathrm{maj}_0(Y_0)$ for all $k$ (holds for sufficiently large $N$);
    \item \textit{Effective learning:} $p_{\mathrm{maj}}^{(k+1)} > p_{\mathrm{maj}}^{(k)}$ for all $k$ (a standard assumption in policy gradient methods).
\end{enumerate}

Then $p_{\mathrm{maj}}^{(k)}$ converges geometrically to $1$ with rate $\rho = e^{-1/\beta}$, and the policy converges to
\begin{equation}
    \lim_{k \to \infty} \pi_\theta^{(k)}(y \mid x) = 
    \begin{cases}
        \dfrac{\pi_{\mathrm{ref}}(y \mid x)}{\sum_{y':\, \mathrm{ans}(y') = \mathrm{maj}_0(Y_0)} \pi_{\mathrm{ref}}(y' \mid x)}, & \text{if } \mathrm{ans}(y) = \mathrm{maj}_0(Y_0), \\[6pt]
        0, & \text{otherwise}.
    \end{cases}
\end{equation}
\end{theorem}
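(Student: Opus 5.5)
The plan is to collapse the dynamics to the scalar sequence $p_{\mathrm{maj}}^{(k)}$ and, separately, to track the \emph{shape} of the policy inside and outside the majority set. Let $a^\star = \mathrm{maj}_0(Y_0)$. By (A1), $\mathrm{maj}_k(Y_k)=a^\star$ for every $k$, so the reward $r_k$ is the same fixed indicator $\mathbf{1}[\mathrm{ans}(y)=a^\star]$ at every iteration. Each step is then governed by the closed form \Cref{eq:MV_optimal_detailed} with the same majority set $\mathcal{Y}^\star=\{y:\mathrm{ans}(y)=a^\star\}$. I take $\pi_\theta^{(0)}=\pi_{\mathrm{ref}}$.

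\textbf{Step 1: the idealized map on odds.} Define $f(p)= p e^{1/\beta}/(p e^{1/\beta}+1-p)$, the map in \Cref{eq:update_on_p}. Passing to odds $o=p/(1-p)$ linearizes it: $f$ sends $o\mapsto e^{1/\beta}o$. Hence under the idealized iteration
\[
1-p^{(k)} = \frac{1}{1+e^{k/\beta}o_0}\le \frac{e^{-k/\beta}}{o_0},
\]
which is geometric convergence to $1$ with rate $\rho=e^{-1/\beta}$.

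\textbf{Step 2: transfer to the actual iterates.} I would use the sandwich $p^{(k)}\le p^{(k+1)}\le f(p^{(k)})$ from the previous subsection together with (A2).
\begin{itemize}
\item The sequence is strictly increasing and bounded by $1$, so it has a limit $L\in(0,1]$.
\item The map $f$ is continuous and increasing, with fixed points only at $0$ and $1$.
\end{itemize}
Monotonicity alone does not force $L=1$, since an increasing sequence could stall below $1$. So I would read (A2) in its quantitative form, as the paper's ``moving partway toward the optimum'' suggests. Concretely, assume the odds gain a fixed fraction $\eta\in(0,1]$ of the ideal log-odds increment, $\log o_{k+1}\ge \log o_k+\eta/\beta$. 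Then $1-p^{(k)}\le e^{-\eta k/\beta}/o_0$. When the step reaches the optimum ($\eta=1$) this recovers exactly $\rho=e^{-1/\beta}$; in general the rate is $e^{-\eta/\beta}$, so the stated rate is attained only in the fully optimal-step limit. Making this quantitative form of (A2) explicit is the step I expect to be the main obstacle, because the bare statement $p^{(k+1)}>p^{(k)}$ is too weak for a rate. If a weaker hypothesis is preferred, I would fall back on a uniform lower bound on the fraction of the gap $f(p^{(k)})-p^{(k)}$ that each step closes.

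\textbf{Step 3: shape of the limiting policy.} In \Cref{eq:MV_optimal_detailed}, every $y\in\mathcal{Y}^\star$ is multiplied by the same factor $e^{1/\beta}/Z_k$, and every $y\notin\mathcal{Y}^\star$ by the same factor $1/Z_k$. So the conditional distributions $\pi^{(k)}(\cdot\mid x,\mathcal{Y}^\star)$ and $\pi^{(k)}(\cdot\mid x,\mathcal{Y}\setminus\mathcal{Y}^\star)$ are invariant under the update. I would assume the one-step update inherits this ratio-preserving structure, i.e.\ it moves along the exponential-tilting family between $\pi^{(k)}$ and $\pi^{*,(k+1)}$, rather than the raw gradient step. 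By induction, for $y\in\mathcal{Y}^\star$,
\[
\pi^{(k)}(y\mid x)=p^{(k)}\,\frac{\pi_{\mathrm{ref}}(y\mid x)}{\sum_{y'\in\mathcal{Y}^\star}\pi_{\mathrm{ref}}(y'\mid x)},
\]
while for $y\notin\mathcal{Y}^\star$, $\pi^{(k)}(y\mid x)\le 1-p^{(k)}$. Letting $k\to\infty$ with $p^{(k)}\to1$ from Step 2 yields the stated limit: the renormalized $\pi_{\mathrm{ref}}$ on $\mathcal{Y}^\star$, and $0$ elsewhere.
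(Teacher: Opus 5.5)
Your proposal is correct and has the same skeleton as the paper's appendix proof. (A1) freezes the reward, the closed-form tilt reduces everything to a scalar recursion for $p_{\mathrm{maj}}^{(k)}$, a quantitative strengthening of (A2) is needed, and the rate $e^{-1/\beta}$ appears only for full steps. In fact the appendix version of the theorem replaces (A2) by exactly your fallback: $p^{(k+1)}=p^{(k)}+\eta_k\bigl(p^{*,(k+1)}-p^{(k)}\bigr)$ with $\eta_k\ge\eta_{\min}>0$.

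The two routes differ in two places.

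\emph{The rate computation.} The paper tracks $\epsilon^{(k)}=1-p^{(k)}$ through $\epsilon^{(k+1)}=\epsilon^{(k)}\bigl(1-\eta_k g(\epsilon^{(k)})\bigr)$, where $g(\epsilon)=\frac{(\alpha-1)(1-\epsilon)}{\alpha-(\alpha-1)\epsilon}$ and $\alpha=e^{1/\beta}$. It extracts the rate by linearizing at $\epsilon=0$. Since $g(\epsilon)<(\alpha-1)/\alpha$ for $\epsilon>0$, its displayed contraction $\epsilon^{(k+1)}\le\bigl(1-\eta_{\min}\frac{\alpha-1}{\alpha}\bigr)\epsilon^{(k)}$ holds only asymptotically. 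Even with $\eta_k=1$ one has $\epsilon^{(k+1)}=\epsilon^{(k)}/(\alpha-(\alpha-1)\epsilon^{(k)})>\epsilon^{(k)}/\alpha$; convergence itself is rescued by $g(\epsilon^{(k)})\ge g(\epsilon^{(0)})>0$. Your odds conjugation makes the ideal map exactly multiplication by $e^{1/\beta}$. It therefore gives the non-asymptotic bound $1-p^{(k)}\le e^{-k/\beta}/o_0$, which is precisely geometric convergence with rate $\rho$. The price is that your partial-step hypothesis lives in log-odds, so for $\eta<1$ you obtain $e^{-\eta/\beta}$ rather than the paper's asymptotic $1-\eta(1-e^{-1/\beta})$. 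The two coincide at $\eta=1$.

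\emph{The limiting policy.} The paper's final step simply asserts that the limit is $\pi_{\mathrm{ref}}$ renormalized on the majority class. Your invariance argument is what actually justifies it: uniform rescaling within $\mathcal{Y}^\star$ and within its complement, together with $\pi_\theta^{(0)}=\pi_{\mathrm{ref}}$. That invariance is all your Step 3 needs, so you need not commit to the exponential-tilting path. The paper's interpolation, lifted to policies as the mixture $(1-\eta_k)\pi_\theta^{(k)}+\eta_k\pi_\theta^{*,(k+1)}$, preserves both conditional distributions as well.
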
 

Complete proof is provided in Appendix~\ref{app:convergence_analysis}. For other URLVR intrinsic rewards, we leverage the sharpening intuition and propose a unified reward framework in \Cref{app:unified_framework}, showing that all intrinsic rewards can be understood through a single lens: manipulating cross-entropy between carefully chosen distributions. Built upon this unified perspective, we generalize this convergence analysis in Appendix~\ref{app:unified_convergence}, and provide optimal policies induced respectively in \Cref{app:optimal_policy_other_methods}.

This convergence behavior has profound implications depending on the model prior. When the model's confidence (reflected in $\text{maj}_0$) aligns with correctness, convergence reinforces good solutions. Conversely, if confidence is poorly aligned, the same mechanism amplifies errors, leading to model collapse. Next, we will evidence this from an empirical perspective.

\section{When Does Intrinsic URLVR Work?}
\label{sec:rise}

\begin{tcolorbox}[takeawaysbox]
Intrinsic URLVR universally follows a rise-then-fall pattern across all methods. Early gains reflect confidence-correctness alignment in the model's prior, while eventual collapse is inevitable when this alignment breaks down.
\end{tcolorbox}

As we derived from \Cref{sec:theory}, intrinsic rewards sharpen distributions by amplifying model's initial preferences, but a natural question is when do intrinsic rewards work? First, we trace the lifecycle of intrinsic URLVR methods (\Cref{sec:lifecycle}), finding early gains followed by collapse across all intrinsic methods.
Then, we conduct fine-grained per-problem analysis (\Cref{sec:fine_grained}), revealing that training amplifies existing preferences rather than correcting them within the same problem. Surprisingly, even when training amplifies errors on problem A, it can still correct out-of-distribution problem B, showing that confidence-correctness alignment varies across problems. 
% Third, we interpret these patterns (\Cref{sec:trade}) that intrinsic rewards trade uncertainty for performance by exploiting prior knowledge.
These empirical results further strengthen that success depends on whether the initial confidence aligns with correctness for each specific problem.

% The prospect of LLM scaling through Unsupervised RLVR hinges on whether models can reliably improve themselves without ground truth labels.
% Several works~\citep{agarwal2025unreasonable,zhang2025no} have mentioned the promising results of intrinsic rewards while suggesting their limitations.
% Our theoretical analysis in \Cref{sec:theory} suggests that intrinsic rewards enable such self-improvement by exploiting existing knowledge, yet their convergence to deterministic policies raises concerns about when this process succeeds versus fails. To clarify these boundaries, we empirically examine the practical limits and opportunities of intrinsic reward-driven scaling.

% In this section, we investigate three questions through progressive analysis. First (\Cref{sec:lifecycle}), we ask whether intrinsic rewards reliably improve performance, revealing an early-success-then-collapse pattern that persists across different hyperparameters. Second (\Cref{sec:fine_grained}), we examine the sharpening mechanism at the problem level, showing that training amplifies initial model preferences, whether correct or incorrect. Third (\Cref{sec:trade}), we connect these findings to broader debates about what RL learns, proposing that intrinsic rewards trade uncertainty for performance by exploiting existing knowledge. Together, these results establish that intrinsic rewards operate within well-defined boundaries determined by the model's initial confidence-correctness alignment.

\subsection{The Rise and Fall of Intrinsic URLVR}
\label{sec:lifecycle}

In this section, we conduct experiments across different hyperparameters and methods for intrinsic rewards, revealing that intrinsic URLVR follows a consistent rise-then-fall pattern.

\subsubsection{Early Success, Later Collapse}

\noindent\textbf{Setup.} We use majority voting reward from TTRL~\citep{zuo2025ttrl} to compare intrinsic reward training against standard RLVR using ground truth labels. We train Qwen3-1.7B-Base on DAPO-17k~\citep{yu2025dapo} using default hyperparameters from \Cref{tab:default_hyper}, and evaluate on AIME 2024~\citep{li2024numinamath}, AIME 2025~\citep{balunovic2025matharena} and AMC 2023~\citep{li2024numinamath}. Following standard practice, we generate 32 solutions per problem at temperature 0.6 with top-p 0.95, reporting average accuracy (avg@32).
Beyond validation performance, we track three training dynamics: \textit{Majority Voting Reward} (the intrinsic reward signal), \textit{Reward Accuracy} (whether pseudo-reward matches ground truth reward), and \textit{Actor Entropy}.
See \Cref{app:experimental_setup} for complete details.
Unless stated otherwise, we utilize the default setup above for later experiments.

\begin{figure*}[!t]
    \centering
    \includegraphics[width=1\linewidth]{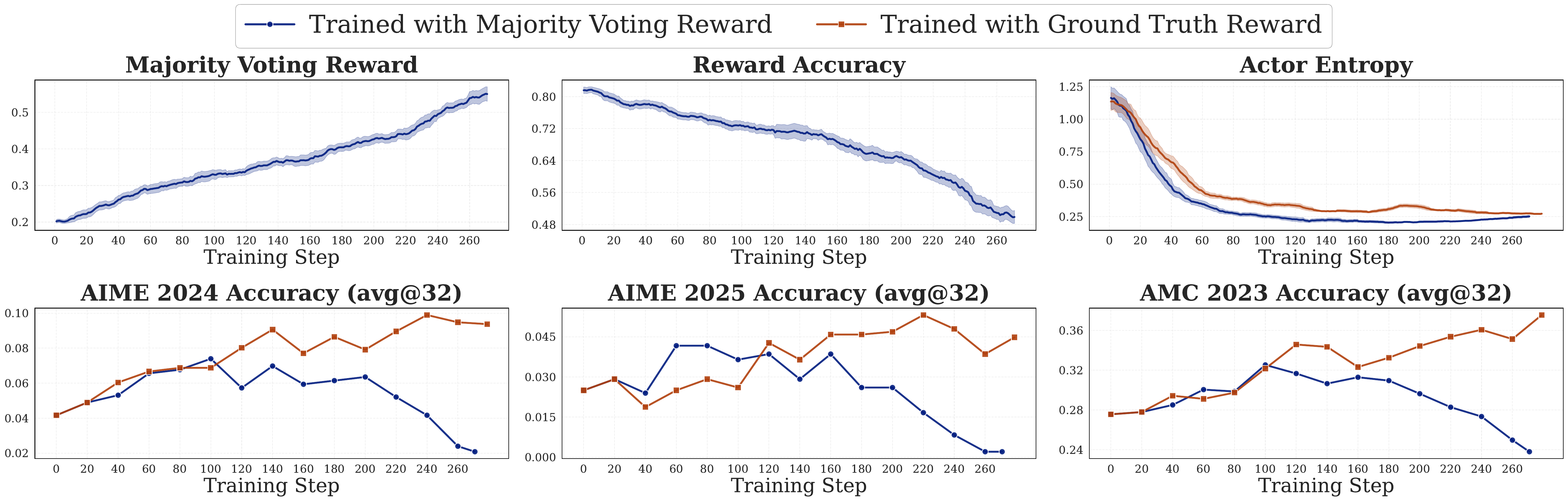}
    \caption{Training dynamics comparing majority-voting training and ground-truth training. Intrinsic rewards initially match supervised performance but eventually collapse: the proxy reward rises while validation accuracy falls, revealing divergence between optimizing confidence and optimizing correctness.}
    \label{fig:4p1p1}
\end{figure*}

\noindent\textbf{Results.} Intrinsic rewards initially match ground-truth training but eventually collapse, and this pattern exists across all hyperparameter settings we tested. \Cref{fig:4p1p1} shows that majority-voting training matches or even exceeds ground-truth training on three benchmarks during the early phase.
But continued training reveals that while the proxy \textit{Majority Voting Reward} keeps rising, both \textit{Reward Accuracy} and validation performance decline, exhibiting obvious reward hacking.
This divergence between proxy reward and actual correctness aligns with recent findings on intrinsic reward instability~\citep{zhang2025no}.
We also observe that training with majority voting reward drives down \textit{Actor Entropy} faster than ground-truth training, establishing a connection between reduced uncertainty and improved performance.

To prevent the issue from being caused by hyperparameters, we thoroughly tune four key hyperparameters across five intrinsic reward methods, including training temperature, mini-batch size, KL regularization and rollout number (\Cref{app:hyperparameter}).
We find that some choices matter significantly (e.g. mini-batch size and rollout number), but nearly all settings eventually degrade, differing only in when rather than whether collapse occurs.

To push this further, we combine all insights that stabilize training from our tuning and run extended training. Even with the most stable settings, collapse still occurs around 1,000 steps (roughly 4 epochs). This suggests the rise-then-fall pattern may reflect a fundamental limitation rather than an engineering problem.

\subsubsection{Different Methods, Different Failures}
\label{sec:diff_methods}

We've shown that intrinsic rewards eventually collapse regardless of hyperparameter tuning. But do all methods fail the same way? We find that not all failures are equal. In this section, we reveal three distinct failure patterns, each exposing different weaknesses in how rewards reinforce confidence.
% Understanding these distinct failure patterns helps with method selection.

\noindent\textbf{Setup.} We compare five intrinsic rewards on Qwen3-1.7B-Base trained on DAPO-17k, each with separately tuned hyperparameters (\Cref{app:hyperparameter}). For ensemble-based rewards we use the Majority Voting estimator, and for certainty-based we test Self-Certainty, Trajectory-Level Entropy, Token-Level Entropy, and Probability. We use the formulas in \Cref{tab:certainty_based_formulas,tab:ensemble_based_formulas} and evaluate on three benchmarks, tracking \textit{Label Accuracy} (whether pseudo-label matches ground truth label), \textit{Actor Entropy} and \textit{Mean Response Length}. The calculation is detailed in \Cref{app:experimental_setup}.

\begin{figure*}[!t]
    \centering
    \includegraphics[width=1\linewidth]{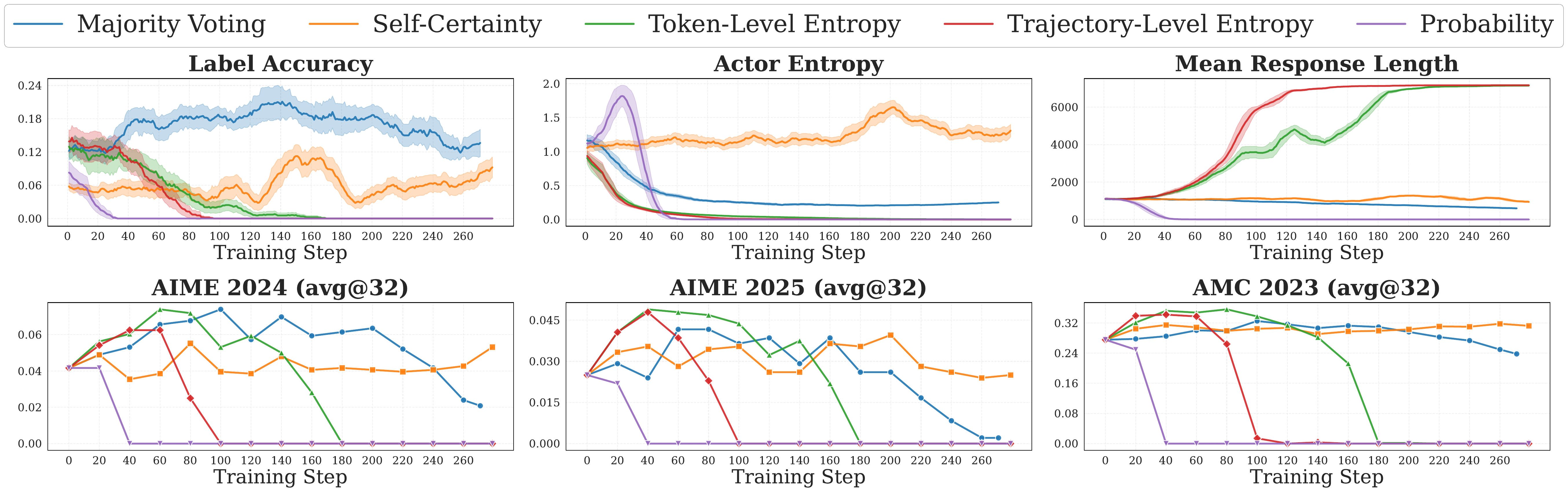}
    \caption{Five intrinsic reward methods exhibit distinct failure patterns. Self-Certainty and Majority Voting degrade gradually while maintaining label accuracy. Probability collapses toward brevity and entropy-based methods drive entropy down through repetition rather than correctness.}
    \label{fig:urlvr_compare}
    % \vspace{-4mm}
\end{figure*}

\noindent\textbf{Results.} \Cref{fig:urlvr_compare} shows that different methods lead to different failure modes:

\begin{itemize}[topsep=0pt, partopsep=0pt, leftmargin=12pt, itemsep=0pt]

\item \textbf{Gradual degradation:} Self-Certainty and Majority Voting degrade most slowly, maintaining higher validation performance and higher \textit{Label Accuracy} without collapsing within one epoch.
Self-Certainty sharpens against a uniform distribution at each position (\Cref{tab:certainty_based_formulas}), making it less aggressive than direct probability maximization. Majority Voting operates at the answer level rather than the token level (\Cref{tab:ensemble_based_formulas}), avoiding token-level artifacts.

\item \textbf{Length collapse:} Probability rewards brevity because it multiplies token probabilities, which naturally favors shorter sequences.
The model becomes confident (lower \textit{Actor Entropy}) but produces overly brief answers (shorter \textit{Mean Response Length}), creating a distinct reward hacking pattern focused on length rather than content quality.
Length normalization like using geometric mean or average log-probability would likely mitigate this bias.

\item \textbf{Repetition collapse:} Both entropy-based methods (Token-Level Entropy, Trajectory-Level Entropy) average entropy across tokens, which is minimized not only by confident predictions but also by repeating high-probability tokens.
Unlike probability multiplication (which rewards brevity), averaging across sequences encourages the model to pad sequences with repetitive text.

\end{itemize}

These patterns reveal that while all methods sharpen distributions in theory, their practical failures diverge substantially.

\subsection{Fine-Grained Per-Problem Analysis}
\label{sec:fine_grained}

Next, we examine individual training samples in detail to reveal the mechanism of URLVR methods.
Our analysis reveals that training amplifies initial preferences rather than correcting errors within the same problem. Surprisingly, this amplification can still generalize to unseen problems and improve performance.

\subsubsection{In-Distribution Per-Problem Sharpening}
\label{sec:id}

Having observed the rise-then-fall pattern at the dataset level, we next investigate whether the early success of intrinsic URLVR stems from genuinely correcting wrong answers or merely from amplifying pre-existing preferences.
To better understand the underlying sharpening mechanism, we conduct experiments on individual problems and track how the model's behavior evolves.

\noindent\textbf{Setup.} We train Qwen3-1.7B-Base on 25 randomly sampled individual problems from MATH500 using REINFORCE with Trajectory-Level Entropy as the intrinsic reward. Each problem trains for 100 epochs with batch size 1 and 8 rollouts for baseline estimation.

\begin{wrapfigure}{r}{0.6\textwidth}
% \vspace{-15pt}
\centering
\includegraphics[width=\linewidth]{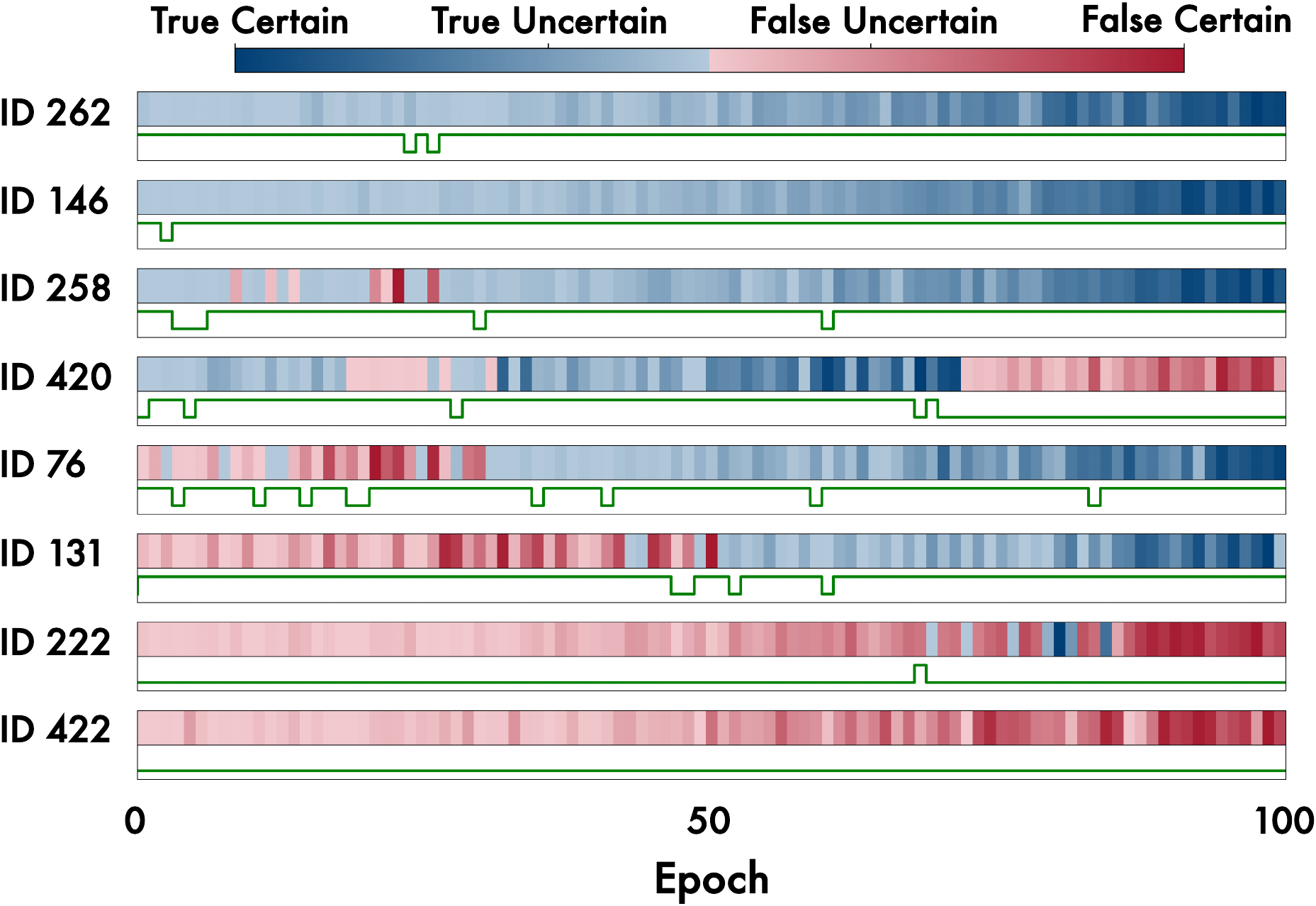}
\caption{Training dynamics on individual representative problems. For each problem, the heatmap shows greedy decoding correctness across epochs (blue = correct, red = wrong; darker = higher confidence), and the green wave indicates whether the highest-reward rollout is correct.}
\vspace{-15pt}
\label{fig:heatmap_step_label}
\end{wrapfigure}

\noindent\textbf{Results.} \Cref{fig:heatmap_step_label} shows 8 representative cases and the full results appear in \Cref{fig:heatmap_step_all_label}.
We track greedy decoding correctness (heatmap) and whether the highest-reward sample is correct (green binary square-wave indicator).
We observe four distinct patterns:

% \begin{itemize}[topsep=0pt, partopsep=0pt, leftmargin=12pt, itemsep=0pt]
\noindent$\bullet$ \textbf{Amplifying success} (ID 262, 146, 258): These problems start with correct greedy decoding (blue at epoch 0), and the highest-reward sample remains correct throughout training (green wave stays at 1). Training simply increases the model's confidence around this already-correct solution, shown by the deepening blue color.
    
\noindent$\bullet$ \textbf{Amplifying failure} (ID 222, 422): The highest-reward sample is almost wrong throughout (green wave mainly stays at 0). Training amplifies the model's confidence in these incorrect answers, shown by the deepening red color.
    
\noindent$\bullet$ \textbf{Wrong} → \textbf{Correct} (ID 76, 131): Greedy decoding initially produces wrong answers (red), but the highest-reward sample is usually correct during training (green wave mostly at 1). This guides the model from wrong to correct, as shown by the transition (red → blue).
    
\noindent$\bullet$ \textbf{Correct} → \textbf{Wrong} (ID 420): The model starts correct (blue), but the highest-reward sample fluctuates between correct and wrong (green wave alternates between 1 and 0). This inconsistency causes the model to gradually degrade from correct to wrong (blue → red).

% \end{itemize}

% \noindent\textbf{Note.}
Among the 25 problems, training flipped greedy correctness in just 3 cases (12\%).
The remaining 22 simply sharpened the model's initial preference regardless of whether correct or wrong.
Even for the 3 that flipped, the color still deepens over training, showing that sharpening happens regardless of whether correctness changes.
This reveals training as amplification rather than correction.

The key factor appears to be whether the highest-reward sample is mostly correct.
% When it is, sharpening helps. When it isn't, sharpening hurts.
But this raises a question: does sharpening only affect the problem being trained, or does it generalize to others?

\subsubsection{Out-Of-Distribution Cross-Problem Generalization}
\label{sec:cross}

\begin{figure*}[!t]
    \centering
    \vspace{-10pt}
    \includegraphics[width=1\linewidth]{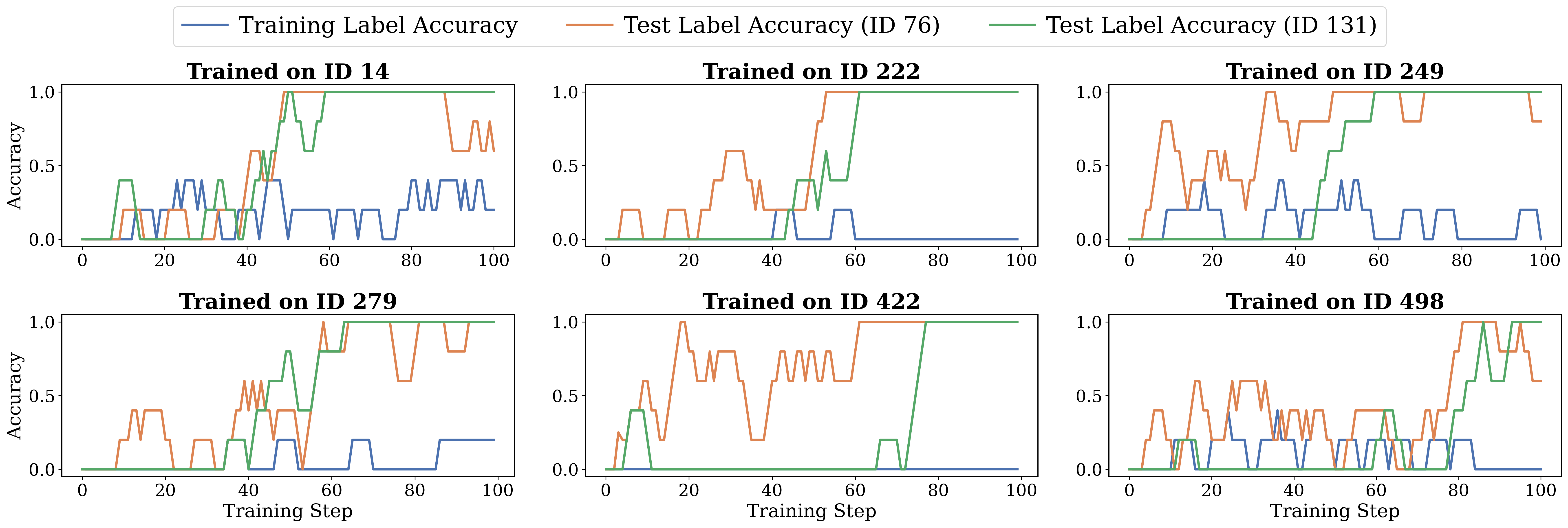}
    \vspace{-20pt}
    \caption{\textit{Training Label Accuracy} (\textcolor[rgb]{0.298,0.447,0.690}{blue}) on six MATH500 problems and \textit{Test Label Accuracy} on two OOD problems: ID 76 (\textcolor[rgb]{0.867,0.518,0.322}{orange}) and ID 131 (\textcolor[rgb]{0.333,0.659,0.408}{green}).}
    \label{fig:ood_generalize}
    \vspace{-10pt}
\end{figure*}

% \begin{figure*}[!t]
%     \centering
%     \includegraphics[width=1\linewidth]{figures/generalize/ood_generalize_result.png}
%     \caption{The training accuracy (\textcolor{blue}{blue}) on six distinct MATH500 problems alongside testing accuracy on two fixed OOD problems: Level 2 ID 76 (\textcolor{orange}{orange}) and Level 4 ID 131 (\textcolor{green}{green}). 
%     Notably, the model achieves significant generalization gains on OOD problems even when it fails to solve the training instances.This demonstrates that URLVR optimizes the underlying reasoning process through intrinsic rewards, enabling cross-problem generalization independent of training sample memorization.}
%     \label{fig:ood_generalize}
%     % \vspace{-4mm}
% \end{figure*}

To investigate the generalization of intrinsic URLVR, we design a targeted experiment focused on out-of-distribution (OOD) cross-problem generalization.

\noindent\textbf{Setup.} We train Qwen3-1.7B-Base on 6 problems from MATH500 using the same setting as in \Cref{sec:id}, representing a highly constrained one-shot RL scenario. For each problem, the highest-reward sample is mostly wrong, and the model is trained on it individually for 100 epochs with batch size 1 and 8 rollouts for baseline estimation. For evaluation, we selected two unseen problems (ID 76 and ID 131), ensuring no overlap with the training set. We track \textit{Training Label Accuracy} (the correctness of the highest-reward sample) and \textit{Test Label Accuracy} (the correctness of greedy decoding test problems).
All results are smoothed using a moving average with a rolling window of 5.

\noindent\textbf{Results.} \Cref{fig:ood_generalize} shows that all training problems have a low \textit{Train Label Accuracy} but for unseen test problems, they successfully turn from wrong to correct, with \textit{Test Label Accuracy} increasing steadily from 0 to 1. This indicates that even though all training problems have wrong initial answers and intrinsic URLVR amplifying their failures, the sharpening can still generalize to OOD problems.
As long as the confidence aligns well with correctness on these unseen problems, sharpening still works.

\vspace{-5pt}
\section{How Can Sharpening from Intrinsic URLVR Be Applied Safely?}
\label{sec:q3}

\begin{tcolorbox}[takeawaysbox]
Small datasets induce localized rather than systematic policy shift, even training on wrong problems can yield gains, making test-time training a safe and practical application.
\end{tcolorbox}

The previous section establishes that intrinsic URLVR drive convergence to deterministic policies and may result in reward hacking, limiting its application in scaling LLM training.
But do intrinsic URLVR always lead to model collapse, or can they be safely applied under specific conditions?
In this section, we demonstrate that model collapse can be prevented when training data is sufficiently small and domain-specific, making intrinsic URLVR particularly suitable for test-time training scenarios.

\subsection{Small Datasets Prevent Model Collapse}
\label{sec:small}

In \Cref{sec:fine_grained}, we have seen how training on one problem works. Now we want to see whether the training dataset size influences the intrinsic URLVR performance.

\noindent\textbf{Setup.} We train Qwen3-1.7B-Base using majority voting reward on randomly sampled training subsets from DAPO-17k with $\{32, 128, 512, 2048, 8192, 16384\}$ samples. To ensure fair comparison, we fix global batch size to $32$ and adjust epochs so all settings complete exactly $600$ optimization steps. We monitor \textit{Ground Truth Reward}, \textit{Majority Voting Reward}, and \textit{Reward Accuracy} to detect reward hacking. Results are verified across 3 random seeds for subset sizes $\{32, 128, 512\}$.

% \noindent\textbf{Setup.} We trained Qwen3-1.7B-Base on randomly sampled training subsets from DAPO-17k with $\{32, 128, 512, 2048, 8192, 16384\}$ samples. To ensure fair comparison, we fix global batch size to $32$ and adjust training epochs so all settings complete exactly $600$ optimization steps. We mainly monitor \textit{Ground Truth Reward} (oracle reward computed using actual correctness), \textit{Majority Voting Reward} and \textit{Reward Accuracy}, indicating whether reward hacking occurs.

\noindent\textbf{Results.} \Cref{fig:5p4p1} reveals that training with $\leq128$ samples maintains stable performance without collapse, while larger datasets ($\geq 512$) consistently exhibit reward hacking. Notably, DAPO-32 achieves rapid consensus (\textit{Majority Voting Reward} $\rightarrow 1$) while preserving high \textit{Ground Truth Reward}, demonstrating that the model converges on these problems without collapsing. This distinction holds across all three seeds, where DAPO-32 never collapses, while DAPO-512 always does.

% \noindent\textbf{Results.} Training on small datasets consistently prevents collapse. \Cref{fig:5p4p1} reveals a clear threshold that training with $32$ or $128$ samples maintains stable performance without model collapse. Critically, while DAPO-32 achieves rapid consensus (\textit{Majority Voting Reward} $\rightarrow 1$), it preserves high \textit{Ground Truth Reward}, demonstrating that the model converges on these data and doesn't collapse. In contrast, larger datasets ($\geq 512$ samples) consistently exhibit the classic reward hacking. This threshold suggests that intrinsic URLVR becomes unstable when training data provides sufficient statistical power to reinforce systematic biases that could mislead the majority voting mechanism.

% \noindent\textbf{Note.} To verify consistency of this result, we have conducted the same experiments with 3 random seeds for subset sizes $\{32, 128, 512\}$. Despite small quantitative variance, the qualitative distinction between small ($32$) and large ($\geq 512$) subsets is consistent. DAPO-32 does not collapse catastrophically across three seeds, while DAPO-512 always does.

\begin{figure*}[!t]
    \centering
    \includegraphics[width=1\linewidth]{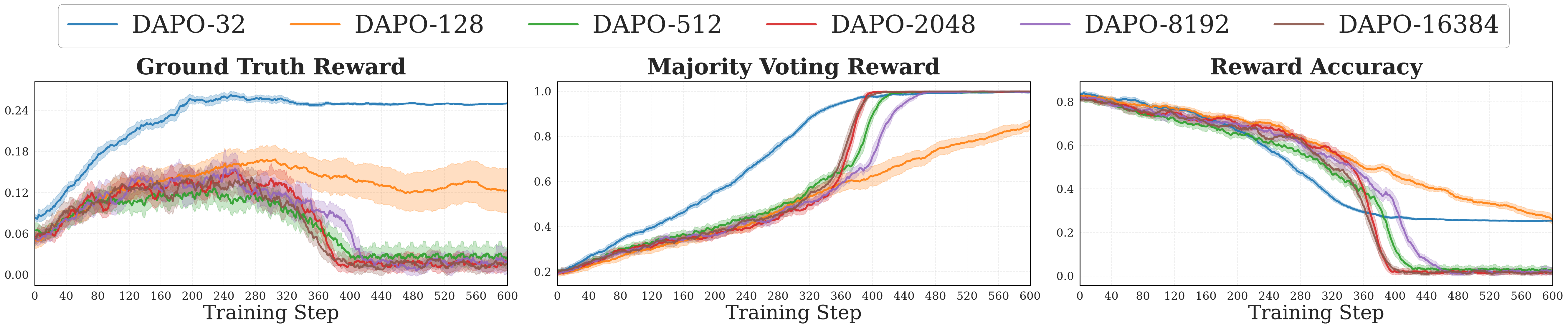}
    \caption{Effect of training dataset size from $\{32, 128, 512, 2048, 8192, 16384\}$. Training with $32$ or $128$ samples maintains stable performance without model collapse.}
    % \vspace{-20pt}
    \label{fig:5p4p1}
\end{figure*}

\noindent\textbf{Analysis.}
We hypothesize that small datasets induce localized overfitting rather than systematic policy shift. With only 32 problems, even perfect overfitting learns isolated facts rather than generalizable patterns. To test this, we measure the KL divergence from the reference model at each training step:

% To further investigate why small datasets prevent model collapse. We conduct a pilot study and offer one possible explanation below. We hypothesize that \textbf{small datasets may induce localized overfitting rather than systematic policy shift}.

% With only 32 problems, even perfect overfitting learns $32$ isolated facts rather than generalizable patterns. In contrast, training on $\geq 512$ diverse problems lets the model extract systematic biases that span across problems. This distinction manifests in how far the policy drifts from its original distribution. We measure this using KL divergence at each training step $t$ from the reference model:

\begin{equation}
D_{\text{KL}}^{(t)}(\pi_{\theta}^{(t)} \| \pi_{\text{ref}}) = \mathbb{E}_{x \sim \mathcal{D}_{\text{train}}} \left[ \mathbb{E}_{y \sim \pi_{\theta}^{(t)}(\cdot|x)} \left[ \log \frac{\pi_{\theta}^{(t)}(y|x)}{\pi_{\text{ref}}(y|x)} \right] \right]
\end{equation}

\begin{wrapfigure}{r}{0.5\textwidth}
% \begin{figure}
\vspace{-17pt}
\centering
\includegraphics[width=\linewidth]{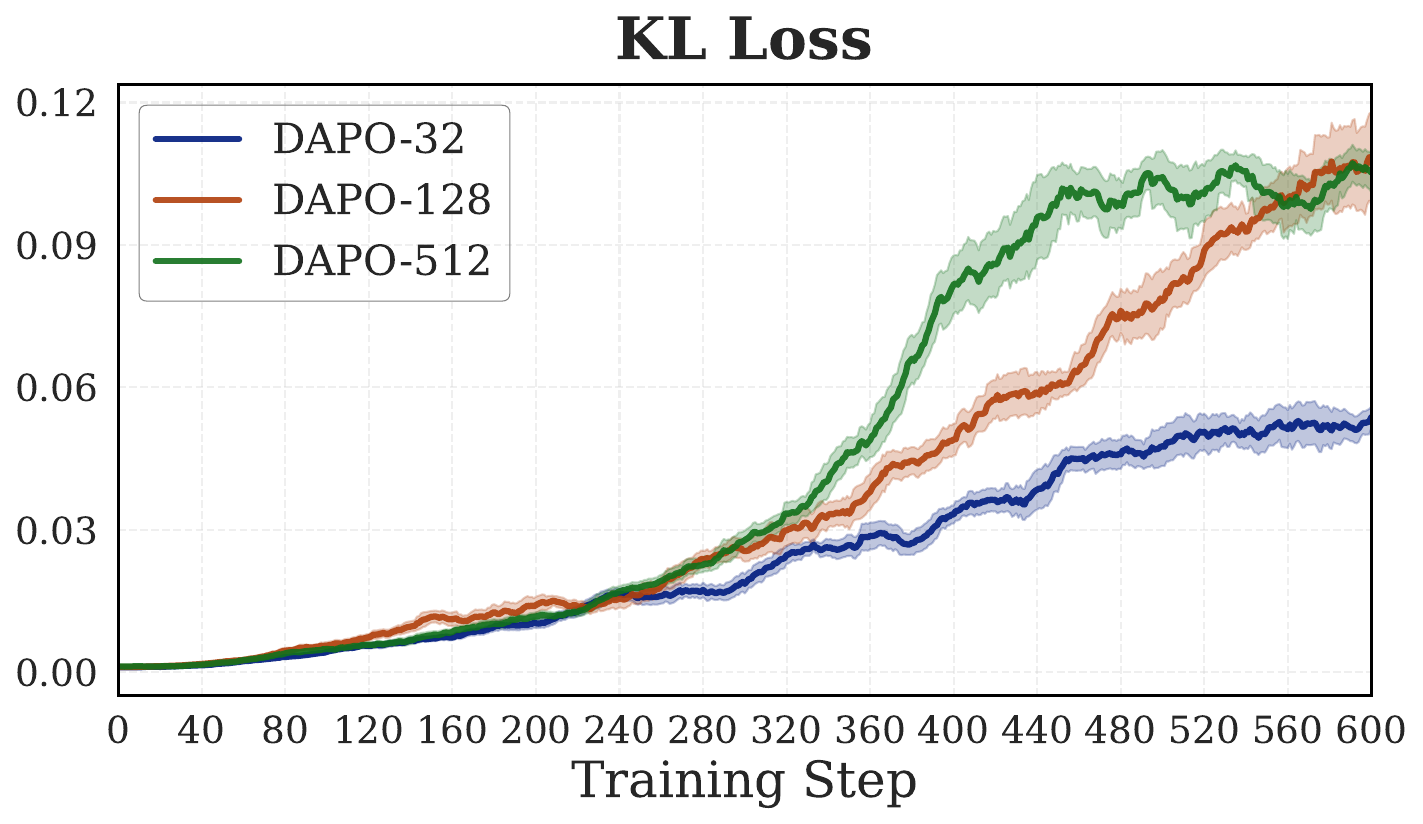}
\vspace{-20pt}
\caption{KL divergence at each training step.}
\vspace{-10pt}
\label{fig:5p2p2}
% \end{figure}
\end{wrapfigure}

As shown in \Cref{fig:5p2p2}, smaller subsets induce far smaller distributional shifts. DAPO-32 reaches only 0.057 KL after 600 steps, while DAPO-512 reaches $2\times$ higher. This aligns with findings that catastrophic forgetting correlates with distribution shift between fine-tuned and base policies~\citep{shenfeld2025rl}. The limited drift suggests the model sharpens confidence on specific samples through localized parameter updates~\citep{carlsson2024hyperfitting} without altering its global policy, thus preserving general reasoning on AIME24/AMC23. In contrast, large datasets require dense parameter updates that cause global policy shift, causing model collapse.

% \noindent \textbf{Results.} As shown in \Cref{fig:5p2p2}, smaller subsets induce smaller distributional shifts despite heavy repetition. DAPO-32 reaches only 0.057 KL loss after 600 steps, while DAPO-512 reaches 2×. This aligns with \citet{shenfeld2025rl} that catastrophic forgetting correlates with distribution shift between fine-tuned and base policies.

% The limited shift suggests memorization rather than systematic learning. Small datasets trigger what \citet{carlsson2024hyperfitting} call ``hyperfitting'', where achieving low loss through memorization rather than generalizable patterns. The model sharpens confidence on specific samples through localized parameter updates without changing its global policy. When tested on AIME24/AMC23, it doesn't encounter the memorized problems and relies on preserved general reasoning.

% In contrast, large datasets require dense parameter updates to fit the data, causing global policy shift that manifests as collapse on out-of-distribution problems. This distinction between localized memorization and systematic policy learning may explain why small-scale training remains stable.

\subsection{Test-Time Training as a Safe Application}

Building on the small dataset insights, we examine test-time training where models are adapted directly on the target evaluation domains without ground truth labels, which may fit the small dataset size constraint. We investigate whether it can bring improvement while not leading to collapse.

\noindent\textbf{Setup.} We train Qwen3-1.7B-Base using majority voting reward on two settings: AMC23 (40 problems, test-time) and DAPO-17k ($\sim$17,000 problems, train-time). Both use batch size 40. We track \textit{Ground Truth Reward}, \textit{Majority Voting Reward}, and performance on both AMC23 and AIME24.

\begin{wrapfigure}{r}{0.55\textwidth}
% \begin{figure}
\vspace{-15pt}
\centering
\includegraphics[width=\linewidth]{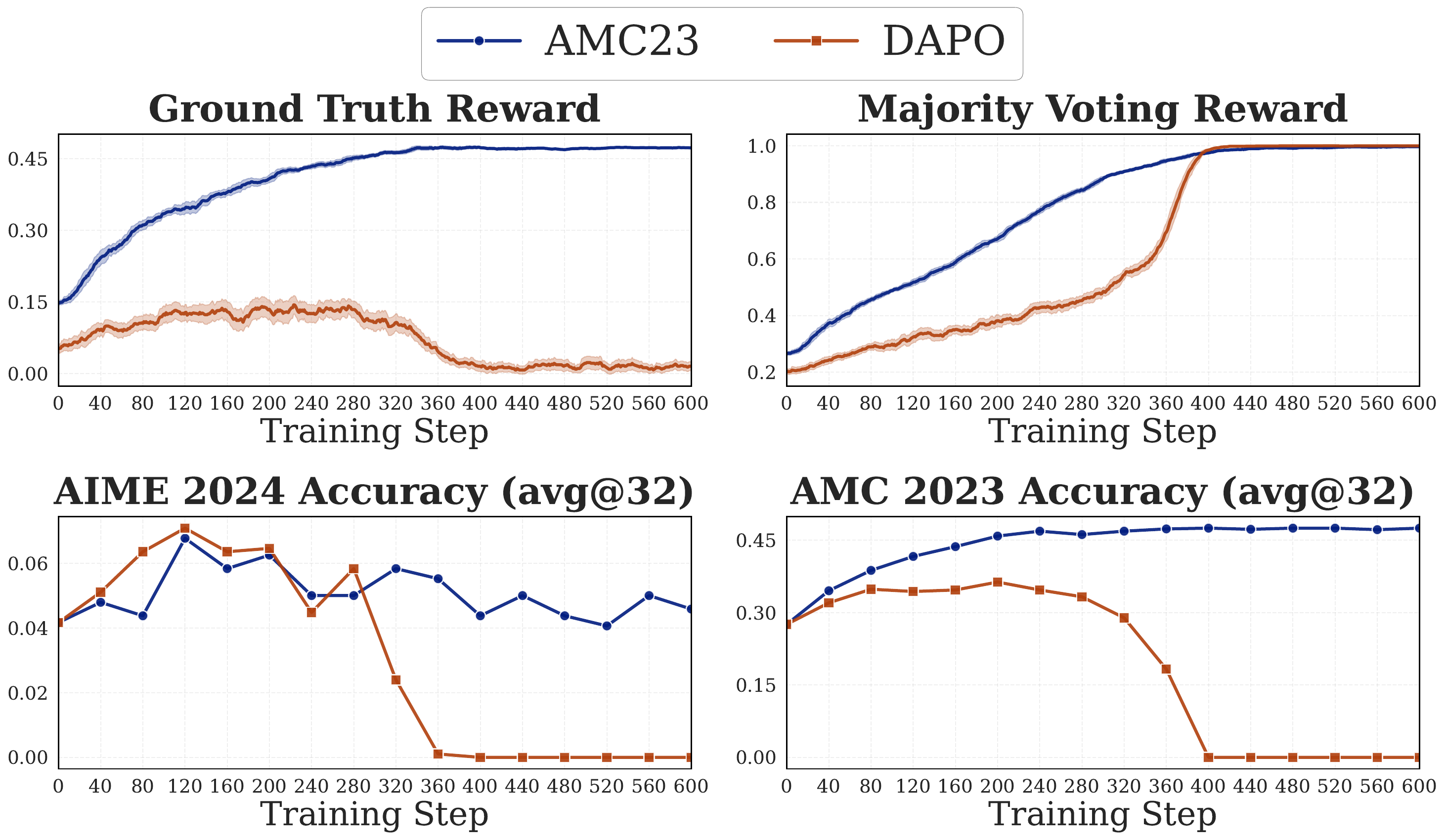}
\vspace{-15pt}
\caption{Comparison between training and test-time.}
\label{fig:5p4p2}
% \end{figure}
\vspace{-15pt}
\end{wrapfigure}

\noindent\textbf{Results.} \Cref{fig:5p4p2} shows that test-time training on AMC23 avoids collapse. Both \textit{Ground Truth Reward} and \textit{Majority Voting Reward} rise and stabilize, with performance improving on both AMC23 and AIME24.
In contrast, training on DAPO-17k shows the familiar rise-then-fall pattern.
This indicates that intrinsic URLVR may be safely applied in test-time training. And this explains why many recent works using intrinsic rewards focus on test-time rather than train-time settings~\citep{zuo2025ttrl,prabhudesai2025maximizing}.

\subsection{Incorrect Majority Votes Still Improve Reasoning}

Following \Cref{sec:small}, we now consider an extreme case where the initial majority votes in training are incorrect on a non-trivial proportion of examples. We want to see at this extreme case, shouldn't the small subset also collapse? Surprisingly, we observe that even when training amplifies errors on this incorrect subset, test-time training can still safely yield gains on OOD benchmarks, aligned with our cross-problem generalization findings in \Cref{sec:cross}.

\begin{wrapfigure}{r}{0.55\textwidth}
% \begin{figure}
\centering
\includegraphics[width=\linewidth]{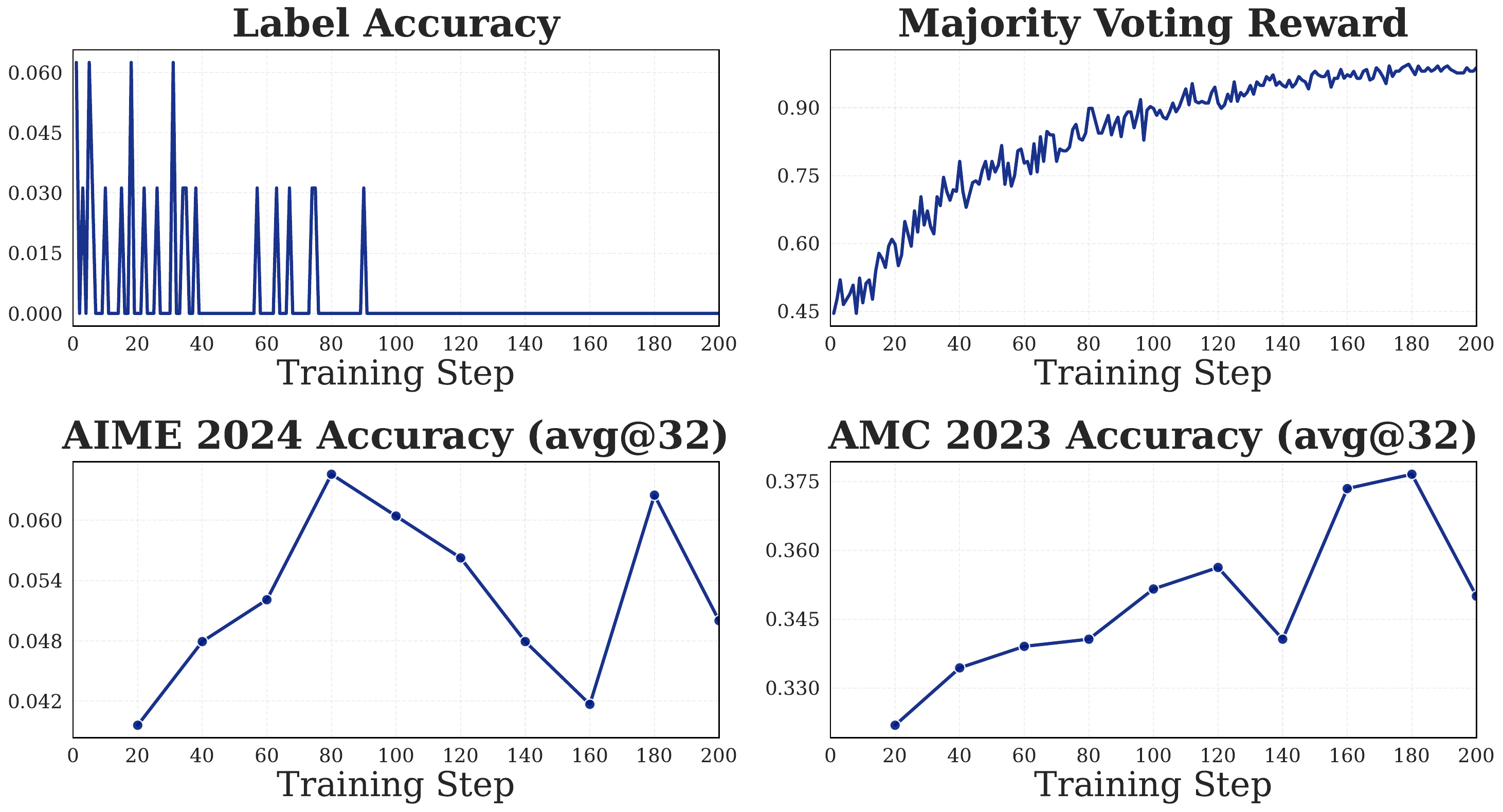}
\caption{Training dynamics of extreme DAPO-32 setting, where almost all initial majority votes are incorrect.}
\label{fig:5p2p1}
% \end{figure}
\end{wrapfigure}

\noindent\textbf{Setup.}
We first perform offline filtering on DAPO-17k by sampling 64 responses per prompt from the base model and computing maj@64. Note that during training, we actually use maj@8 with temperature 1.0, which introduces some randomness.
To ensure that the majority vote is incorrect in a non-negligible portion, we deliberately filter offline with maj@64 and control for higher majority ratios (>40\%). We then trained on 32 filtered samples using the same setting as DAPO-32.
We monitor \textit{Label Accuracy} (measure whether maj@8 during training match ground truth label), \textit{Majority Voting Reward} and the actual performance on validation benchmarks.

\noindent\textbf{Results.} As shown in \Cref{fig:5p2p1}, we observe non-zero accuracy at a few early steps, with only 1-2 maj@8 matching ground truth labels. After 100 steps, it consistently shows zero \textit{Label Accuracy} with a convergent trend in \textit{Majority Voting Reward}. Even when almost all 32 samples have incorrect initial majority votes, training still produces effective learning without catastrophic collapse.
The gains observed on AIME24 and AMC23 demonstrate that small-scale training operates under fundamentally different dynamics than large-scale training.
This validates that small subsets may avoid collapse through localized overfitting rather than systematic policy shift.

\section{How Can We Measure Model Prior?}
\label{sec:prior}

% \section{A Practical Tool as Model Prior Indicator: Predicting RL Trainability}
% \label{sec:prior}

\begin{tcolorbox}[takeawaysbox]
% We propose \textit{Model Collapse Step} as a novel model prior indicator, measuring when reward accuracy collapses during intrinsic URLVR.
% It strongly predicts standard RL trainability, matching or surpassing pass@k, while requiring no ground truth labels and remaining robust to multiple-choice gaming.
We propose the \textit{Model Collapse Step} as a novel indicator of model priors, which measures standard RL trainability by tracking reward accuracy collapses during intrinsic URLVR.
This indicator achieves accuracy in assessing trainability on par with running standard RL itself, but with higher efficiency; it outperforms pass@k, requires no ground-truth labels and remains robust to multiple-choice problems.
\end{tcolorbox}

Previous sections suggest that intrinsic URLVR is effective only when the model’s initial confidence is aligned with correctness.
This raises a practical question: \textbf{Can the strength of this alignment be used to estimate the model prior rather than running expensive RL training to select the base model?}
% 目前学界通常会通过直接在不同的base model上跑完整的rl run来选择更适合rl的ckpt，并通常认为这种模型模型先验更强。
The community typically selects the base model by running full RL training on multiple checkpoints and considers such models to have a stronger prior.
% 但是这种方法显而易见的costly和低效。
However, this method is obviously costly and inefficient.
Another standard approach is $pass@k$~\citep{wu2025miragemethodmodeltaskalignment}, measuring by sampling $k$ solutions without training.
This works but also has limitations.
% First, it needs ground truth for verification, making it unusable for subjective tasks.
% 其次，这种方法的准确率会显著更低。
First, this method has a significantly lower accuracy.
And it also struggles with multiple-choice questions, where $pass@k \rightarrow 1$ when $k$ is sufficiently large.
% by simply trying all options.

% We start by systematically conducting a broad observational study of training dynamics. In \Cref{app:backbone}, we include a comparison of 11 models, covering various series (Qwen2.5/Qwen3/OctoThinker/Llama3.1), various training stages (base/math base/sft/instruct), and various sizes (1.7B, 3B, 4B, 8B). This analysis observes that different models indeed have vastly different stability profiles

In this section, we first examine intrinsic URLVR on different models, showing they lead to different outcomes.
Then, we leverage the rise-then-fall pattern, proposing \textbf{Model Collapse Step} and measuring how long a model can sustain intrinsic URLVR before it starts to collapse.
Formally, we define it as the training step where \textit{Reward Accuracy} drops below 1\%.
Models with stronger priors remain stable for longer before collapsing, indicating that they are more suitable as base models for RL.

% We propose the \textbf{Reward Reliability Horizon (RRH)}, a dynamics-based diagnostic that observes when \textit{Reward Accuracy} falls below 1\% during brief intrinsic URLVR training. Models with stronger confidence-correctness alignment maintain high reward accuracy for more training steps before intrinsic rewards begin systematically favoring incorrect solutions. Unlike $pass@k$'s static sampling, RRH captures the temporal evolution of confidence-correctness correlation through training dynamics. We demonstrate that RRH is (1) \textbf{accurate}: strongly predicts RL performance gains, matching or exceeding $pass@k$ while being applicable to subjective tasks and robust to multiple-choice questions; and (2) \textbf{rapid}: can be computed within comparable time to $pass@k$ by using aggressive hyperparameters that accelerate convergence without changing relative model rankings.

\subsection{Pilot Study of Different Models}

We first observe that intrinsic URLVR behaves differently across model families and training stages.

\begin{figure*}[!h]
    % \vspace{-30pt}
    \centering
    \includegraphics[width=0.49\linewidth]{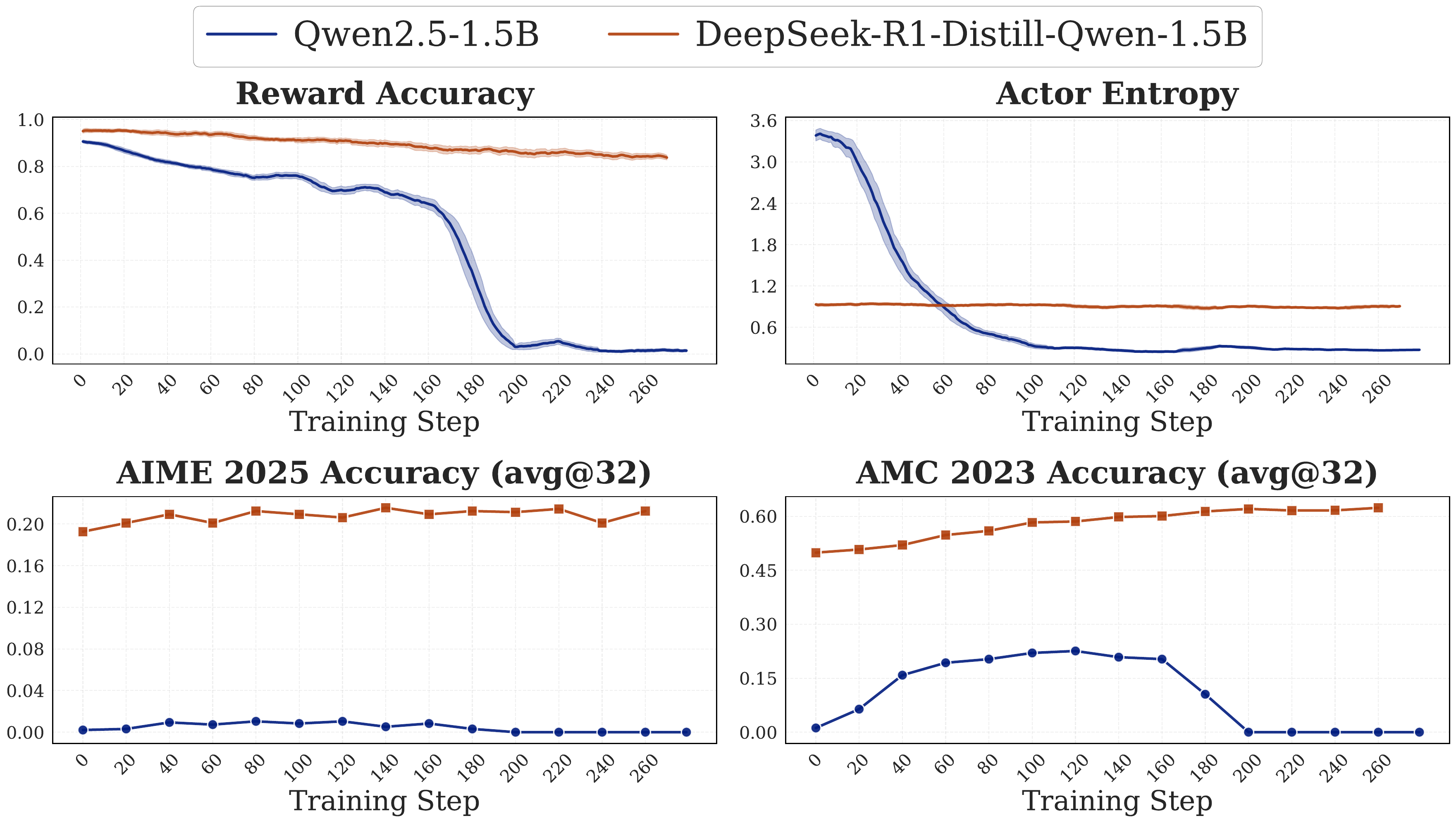}
    \hfill
    \vrule width 0.5pt
    \hfill
    \includegraphics[width=0.49\linewidth]{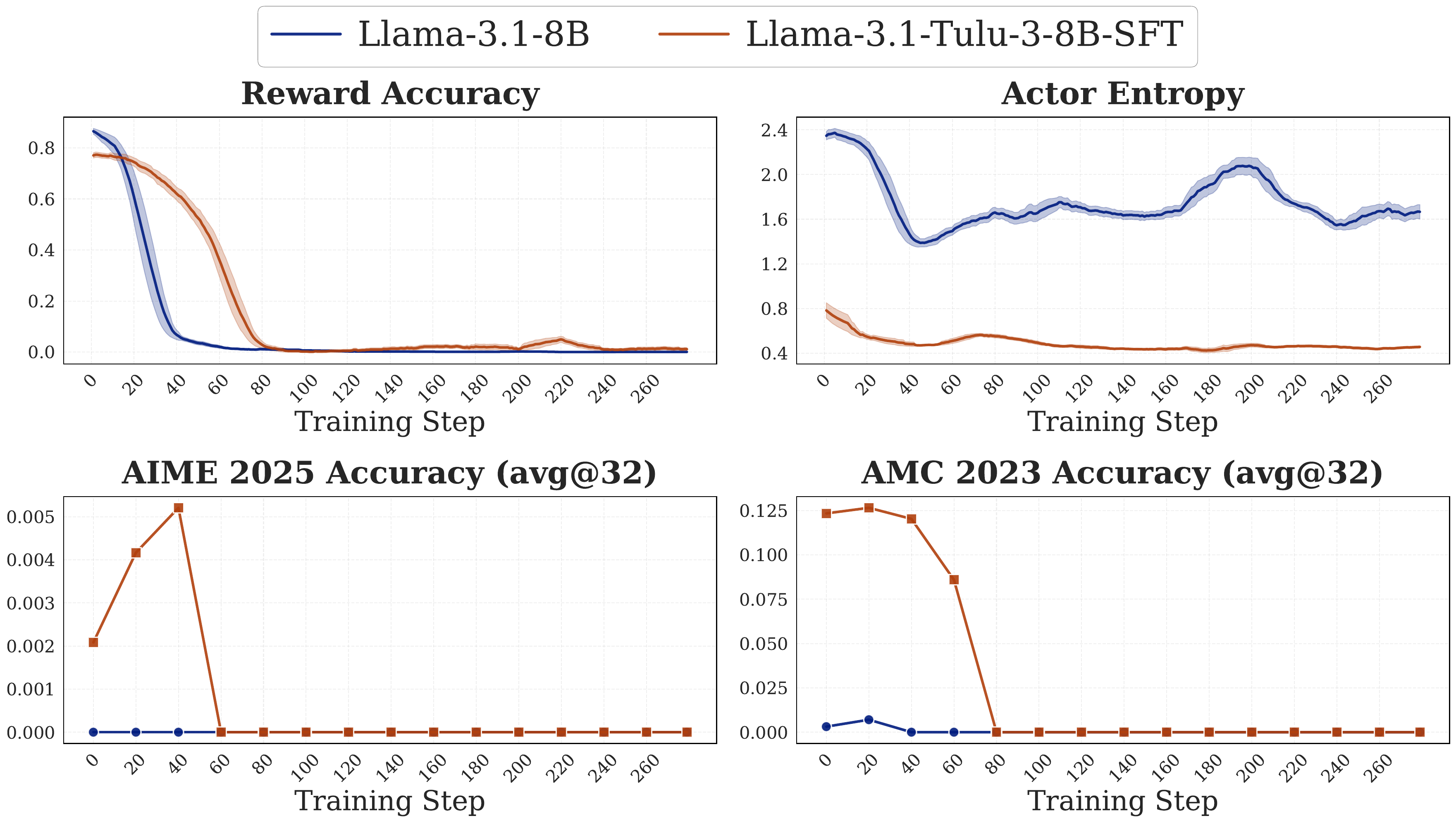}
    % \vspace{-10pt}
    \caption{RL Training dynamics after different training stages in Qwen (left) and LLaMA (right) family.
    In Qwen, SFT enables stable training while base collapses by step 200.
    In LLaMA, both eventually collapse but SFT delays failure.}
    \label{fig:4p1p3}
    \vspace{-10pt}
\end{figure*}

\noindent\textbf{Setup.} We compare four models: Qwen2.5-1.5B and DeepSeek-R1-Distill-Qwen-1.5B from the Qwen family; Llama-3.1-8B and Llama-3.1-Tulu-3-8B-SFT from the LLaMA family. This lets us separate architectural effects from training stage effects. All models are trained on DAPO-17k with majority voting reward. We track \textit{Reward Accuracy}, \textit{Actor Entropy}, and validation performance.

\noindent\textbf{Results.} The two families show strikingly different patterns (\Cref{fig:4p1p3}). In Qwen family (left), the SFT variant maintains \textit{Reward Accuracy} above 0.8 throughout training while the base model drops to near zero by step 200, despite starting with higher \textit{Actor Entropy}. In LLaMA family (right), both variants eventually collapse but at different rates. The base model fails by step 40, while the SFT variant shows an initial performance rise before collapsing later. This architectural difference highlights that Qwen models appear fundamentally more stable, aligned with \citet{shao2025spurious}.

% \noindent\textbf{Note.}
\citet{zhang2025no} explains these dynamics through entropy minimization that early gains come from learning correct output formats, while later degradation stems from suppressing high-entropy transitional words that support reasoning.
While this view explains how intrinsic rewards sharpen distributions, it cannot predict when sharpening helps versus hurts.
\Cref{fig:4p1p3} reveals the missing piece that base models from both families start with higher \textit{Actor Entropy} yet perform worse, with lower \textit{Reward Accuracy} and faster collapse.
If high entropy enables better reasoning, base models should outperform their SFT models. The opposite pattern suggests entropy is a consequence of the sharpening process, not its determinant and not enough to predict RL trainability and model prior.

\begin{figure*}[!t]
    \centering
    \includegraphics[width=1\linewidth]{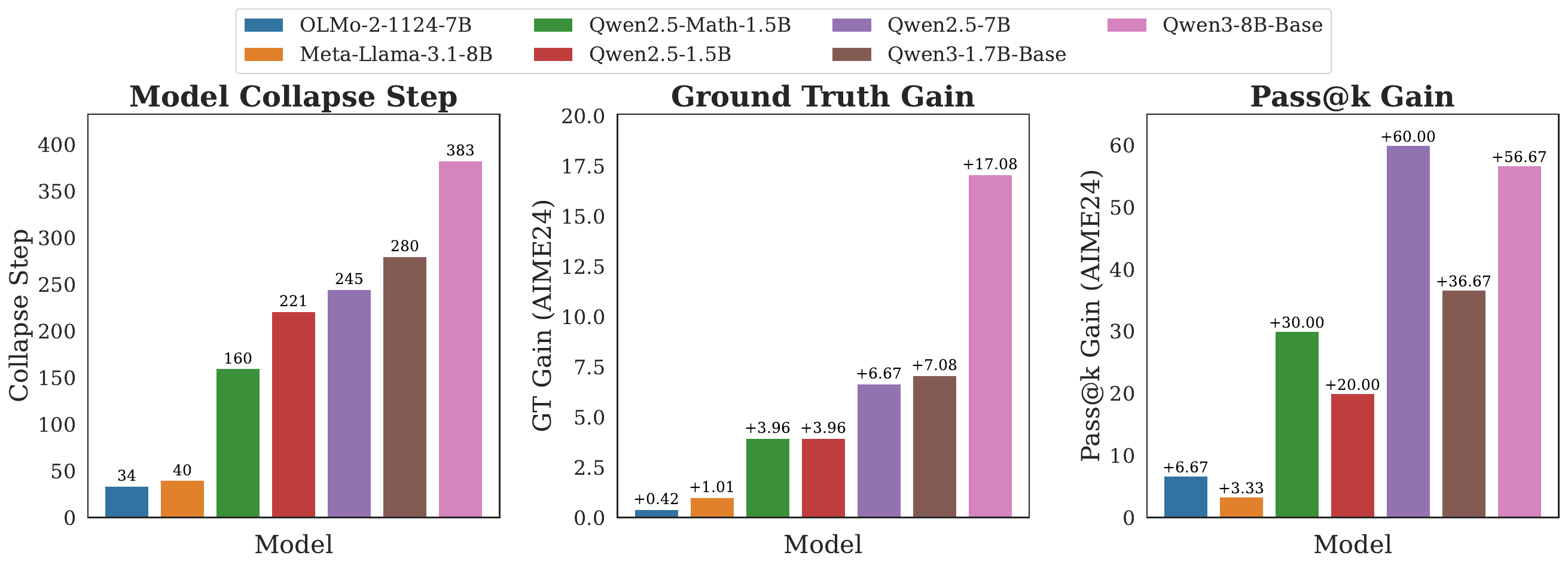}
    \caption{Comparison between Model Collapse Step (left) and $\mathbf{Pass@k}$ Gain (right) as predictors of RL trainability, with GT Gain (middle) as reference. Later collapse strongly predicts better RL performance.}
    \label{fig:5p5p1}
\end{figure*}

\subsection{Model Collapse Step Accurately Predicts RL Gains}

To validate Model Collapse Step as an accurate predictor of RL trainability, we compare it against the widely-used $pass@k$ metric and standard RLVR training.

\noindent\textbf{Setup.} We evaluate 7 models from 3 families (OLMo, LLaMA, Qwen) on AIME24, using three indicators:
(1) \textbf{Ground Truth (GT) Gain}, measuring performance improvement from standard supervised RLVR on 1 epoch DAPO-17k training with ground truth rewards;
(2) \textbf{$\mathbf{Pass@k}$ Gain}, calculated as the difference between $pass@256$ and $pass@1$ on AIME24;
and (3) \textbf{Model Collapse Step}, measured as the training step when \textit{Reward Accuracy} falls below 1\% during intrinsic URLVR training with majority voting reward with default hyperparameters.
We evaluate the correlation between these indicators and use them to measure the model prior.

\noindent\textbf{Results.} As shown in \Cref{fig:5p5p1}, Model Collapse Step correlates strongly with GT Gain. Models that survive longer with a larger Model Collapse Step consistently yield better results in standard supervised RL training. This metric matches and even surpasses $pass@k$'s predictive power and it cannot be gamed by random guessing on multiple-choice questions, demonstrating its reliability as an indicator.

\subsection{Model Collapse Step Rapidly Predicts RL Gains}

While Model Collapse Step provides an accurate prediction, can it be computed efficiently? Based on insights from the hyperparameter tuning (\Cref{app:hyperparameter}), we hypothesize that certain hyperparameters, such as mini-batch size and rollout number, can accelerate convergence without sacrificing the accuracy of the model prior indicator for RL trainability. This is valuable for practitioners evaluating multiple model candidates before committing to expensive RL training.

\noindent\textbf{Setup.} We test whether it remains consistent across different training hyperparameters by varying mini-batch size $\in \{1, 8, 64\}$ and rollout count $\in \{8, 16, 32\}$. For each setting, we measure for the same 7 models and assess whether we can preserve predictive power while reducing computation time.

\begin{figure}[!t]
\centering
\includegraphics[width=\linewidth]{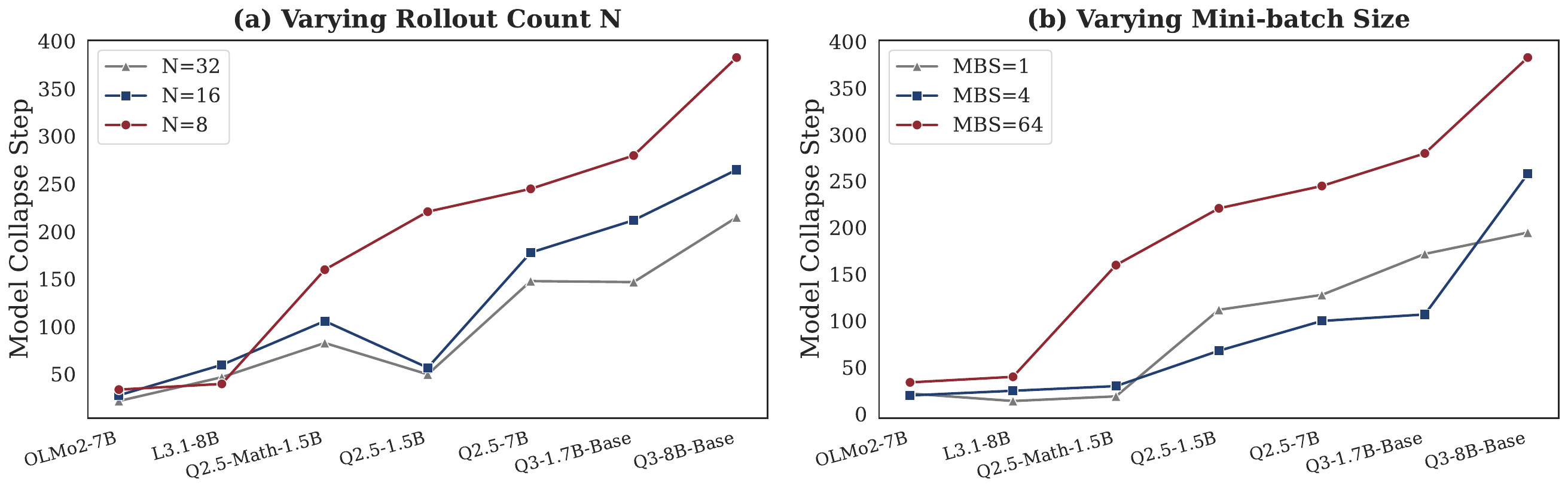}
\caption{Consistency across different rollout counts (left) and mini-batch sizes (right). Aggressive hyperparameters accelerate collapse but preserve relative model rankings and predictive power.}
\label{fig:5p5p2}
\end{figure}

\noindent\textbf{Results.} \Cref{fig:5p5p2} shows that while aggressive hyperparameters ($MBS=1$ or $N=32$) accelerate collapse in absolute steps, the ranking of models remains relatively stable. This enables more rapid assessment for models with strong prior ($\geq50$ steps less than before), effectively shortening the time required to measure model priors, making Model Collapse Step not only accurate but also rapid. 

\begin{table}[t]
\centering
\caption{Computation cost comparison between Model Collapse Step and the gold standard (GT Gain) for assessing RL trainability across 7 models.}
\label{tab:efficiency}
\scriptsize
\resizebox{\textwidth}{!}{
\begin{tabular}{lccc}
\toprule
\textbf{Indicator} & \textbf{Computation Cost} & \textbf{Total Tokens}  & \textbf{Requires GT} \\
\midrule
GT Gain & $7k × 8 × 17k × 7$ & 6.66B & Yes \\
 & (response × rollouts × problems × models) & (baseline) \\
% \midrule
% Pass@k Gain & $7k × 256 × 30 × 7$ & 376M & Yes \\
%  & (response × k × problems × models) & (17.7× faster) \\
% \midrule
\textbf{Model Collapse Step} & $7k × 8 × 662 × 32$ & \textbf{1.19B} & \textbf{No} \\
 & (response × rollouts × total steps × batch) & \textbf{(5.6× faster)} \\
\bottomrule
\end{tabular}}
\end{table}

To further quantify this, we compare the computation cost of Model Collapse Step against the gold standard GT Gain. For Model Collapse Step, we adopt the tuned aggressive hyperparameters ($MBS=1$, $N=8$) and record the collapse steps across seven models: [22, 14, 19, 112, 128, 172, 195], totally 662 steps. These values are also visualized in \Cref{fig:5p5p2} (right, grey line).

As shown in \Cref{tab:efficiency}, Model Collapse Step requires substantially less computation than GT Gain, using 5.6× fewer tokens than full RL training while preserving the relative ranking of models. Moreover, it requires no ground truth labels, making it applicable even when verification is unavailable.

While $pass@k$ offers an alternative static metric, our experiments show it correlates less reliably with actual RL gains (\Cref{fig:5p5p1}). By measuring training dynamics rather than pre-training performance, Model Collapse Step captures the interaction between model priors and the learning process itself. For practitioners evaluating multiple model candidates for RL, this provides a more reliable filter before committing to expensive RL training.

The predictive power of Model Collapse Step is grounded in \Cref{theorem:mv_convergence}. It captures the point at which geometric amplification shifts from beneficial (reinforcing correct) to harmful (amplifying errors). In practice, this enables efficient pre-RLVR assessment of trainability via short diagnostic runs with aggressive hyperparameters and calibrated early stopping, capturing gains while avoiding collapse.

\section{Discussion}
\label{sec:beyond_intrinsic}

\begin{tcolorbox}[takeawaysbox]
Intrinsic rewards are fundamentally bounded by what the model already knows.
External rewards grounded in unlabeled data or generation-verification asymmetry provide signals that scale with data and computation rather than saturating with model capacity, offering a more promising path towards scalable URLVR.
\end{tcolorbox}

Preceding sections reveal that intrinsic URLVR faces fundamental scalability limits rooted in confidence-correctness alignment. When this alignment is weak, intrinsic methods amplify existing biases rather than discovering new knowledge. This limitation is not merely a hyperparameter issue but stems from the nature of the reward signal itself, which derives entirely from the model's internal state and therefore cannot consistently push the model beyond what it already knows.

This motivates exploring alternative URLVR methods that can escape this ceiling. Recent works have explored external reward methods that generate verifiable signals through mechanisms independent of the model's internal state. As surveyed in \Cref{sec:external}, we identify two promising directions that leverage unlabeled data structures to derive rewards from the corpus and exploit generation-verification asymmetries.
Both paradigms offer rewards that scale with data or computation rather than saturating with model capacity, providing complementary paths toward scalable self-evolution.

% \noindent\textbf{Self-Verification as a Case Study.}
To understand how external rewards differ in practice, we examine self-verification as a concrete case study, which exploits a key asymmetry in data that generation is hard but verification is easy.
This asymmetry allows models to provide accurate rewards by themselves without ground truth labels, which is also validated in DeepSeekMath-V2~\citep{shao2025deepseekmath}~\footnote{Same method name but different implementation}.

\begin{figure*}[!t]
    \centering
    \includegraphics[width=.9\linewidth]{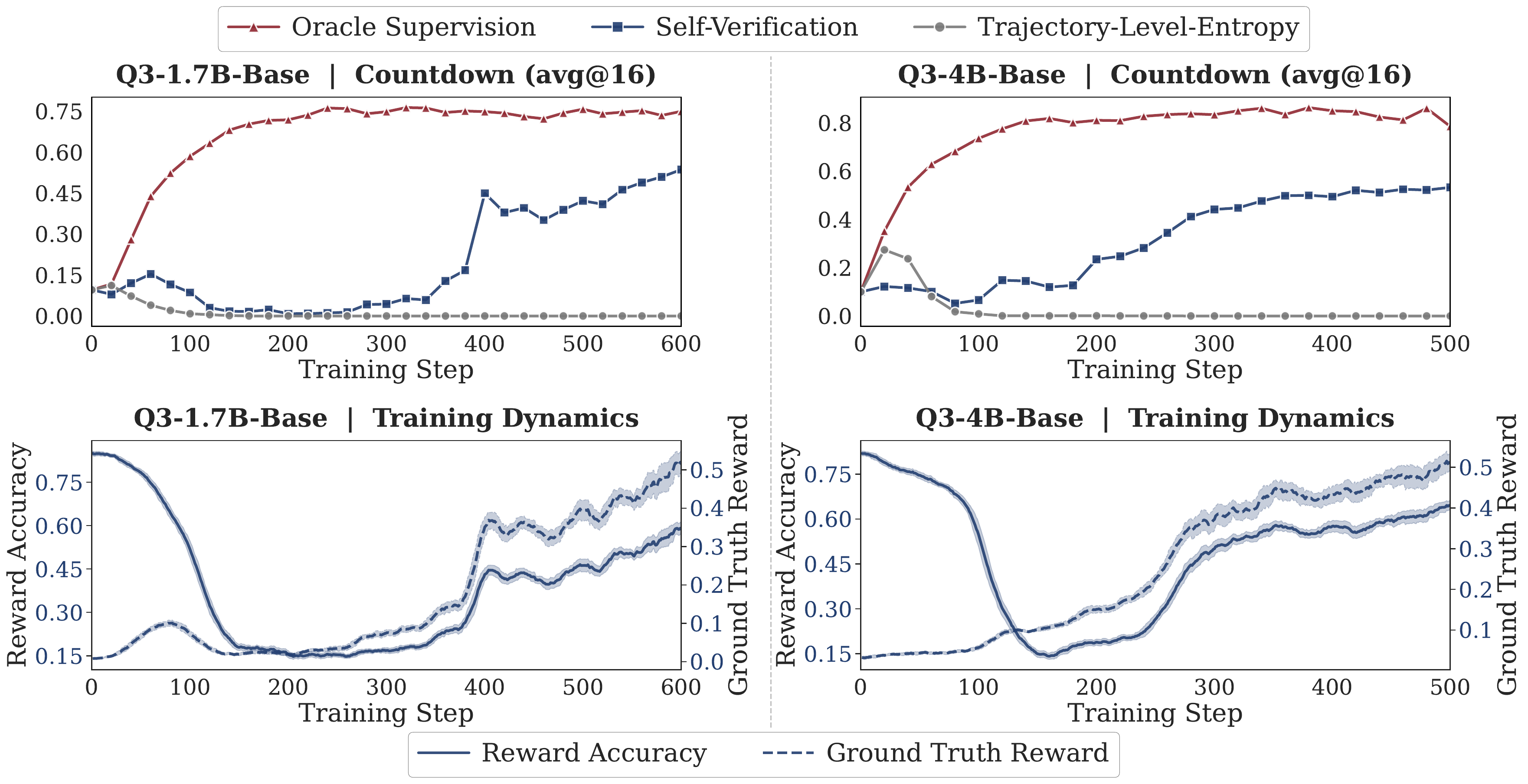}
    \vspace{-15pt}
    \caption{Top: Validation accuracy comparing Self-Verification with Trajectory-Level Entropy and Oracle Supervision. Bottom: Training dynamics for Self-Verification.}
    \label{fig:self_verify_comp}
\end{figure*}

% \noindent\textbf{Setup.}
Specifically, we train Qwen3-1.7B-Base and Qwen3-4B-Base on a subset of the Countdown task\footnote{\url{https://huggingface.co/datasets/Jiayi-Pan/Countdown-Tasks-3to4}}, where models must form arithmetic expressions that reach a target value.
For Countdown, generating correct expressions is challenging, but checking if an expression equals the target is trivial.
We sample 4k problems for training and 1k for validation.
For self-verification, the model generates solutions and evaluates them using a verification prompt that outputs binary correctness (see Prompt 2 in \Cref{app:verification_prompts}). For reward score computation, we use the ground truth scoring function\footnote{\url{https://github.com/Jiayi-Pan/TinyZero/blob/main/verl/utils/reward_score/countdown.py}} to determine whether expressions correctly evaluate to the target and compare:
(1) \textbf{Self-Verification} (our method),
(2) \textbf{Trajectory-Level Entropy} (reward from \Cref{tab:certainty_based_formulas}),
and (3) \textbf{Oracle Supervision} (training with ground truth reward).
We track validation accuracy, \textit{Ground Truth Reward} and \textit{Reward Accuracy}.

% \noindent\textbf{Results.}
Our experiments show that Self-Verification works much better than Trajectory-Level Entropy (\Cref{fig:self_verify_comp}, top).
Both models reach higher validation accuracy with Self-Verification. The bottom figure shows an interesting pattern. \textit{Reward Accuracy} initially drops around step 200 as the policy tries to exploit the verifier, then recovers and stabilizes above 0.5.
Meanwhile, \textit{Ground Truth Reward} keeps rising.
This recovery shows the model is genuinely learning to solve problems, validating that it supplies stronger signals than previous intrinsic rewards while resisting reward hacking, supporting generation-verification asymmetry as a promising direction beyond intrinsic methods.

% \subsection{Instruction Alignment Enhances Verifier Robustness}

We further find that the ability of instruction following is the key to the success of self-verification. We compare Qwen3-1.7B-Base against its instruction-aligned Qwen3-1.7B to see how model capability affects self-verification. We test both with two verification prompts (P1 and P2, see \Cref{app:verification_prompts}) to measure prompt sensitivity. We track the validation accuracy, \textit{Reward Accuracy} and \textit{Self-Verify Reward} (proxy reward used during training).

\begin{figure*}[!t]
    \centering
    \vspace{-10pt}
    \includegraphics[width=1\linewidth]{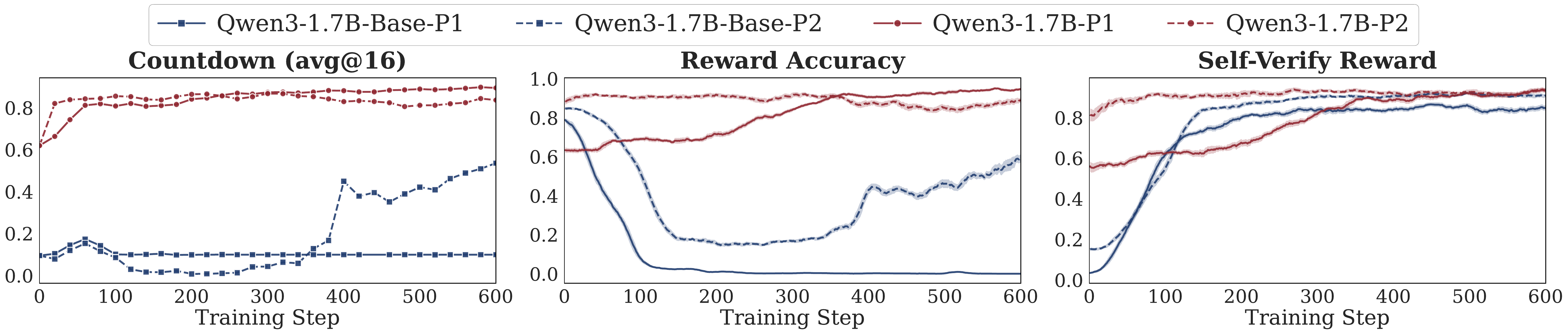}
    \caption{Prompt sensitivity across base and instruction-aligned models.}
    \label{fig:prompt_comp}
    \vspace{-10pt}
\end{figure*}

% \noindent\textbf{Setup.} We compare Qwen3-1.7B-Base against its instruction-aligned version (Qwen3-1.7B) to see how model capability affects self-verification. We test both with two verification prompts (P1 and P2, see \Cref{app:verification_prompts}) to measure prompt sensitivity. We track the validation accuracy, \textit{Reward Accuracy} and \textit{Self-Verify Reward} (proxy reward used during training).

% \noindent\textbf{Results.}
From \Cref{fig:prompt_comp}, we find that instruction alignment helps with higher starting performance and robustness to prompt choice. The instruction model starts above 60\% accuracy (surpassing the base model's final performance) and improves further to over 80\%. More importantly, it succeeds with both prompts, while the base model only works with P2. The middle figure shows instruction models maintain stable \textit{Reward Accuracy} with both prompts, while base models are highly sensitive. These results suggest self-verification can scale beyond test-time training (\Cref{sec:q3}) when combined with instruction alignment, offering a robust path for scaling RL.

These observations point to why external rewards offer a more promising path for scaling URLVR. \textbf{First, external rewards are grounded in procedures that do not degrade as the model improves.} A verifier that checks arithmetic expressions, executes code, or validates proofs against formal specifications remains equally reliable regardless of how sophisticated the model's outputs become. This contrasts with intrinsic rewards, which shift as the model's distribution changes and can amplify arbitrary patterns. \textbf{Second, external rewards can be generated at scale without human annotation.} Verification procedures are often cheap to run, and the unlabeled-data paradigm shows that even tasks without native verification asymmetries can be converted into reward-based learning by leveraging structure in existing text.
The quantity of available data and the affordability of verification computation are both scalable resources, unbounded by human labeling capacity.
Where intrinsic rewards are bounded by what the model already knows, external rewards can provide a fresh signal drawn from data and computation.
% The key challenge ahead lies not in whether such methods can work, but in extending verification asymmetries to new domains and developing verifiable environments that capture meaningful aspects of reasoning and problem-solving.
% \input{Sections/5_Unified}
% \input{Sections/6_Discussion}
\section{Conclusion}

This work investigates how far Unsupervised RLVR can scale LLM training. We establish that intrinsic reward methods, despite diverse designs, share a fundamental mechanism that sharpens output distributions by amplifying the model's existing preferences. This mechanism enables efficient gains when initial confidence aligns with correctness but leads to inevitable collapse when pushed beyond the model's knowledge boundaries. Our systematic experiments reveal that collapse is not an engineering problem but a fundamental limitation, with all methods eventually failing regardless of hyperparameter tuning. These findings chart clear boundaries for intrinsic URLVR that they cannot scale indefinitely to create new capabilities but remain valuable tools for exploiting existing knowledge in test-time training. We further show that early training dynamics serve as a practical indicator for model prior through Model Collapse Step, enabling rapid assessment of RL trainability. Looking forward, our analysis motivates exploration of external reward methods that ground verification in computational asymmetries and unlabeled data rather than model confidence, with preliminary evidence suggesting these approaches may escape the confidence-correctness ceiling that limits intrinsic methods.

% This work explores how Unsupervised RLVR scales LLMs via a unified framework for intrinsic reward methods. We show that these rewards sharpen outputs around confident predictions, enabling efficient gains when confidence aligns with correctness but amplifying errors when it does not. Empirical results reveal distinct failure modes yet also show that collapse can be avoided in small, domain-specific settings, making test-time training a natural application. Beyond these findings, early training dynamics emerge as a lightweight diagnostic of model-task priors, offering a fast alternative to $pass@k$ for assessing RL trainability. Together, these results outline the limits of intrinsic rewards and highlight the need for external signals and hybrid paradigms for robust, scalable gains.

\bibliography{URLVR}

\clearpage
\appendix

\section{Details for \Cref{sec:theory}}
\label{app:theory}

\subsection{Justifying the Ordering $p_{\text{maj}}^{*,(k+1)} \geq p_{\text{maj}}^{(k+1)} \geq p_{\text{maj}}^{(k)}$}
\label{app:order}

From the optimal policy of the standard KL-regularized RL objective (\Cref{eq:rl_obj}), if we held reward $r_k$ fixed and performed infinite updates starting from $\pi_\theta^{(k)}$, we would reach the optimal policy with probability mass:
\begin{equation}
p_{\text{maj}}^{*,(k+1)} = \frac{\alpha \cdot p_{\text{maj}}^{(k)}}{1 + (\alpha-1)p_{\text{maj}}^{(k)}}, \qquad \alpha := e^{1/\beta} > 1
\label{eq:update_rule}
\end{equation}

\textbf{Lower bound} $(p_{\text{maj}}^{(k+1)} \geq p_{\text{maj}}^{(k)})$: The policy gradient is: $\nabla_\theta J = \mathbb{E}_{\pi_\theta}[r_k(x, y)\nabla_\theta \log \pi_\theta(y|x)]$. Since $r_k(x, y) = 1$ for majority trajectories and $r_k(x, y) = 0$ for non-majority trajectories, the gradient increases $\log \pi_\theta(y|x)$ only for majority trajectories. Therefore, after one gradient update with learning rate $\eta$: $\log \pi_\theta^{(k+1)}(y|x) \approx \log \pi_\theta^{(k)}(y|x) + \eta \cdot r_k(x, y) \cdot [\text{advantage terms}]$. \textbf{Here we use $\approx$ because we are conducting stochastic gradient descent. We will empirically validate this increase later.} For majority trajectories with positive advantage, this increases their probability. Hence:
\begin{equation}
p_{\text{maj}}^{(k+1)} = \sum_{y:\text{ans}(y)=\text{maj}_{k+1}(Y)} \pi_\theta^{(k+1)}(y|x) \geq \sum_{y:\text{ans}(y)=\text{maj}_k(Y)} \pi_\theta^{(k)}(y|x) = p_{\text{maj}}^{(k)}
\end{equation}

\textbf{Upper bound} $(p_{\text{maj}}^{(k+1)} \leq p_{\text{maj}}^{*,(k+1)})$: Since $\pi_\theta^{*,k}$ maximizes the KL-regularized objective for fixed $r_k$, it achieves the highest possible expected reward. Our single-step update only moves partway toward this maximum, so we cannot exceed the optimal probability mass: $p_{\text{maj}}^{(k+1)} \leq p_{\text{maj}}^{*,(k+1)}$.

This ordering establishes that $p_{\text{maj}}^{(k)}$ is monotonically increasing and bounded. The key remaining question is: \textbf{does the majority answer} $\text{maj}_k(Y)$ \textbf{remain stable across iterations?}

With $N$ rollouts from $\pi_\theta^{(k)}$, each rollout independently yields answer $a$ with probability $\pi_\theta^{(k)}(a|x)$. By the Law of Large Numbers, as $N$ increases, the empirical frequency of each answer converges to its true probability: $\frac{|\{\text{ans}(y^{(i)})=a\}|}{N} \xrightarrow{N \to \infty} \pi_\theta^{(k)}(a|x)$. Therefore, $\text{maj}_k(Y)$ (the most frequent answer in rollouts) converges to $\arg\max_a \pi_\theta^{(k)}(a|x)$ (the most probable answer under the policy). Since $p_{\text{maj}}^{(k)}$ increases monotonically, the most probable answer remains $\text{maj}_0(Y)$ throughout training: $\arg\max_a \pi_\theta^{(k)}(a|x) = \text{maj}_0(Y)$ for all $k$. In practice, even moderate $N$ (we use $N = 8$) provides reasonable stability.

\subsubsection{Empirical Validation 1: Validation of Ordering and Majority Stability}

\textbf{Setup:} We trained on a single problem from MATH-500 with $N=1024$ rollouts (reducing majority vote randomness) for 50 steps. We randomly selected 4 problems and monitored whether the majority answer $\text{maj}_k(Y_k)$ remains stable and whether $p_{\text{maj}}^{(k)}$ increases monotonically.

\begin{table}[h]
\centering
\small
\begin{tabular}{l|cccccccccc}
\toprule
\textbf{Step} & \textbf{1} & \textbf{2} & \textbf{3} & \textbf{4} & \textbf{5} & \textbf{6} & \textbf{7} & \textbf{8} & \textbf{9} & \textbf{10} \\
\midrule
level3\_id146 & 12.70 & 15.53 & 15.92 & 16.21 & 18.46 & 22.07 & 22.56 & 24.80 & 31.35 & 39.36 \\
level1\_id187 & 6.64 & 6.69 & 6.84 & 7.42 & 10.45 & 11.04 & 11.43 & 11.82 & 15.82 & 18.46 \\
level1\_id262 & 15.14 & 17.19 & 17.48 & 18.85 & 20.02 & 22.07 & 24.12 & 25.20 & 34.67 & 39.84 \\
level3\_id122 & 11.33 & 12.01 & 12.40 & 14.06 & 17.87 & 18.46 & 20.31 & 21.29 & 33.59 & 33.89 \\
\bottomrule
\end{tabular}
\caption{Monotonic increase of $p_{\text{maj}}$ (\%) in early training steps.}
\label{tab:monotonic_increase}
\end{table}

\noindent\textbf{Results for monotonic increase of $p_{\text{maj}}$.} \Cref{tab:monotonic_increase} shows $p_{\text{maj}}$ values for the first 10 steps. All 4 problems exhibit strict monotonic increase at every single step, confirming the lower bound $p_{\text{maj}}^{(k+1)} \geq p_{\text{maj}}^{(k)}$ of the ordering. We also found that the majority answer remained stable across all iterations.

\begin{table}[h]
\centering
\small
\begin{tabular}{l|cccccccccc}
\toprule
\textbf{Step} & \textbf{5} & \textbf{10} & \textbf{15} & \textbf{20} & \textbf{25} & \textbf{30} & \textbf{35} & \textbf{40} & \textbf{45} & \textbf{50} \\
\midrule
level3\_id146 & 18.46 & 39.36 & 48.93 & 91.11 & 95.41 & 98.14 & 98.54 & 99.02 & 99.61 & 99.80 \\
level1\_id187 & 11.04 & 18.46 & 26.37 & 79.88 & 89.84 & 93.07 & 96.09 & 97.66 & 98.54 & 99.02 \\
level1\_id262 & 22.07 & 39.84 & 51.37 & 90.14 & 95.90 & 96.80 & 97.46 & 98.05 & 98.63 & 99.41 \\
level3\_id122 & 17.87 & 33.59 & 43.55 & 84.28 & 92.19 & 93.26 & 95.80 & 96.09 & 98.34 & 98.54 \\
\bottomrule
\end{tabular}
\caption{Geometric convergence of $p_{\text{maj}}$ (\%) to 1.0 over 50 training steps.}
\label{tab:convergence_validation}
\end{table}

\noindent\textbf{Results for convergence of $p_{\text{maj}}$.} \Cref{tab:convergence_validation} shows the same 4 problems trained for 50 steps. All problems converge from initial values toward near-complete concentration (98.54\%-99.80\% at step 50), demonstrating the convergence predicted by \Cref{theorem:mv_convergence}. This validates non-trivial progress and confirms that the iterative training procedure with policy-dependent rewards does indeed converge to deterministic policies.

\subsubsection{Empirical Validation 2: Fixed Reward Convergence Validation}

\textbf{Setup.} To validate that the closed-form optimal policy in \Cref{eq:MV_optimal_detailed} is achievable when reward is held fixed, we conducted an extreme off-policy experiment. We used global batch size 1024 with mini-batch size 1, generated one-time rollout (with $N=8$ for each of 1024 prompts), and performed 1024 gradient updates using rewards computed solely from the initial rollout majority. This setup tests whether solving a single KL-regularized RL objective can converge to the theoretical optimum when the reward signal remains constant.

\textbf{Results.} After 1024 mini-updates using the same fixed reward signal, the Majority Voting Reward reached 1.0 (complete convergence), while validation performance on AIME24, AIME25, and AMC23 dropped to zero. This confirms that the convergence point predicted by \Cref{eq:MV_optimal_detailed} is achievable with sufficient updates.

\subsection{Proof of \Cref{theorem:mv_convergence}}
\label{app:convergence_analysis}

\begin{tcolorbox}
[title = \textbf{Geometric Convergence to Deterministic Policy}, colback=Salmon!20, colframe=Salmon!90!Black]
Consider the training procedure where at each iteration $k$: (1) sample $N$ rollouts $Y_k$ from $\pi_\theta^{(k)}$, (2) compute majority $\text{maj}_k(Y_k)$, (3) perform one gradient update with reward $r_k(x,y) = \mathbf{1}[\text{ans}(y) = \text{maj}_k(Y_k)]$.

Under assumptions (A1) stable majority $\text{maj}_k = \text{maj}_0$ and (A2) $\eta_k \geq \eta_{\min} > 0$, the probability mass $p_{\text{maj}}^{(k)}$ converges geometrically to 1. As $k \to \infty$:
\begin{equation}
\lim_{k \to \infty} \pi_\theta^{(k)}(y|x) = \begin{cases}
\frac{\pi_{\text{ref}}(y|x)}{\sum_{y': \text{ans}(y') = \text{maj}_0(Y_0)} \pi_{\text{ref}}(y'|x)} & \text{if } \text{ans}(y) = \text{maj}_0(Y_0), \\
0 & \text{otherwise}
\end{cases}
\end{equation}
\end{tcolorbox}

\textbf{Step 1: Effective Update Rule.}

We model the actual update with step efficiency $\eta_k \in (0,1]$:
\begin{equation}
p_{\text{maj}}^{(k+1)} = p_{\text{maj}}^{(k)} + \eta_k \cdot (p_{\text{maj}}^{*,(k+1)} - p_{\text{maj}}^{(k)})
\label{eq:effective_update}
\end{equation}

Substituting \Cref{eq:update_rule}:
\begin{align}
p_{\text{maj}}^{(k+1)} &= p_{\text{maj}}^{(k)} + \eta_k \left(\frac{\alpha \cdot p_{\text{maj}}^{(k)}}{1 + (\alpha-1)p_{\text{maj}}^{(k)}} - p_{\text{maj}}^{(k)}\right) \nonumber \\
&= p_{\text{maj}}^{(k)} + \eta_k \cdot \frac{(\alpha - 1)(1 - p_{\text{maj}}^{(k)})p_{\text{maj}}^{(k)}}{1 + (\alpha-1)p_{\text{maj}}^{(k)}}
\label{eq:p_update}
\end{align}

\textbf{Step 2: Error Dynamics.}

Define the error from the fixed point 1 as:
\begin{equation}
\epsilon^{(k)} := 1 - p_{\text{maj}}^{(k)} \in (0,1)
\label{eq:error_def}
\end{equation}

Substituting into \Cref{eq:p_update}:
\begin{align}
\epsilon^{(k+1)} &= 1 - p_{\text{maj}}^{(k+1)} \nonumber \\
&= \epsilon^{(k)} - \eta_k \cdot \frac{(\alpha - 1)(1 - \epsilon^{(k)})\epsilon^{(k)}}{1 + (\alpha-1)(1-\epsilon^{(k)})} \nonumber \\
&= \epsilon^{(k)} \left(1 - \eta_k \cdot \frac{(\alpha - 1)(1 - \epsilon^{(k)})}{\alpha - (\alpha-1)\epsilon^{(k)}}\right)
\label{eq:eps_update}
\end{align}

\textbf{Step 3: Monotonic Decrease.}

Since $\alpha > 1$, $\epsilon^{(k)} \in (0,1)$, and $\eta_k \in (0,1]$, we have:
\begin{equation}
0 < \frac{(\alpha - 1)(1 - \epsilon^{(k)})}{\alpha - (\alpha-1)\epsilon^{(k)}} < 1
\end{equation}

Therefore:
\begin{equation}
0 < 1 - \eta_k \cdot \frac{(\alpha - 1)(1 - \epsilon^{(k)})}{\alpha - (\alpha-1)\epsilon^{(k)}} < 1
\end{equation}

This implies $\epsilon^{(k+1)} < \epsilon^{(k)}$, proving the sequence $\{\epsilon^{(k)}\}$ is strictly decreasing and bounded below by 0.

\textbf{Step 4: Convergence to Zero.}

Let $\ell = \lim_{k\to\infty}\epsilon^{(k)} \geq 0$. Under assumption (A2), $\eta_k \geq \eta_{\min} > 0$. If $\ell > 0$, then for large $k$, the multiplier in \Cref{eq:eps_update}:
\begin{equation}
1 - \eta_k \cdot \frac{(\alpha - 1)(1 - \epsilon^{(k)})}{\alpha - (\alpha-1)\epsilon^{(k)}} \leq 1 - \eta_{\min} \cdot \frac{\alpha - 1}{\alpha} < 1
\end{equation}

is bounded away from 1, causing continued decay. The only consistent limit is $\ell = 0$. Therefore:
\begin{equation}
\epsilon^{(k)} \to 0 \qquad\text{equivalently}\qquad p_{\text{maj}}^{(k)} \to 1
\label{eq:conv_to_zero}
\end{equation}

\textbf{Step 5: Geometric Convergence Rate.}

From \Cref{eq:eps_update}, for large $k$ when $\epsilon^{(k)}$ is small:
\begin{equation}
\epsilon^{(k+1)} \approx \epsilon^{(k)} \left(1 - \eta_k \cdot \frac{\alpha - 1}{\alpha}\right)
\end{equation}

Under assumption (A2):
\begin{equation}
\epsilon^{(k+1)} \leq \left(1 - \eta_{\min} \cdot \frac{\alpha - 1}{\alpha}\right) \epsilon^{(k)}
\label{eq:geometric}
\end{equation}

This establishes geometric convergence with rate depending on $\eta_{\min}$ and $\alpha = e^{1/\beta}$. In the ideal case where $\eta_k = 1$ for all $k$ (each update reaches the optimum), the convergence rate is exactly $\rho = e^{-1/\beta}$.

\textbf{Step 6: Limiting Policy.}

Given assumption (A1) that the majority remains stable at $\text{maj}_0(Y_0)$, as $p_{\text{maj}}^{(k)} \to 1$, all probability mass concentrates on trajectories with $\text{ans}(y) = \text{maj}_0(Y_0)$. The limiting distribution is:
\begin{equation}
\lim_{k \to \infty} \pi_\theta^{(k)}(y|x) = \begin{cases}
\frac{\pi_{\text{ref}}(y|x)}{\sum_{y': \text{ans}(y') = \text{maj}_0(Y_0)} \pi_{\text{ref}}(y'|x)} & \text{if } \text{ans}(y) = \text{maj}_0(Y_0), \\
0 & \text{otherwise}
\end{cases}
\end{equation}

This completes the proof. \(\square\)

\noindent\textbf{Remark on Assumptions.}
\begin{itemize}[topsep=2pt, itemsep=2pt, leftmargin=15pt]
    
    \item \textbf{(A1) Majority stability:} By the Law of Large Numbers, with $N$ rollouts, the empirical majority $\text{maj}_k(Y_k)$ converges to $\arg\max_a \pi_\theta^{(k)}(a|x)$ as $N \to \infty$. Since $p_{\text{maj}}^{(k)}$ increases monotonically, the argmax remains $\text{maj}_0$ throughout training. We validate this empirically with $N=1024$ rollouts in Appendix~\ref{app:order}, where the majority never flipped across 200 iterations.
    
    \item \textbf{(A2) Non-trivial progress:} We assume $\eta_k \geq \eta_{\min} > 0$, meaning each gradient update makes non-trivial progress. We validate this empirically: our experiments show consistent monotonic increase in $p_{\text{maj}}$ and convergence to 1.0 under extreme off-policy settings (Appendix~\ref{app:order}).
\end{itemize}

\subsection{Unified Reward Framework}
\label{app:unified_framework}

Despite varied implementations of intrinsic rewards, they can be understood through a single lens: the manipulation of cross-entropy between carefully chosen distributions. We consolidate these diverse rewards into the following unified paradigm:

\begin{tcolorbox}[title = \textbf{Unified Reward Framework}, colback=Salmon!20, colframe=Salmon!90!Black]
Most intrinsic rewards can be expressed as:
\begin{equation}
r_{\text{uni}}(x,y) = \psi\left(\frac{\sigma}{|\mathcal{I}|}\sum_{i\in\mathcal{I}}\mathbb{H}(q^i,\pi_{\theta}^i)\right), \quad \sigma \in \{+1, -1\},
\label{eq:unified_reward}
\end{equation}
where rewards derive from cross-entropy $\mathbb{H}$ between anchor distributions $q^{i}$ and model distributions $\pi_{\theta}^{i}$, aggregated over granularity $\mathcal{I}$, with sign $\sigma$ and monotonic transformation $\psi$.
\end{tcolorbox}

\subsubsection{Framework Components}

To understand how different intrinsic methods fit into this framework, we define each component:

\noindent\textbf{Key Components:}
\begin{itemize}[topsep=2pt, itemsep=2pt, leftmargin=15pt]
    \item Given a question $x$ and generated response $y$ (a sequence of tokens $y_1, \ldots, y_{|y|}$), we can derive rewards from the model's internal distributions at different levels of granularity.
    
    \item \textbf{Aggregation granularity $\mathcal{I}$}: Determines the level to compute distributions. For token-level methods, $\mathcal{I} = \{1, \ldots, |y|\}$ where each element corresponds to a position in the sequence. For answer-level methods, $\mathcal{I} = \{\mathcal{A}\}$ represents a single distribution over complete semantic answers.
    
    \item \textbf{Model distribution $\pi_{\theta}^i$ at granularity $i$}: For token-level granularity at position $t$, this is $\pi_{\theta}^t(\cdot) = \pi_{\theta}(\cdot \mid x, y_{<t})$, the distribution over the next token given the context. For answer-level granularity, this is $\pi_{\theta}^{\mathcal{A}} = \pi_{\theta}(\cdot \mid x)$, the distribution over complete answers.
    
    \item \textbf{Anchor distribution $q^i$ at granularity $i$}: Serves as a reference point. Different reward estimators use different anchors: uniform distribution $U_V$ for Self-Certainty or one-hot distribution $\delta^t$ centered on the generated token for Trajectory-Level Entropy.

    \item \textbf{Cross-entropy $\mathbb{H}(q^i, \pi_{\theta}^i)$:} Cross-entropy between anchor distribution $q^i$ and model distribution $\pi_{\theta}^i$ at granularity $i$, defined as $\mathbb{H}(q^i, \pi_{\theta}^i) = -\sum_{v\in\mathcal{V}^i} q^i(v)\log \pi_{\theta}^i(v)$. For token-level granularity ($i=t$), $\mathcal{V}^i$ is the token vocabulary, and the cross-entropy measures divergence between distributions over next tokens. For answer-level granularity ($i=\mathcal{A}$), $\mathcal{V}^i$ is the set of distinct semantic answers, and the cross-entropy measures divergence between distributions over complete answers.
    
    \item \textbf{Sign factor $\sigma \in \{+1, -1\}$}: Determines the optimization direction. When the anchor $q$ is uniform, we set $\sigma=+1$ to reward divergence from uniformity (encouraging peaked distributions). When the anchor $q$ is sharp (e.g., one-hot or the model's own distribution), we set $\sigma=-1$ to reward alignment (reinforcing confident predictions).
    
    \item \textbf{Monotonic transformation $\psi$}: Reshapes the reward signal while preserving ordering. Common choices are identity ($\psi(z) = z$) or exponential ($\psi(z) = \exp(z)$), with exponential transformations amplifying the sharpening effect.
\end{itemize}

\begin{table}[t]
  \centering
  \scriptsize
  \label{tab:rewards}
  \renewcommand{\arraystretch}{1.2}
  \setlength{\tabcolsep}{0mm}{
  \begin{tabular}{@{\hskip 2mm}l@{\hskip 4.5mm}lc >{\centering\arraybackslash}p{2.5cm} > 
  {\centering\arraybackslash}p{2cm} >{\centering\arraybackslash}p{2cm}}
    \toprule
    \textbf{Method} & \textbf{Estimator} & \textbf{Formula} & \textbf{Monotonic transformation $\psi$} & \textbf{Anchor distribution $q$} & \textbf{Model distribution $\pi_{\theta}$}\\
    \midrule
    \textbf{RLIF} & \textbf{Self-Certainty} &
    $r_{\text{SC}}=\displaystyle\frac{1}{|y|}\sum_{t=1}^{|y|}\mathbb{H}(U_V,\pi_{\theta}^t) + \log |V|$ &
    $z + \log|V|$ & $\{U_V\}_{t=1}^{|y|}$ & $\{\pi_{\theta}^t\}_{t=1}^{|y|}$\\[3mm]

    \textbf{EM-RL, RENT} & \textbf{Token-Level Entropy} &
    $r_{\text{H}}=-\displaystyle\frac{1}{|y|}\sum_{t=1}^{|y|}\mathbb{H}(\pi_{\theta}^t)$ &
    $z$ & $\{\pi_{\theta}^t\}_{t=1}^{|y|}$ & $\{\pi_{\theta}^t\}_{t=1}^{|y|}$\\[3mm]

    \textbf{EM-RL} & \textbf{Trajectory-Level Entropy} &
    $r_{\text{Traj}}=\displaystyle-\frac{1}{|y|}\sum_{t=1}^{|y|}\mathbb{H}(\delta^t, \pi_{\theta}^t)$ &
    $z$ & $\{\delta^t\}_{t=1}^{|y|}$ & $\{\pi_{\theta}^t\}_{t=1}^{|y|}$\\[3mm]

    \textbf{RLSC} & \textbf{Probability} &
    $r_{\text{Prob}}=\displaystyle\exp(-\sum_{t=1}^{|y|}\mathbb{H}(\delta^t, \pi_{\theta}^t))$ &
    $\exp\!\bigl(|\mathcal{I}|\cdot z\bigr)$ & $\{\delta^t\}_{t=1}^{|y|}$ & $\{\pi_{\theta}^t\}_{t=1}^{|y|}$\\[3mm]

    \textbf{EMPO} & \textbf{Semantic Entropy} &
    $r_{\text{SE}}=\displaystyle\exp(-\mathbb{H}(\delta^A, \pi_{\theta}^A))$ &
    $\exp(z)$ & $\delta^A$ & $\pi_{\theta}^A$\\[3mm]

    \textbf{TTRL, SRT, ETTRL} & \textbf{Majority Voting} &
    $r_{\text{MV}}=\displaystyle\lim_{\tau\to 0^+}\exp(-\mathbb{H}(\delta^A, \tilde{\pi}_{\theta}^A))$ &
    $\exp(z)$ & $\delta^A$ & $\tilde{\pi}_{\theta}^A$\\

 %    \textbf{Co-Reward} & \textbf{\makecell[l]{Majority Voting\\across Rephrased Question}} & \makecell[c]{$r_{\text{Co}} = \displaystyle\lim_{\tau\to 0^+}\exp(-\mathbb{H}(\delta^A, \tilde{\pi}_{\theta}^A{}'))$ \\ $r_{\text{Co}}' = \displaystyle\lim_{\tau\to 0^+}\exp(-\mathbb{H}(\delta^A{}', \tilde{\pi}_{\theta}^A))$} & $\exp(z)$ & \makecell[c]{$\delta^A$ \\ $\delta^A{}'$}  & \makecell[c]{ $\tilde{\pi}_{\theta}^A{}'$\\$\tilde{\pi}_{\theta}^A$} \\[3mm]

 %    \textbf{RLCCF} & \textbf{\makecell[l]{Self-consistency\\Weighted Voting}} & $r_{\text{WV}} = \displaystyle\lim_{\tau\to 0^+}\exp(-\mathbb{H}(\delta^A, \tilde{\pi}_{\theta}^{A,SC}))$ & $\exp(z)$ 
 % & $\delta^A$ & $\tilde{\pi}_{\theta}^{A,SC}$ \\
    \bottomrule
  \end{tabular} }
  \caption{Instantiations for the unified reward framework of representative intrinsic rewards. Each method is specified by its estimator, anchor and model distributions, and a monotonic transformation of the cross-entropy between them. Token-level ($t$) and answer-level ($A$) variants capture different granularities of aggregation.}
  \label{tab:unified_rewards}
\end{table}

\subsubsection{Instantiations of the Framework}

We next demonstrate how most intrinsic rewards instantiate this framework. Each method's distinctive characteristics emerge from specific choices of $\mathcal{I}$, $q$, $\sigma$, and $\psi$, as shown in \Cref{tab:unified_rewards}, which reveals that despite surface-level differences, all methods manipulate cross-entropy to achieve distribution sharpening.

\noindent\textbf{Remarks.}
We highlight two special cases. 
First, the formulation of Self-Certainty includes an additional $\log|V|$ term. 
Since this constant is independent of model parameters, it does not affect gradients during RL training. 
Second, the expression of $r_{\text{MV}}$ corresponds to the asymptotic case where the number of rollouts $n \to \infty$. 
By the law of large numbers, as $n \to \infty$, the majority vote almost surely selects the answer with the highest probability under $\pi_\theta^A$, i.e. $\arg\max_a \pi_\theta^A(a)$. To make this limit computationally tractable, we use the tempered distribution $\tilde{\pi}_{\theta}^A(a)\!=\!\exp\!\bigl(\pi_{\theta}^A(a)/\tau\bigr)/\sum_{b\in A}\exp\!\bigl(\pi_{\theta}^A(b)/\tau\bigr)$ which avoids the undefined $\log 0$ issue; as $\tau\!\to\!0^+$, it collapses to the hard majority indicator $\mathbb{1}\!\bigl[\text{ans}(y)=\arg\max_{a}\pi_\theta^A(a)\bigr]$, thereby recovering the same limiting behavior as majority voting. 
% Third, the notation $\tilde{\pi}_{\theta}^{A,SC}(a) = \operatorname{softmax_\tau}(\sum_{m=1}^M \sum_{\operatorname{ans}(y) = a} \text{SC}_m \cdot \pi_{\theta}^{(i)}(y))$ denotes an answer-level distribution that is weighted by the Self-Consistency scores, with $M$ denoting the total number of models.

\noindent\textbf{Key Observations:}
\begin{itemize}[topsep=0pt, itemsep=0pt, leftmargin=18pt]
    \item \textbf{Token-level methods} (Self-Certainty, Token-Level Entropy, Trajectory-Level Entropy, Probability) operate at $\mathcal{I} = \{1, \ldots, |y|\}$, manipulating distributions at each generation step to encourage local confidence.
    \item \textbf{Answer-level methods} (Semantic Entropy, Majority Voting) work at $\mathcal{I} = \{\mathcal{A}\}$, focusing on global answer consistency where $\pi_{\theta}^A$ represents the distribution over semantic answers and $\tilde{\pi}_{\theta}^A$ denotes the empirical distribution from multiple rollouts.
    \item \textbf{Anchor choices} reveal the core mechanism: uniform distributions ($U_V$) encourage departure from randomness, while sharp distributions ($\delta^t$, $\delta^A$) reinforce high-probability paths, both leading to increased determinism.
    \item \textbf{Exponential transformations} ($\exp(z)$) amplify the sharpening effect by exponentially rewarding low cross-entropy, while identity transformations ($z$) provide more gradual reinforcement.
\end{itemize}

\subsubsection{Monotonicity Analysis}

The key insight comes from analyzing the monotonicity of the exponent in \Cref{eq:unified_reward}. Since $\psi$ is strictly increasing by design, the behavior depends entirely on $\sigma$:

\begin{itemize}[topsep=0pt, itemsep=0pt, leftmargin=18pt]
    \item \textbf{Case $\sigma = +1$}: The reward increases with cross-entropy. Sequences where $\pi_{\theta}$ diverges from $q$ (typically uniform) receive higher rewards, pushing the policy toward more peaked distributions.
    \item \textbf{Case $\sigma = -1$}: The reward decreases with cross-entropy. Sequences where $\pi_{\theta}$ aligns with $q$ (typically sharp) receive higher rewards, reinforcing existing confident predictions.
\end{itemize}

Both cases lead to the same outcome: progressive sharpening of the model's distribution, either by moving away from uniformity or by reinforcing peaked predictions.

\subsection{Generalized Sharpening Analysis via Unified Reward Framework}
\label{app:unified_convergence}

To address the concern that \Cref{theorem:mv_convergence} applies only to Majority Voting, and to demonstrate the analytical utility of our unified framework, we provide a generalized sharpening analysis. We show that methods with $\sigma=-1$ share a critical structural property, Reward-Confidence Monotonicity, which creates a persistent pressure toward distribution sharpening.

\textbf{Note:} The following is a proof sketch demonstrating the key convergence mechanism shared by $\sigma=-1$ methods. A fully rigorous treatment requires additional technical conditions that we validate empirically. Methods with $\sigma = +1$ (Self-Certainty) require separate analysis as they reward away from uniform distribution.

\begin{proposition}[Sharpening Dynamics for $\sigma = -1$ Methods]
\label{prop:sigma_minus_one_sharpening}
Consider any intrinsic reward with $\sigma = -1$ in the unified framework ($r_{\text{uni}} = \psi(-\mathbb{H}(q, \pi))$) where $\psi$ is strictly increasing and $q$ is a sharp anchor. These methods satisfy \textbf{Reward-Confidence Monotonicity}:
\begin{equation}
\pi_\theta(y_a|x) > \pi_\theta(y_b|x) \implies r_{\text{uni}}(x,y_a) > r_{\text{uni}}(x,y_b)
\end{equation}

For a dominant trajectory $y^*$ (e.g., majority) and a non-dominant competitor $y'$, this inequality is strict: $r(y^*) > r(y')$. Under iterative KL-regularized updates, this property creates a self-reinforcing feedback loop that drives geometric concentration.
\end{proposition}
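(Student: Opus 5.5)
The argument has two parts. First we establish Reward-Confidence Monotonicity case by case for the $\sigma=-1$ instantiations in \Cref{tab:unified_rewards}. Then we reduce the iterative dynamics to the same scalar recursion used in the proof of \Cref{theorem:mv_convergence}.

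\textbf{Monotonicity at the right granularity.} For the sharp-anchor instantiations, the cross-entropy against a one-hot anchor is just a negative log-probability. At the token level, $\mathbb{H}(\delta^t,\pi_\theta^t)=-\log\pi_\theta(y_t\mid x,y_{<t})$, so $\sum_t \mathbb{H}(\delta^t,\pi_\theta^t) = -\log\pi_\theta(y\mid x)$. Hence $r_{\text{Prob}}=\pi_\theta(y\mid x)$ exactly, and the implication $\pi_\theta(y_a|x)>\pi_\theta(y_b|x)\Rightarrow r(y_a)>r(y_b)$ is immediate because $\psi$ is strictly increasing.

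At the answer level, $r_{\text{SE}}=\pi_\theta^A(\text{ans}(y))$. For Majority Voting, the limit $\tau\to0^+$ gives $\mathbb{1}[\text{ans}(y)=\arg\max_a\pi_\theta^A(a)]$. Here the natural ordering is on the answer marginal $\pi_\theta^A$, and monotonicity holds weakly in general and strictly between the dominant answer and any competitor. This is the strict inequality $r(y^*)>r(y')$ claimed in the statement.

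Trajectory-Level Entropy uses the length-normalized $\frac{1}{|y|}\log\pi_\theta(y|x)$. For it, and for Token-Level Entropy, I would restrict the claim to trajectories of equal length, or state it for per-token geometric-mean confidence. The literal sequence-level statement can fail across lengths, which is consistent with the length and repetition collapse seen in \Cref{sec:diff_methods}. So the first concrete step is to fix, for each estimator, the confidence functional $c(y)$ in which $r$ is strictly increasing, and to read the proposition as being about $c$.

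\textbf{Feedback loop.} Partition outcomes into the dominant class $S^*=\{y: c(y)\ge c(y^*)\}$ and its complement. Apply \Cref{eq:optimal_policy_general} with $\pi_{\text{ref}}=\pi_\theta^{(k)}$. Because rewards are larger on $S^*$, the tilted policy multiplies each $y\in S^*$ by a factor at least $e^{r_{\min}^*/\beta}$ and each $y\notin S^*$ by at most $e^{r_{\max}'/\beta}$. With the gap $\Delta_k = r_{\min}^*-r_{\max}'>0$ guaranteed by strict monotonicity, this gives
\[
p_*^{*,(k+1)} \ge \frac{\alpha_k p_*^{(k)}}{1+(\alpha_k-1)p_*^{(k)}}, \qquad \alpha_k=e^{\Delta_k/\beta}>1.
\]
This has exactly the form of \Cref{eq:update_rule}. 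I then reuse Steps 1--5 of the proof of \Cref{theorem:mv_convergence}, with the effective step $\eta_k\ge\eta_{\min}$, to obtain geometric decay of $1-p_*^{(k)}$ at a rate governed by $\inf_k\alpha_k$.

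The self-reinforcing part is that sharpening increases $c(y^*)$ relative to competitors. Since the reward is a function of the current policy, the gap $\Delta_k$ is therefore non-decreasing. I would verify this explicitly for Probability and Semantic Entropy, where $r$ is the mass itself.

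\textbf{Main obstacle.} The hard step is the analogue of assumption (A1): showing that the dominant class stays dominant and that $\Delta_k$ stays bounded away from zero. This is hard because the reward is policy-dependent and token-level updates shift probability inside $S^*$ non-uniformly.

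My first attempt would be an invariance argument. If $\pi^{(k)}(y^*)\ge\pi^{(k)}(y')$, then the tilt multiplies $y^*$ by at least as much as $y'$, so the ordering is preserved after the step and the gap $\pi(y^*)-\pi(y')$ grows. If that does not go through for token-level rewards with shared prefixes, I would state the lower bound on $\Delta_k$ as an explicit assumption mirroring (A1)--(A2), in line with the paper's remark that a fully rigorous treatment needs extra technical conditions that are validated empirically.
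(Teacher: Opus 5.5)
Your proposal is sound to the same standard as the paper's sketch, but it handles the central step differently.

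The paper argues pairwise. For a single competitor $y'$, the optimal tilt multiplies the odds $\pi(y^*)/\pi(y')$ by $\exp(\Delta_r^{(k)}/\beta)$. A separate ``gradient assumption'' then says real updates do not decrease these odds. Step~3 asserts the feedback loop only qualitatively, and only for Probability and Majority Voting.

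You instead aggregate over a dominant set $S^*$. Monotonicity of $A/(A+B)$ converts the worst-case gap $\Delta_k=r^*_{\min}-r'_{\max}$ into the scalar map of \Cref{eq:update_rule} with $\alpha_k=e^{\Delta_k/\beta}$, and you then inherit the recursion from the proof of \Cref{theorem:mv_convergence}. Majority Voting is the special case $\Delta_k\equiv 1$. This gives an actual geometric rate, controlled by $\inf_k\alpha_k$ and $\eta_{\min}$, and handles all competitors at once. It also makes the real hypotheses explicit: $S^*$ fixed across iterations and $\Delta_k$ bounded below. The paper's odds assumption only yields non-decreasing odds, so ``geometric concentration'' is asserted there rather than derived.

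Your first part is also more careful than the paper. The implication $\pi_\theta(y_a|x)>\pi_\theta(y_b|x)\Rightarrow r(y_a)>r(y_b)$ holds literally only for Probability. It fails within and across answer classes for Majority Voting and EMPO, and across lengths for Trajectory-Level Entropy. The paper's sketch silently uses only the dominant-versus-competitor gap, which is exactly what you isolate.

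There are three points to fix.

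\textbf{Token-Level Entropy.} Restricting to equal lengths does not rescue this estimator. $r_{\text{H}}$ depends on the entropies of the conditionals that $y$ visits, not on the probabilities of the sampled tokens. For example, take two length-$2$ trajectories sharing the first-step distribution, with first-token probabilities $0.6$ and $0.4$. Let $y_a$ continue through a $(\tfrac12,\tfrac12)$ conditional, so $\pi(y_a|x)=0.3$. Let $y_b$ take the rare token of a $(0.99,0.01)$ conditional, so $\pi(y_b|x)=0.004$. Then $r_{\text{H}}(y_b)>r_{\text{H}}(y_a)$. Setting $c=r_{\text{H}}$ makes monotonicity tautological and leaves the feedback step unsupported. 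So this estimator stays outside your argument, as it does outside the paper's.

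\textbf{Monotonicity of $\Delta_k$.} ``$\Delta_k$ is non-decreasing because the reward depends on the current policy'' is not an argument. Your planned check does, however, go through under the exact tilt ($\eta_k=1$). For Probability with $y^*$ the argmax, $Z_k=\mathbb{E}_{\pi_k}[e^{\pi_k(y)/\beta}]\le e^{\pi_k(y^*)/\beta}$. This gives $\pi_{k+1}(y^*)-\pi_{k+1}(y')\ge\pi_k(y^*)-\pi_k(y')$. The ordering is preserved because $t\mapsto te^{t/\beta}$ is increasing. The same argument works for the answer marginal under EMPO. So in that idealized setting, given a unique initial argmax, the (A1) analogue and $\inf_k\Delta_k=\Delta_0>0$ come for free. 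Only partial updates need the extra assumption.

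\textbf{Reusing Steps 4--5.} Write the contraction term as $\frac{(\alpha-1)u}{1+(\alpha-1)u}$ with $u=1-\epsilon$; it is increasing in $u$. So $(\alpha-1)/\alpha$ is its supremum, not a lower bound, and the displayed inequalities in those steps do not hold for $\epsilon>0$ in general. Use $\epsilon^{(k)}\le\epsilon^{(0)}$ instead. With $\underline{\alpha}=\inf_k\alpha_k$, this bounds the per-step multiplier by $1-\eta_{\min}\frac{(\underline{\alpha}-1)(1-\epsilon^{(0)})}{1+(\underline{\alpha}-1)(1-\epsilon^{(0)})}$.
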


\noindent\textbf{Proof Sketch:}

We analyze the dynamics for a dominant trajectory $y^*$ and a competitor $y'$ (for ensemble methods, not in the same class as $y^*$) where the model initially prefers $y^*$ (i.e., $\pi_k(y^*) > \pi_k(y')$) and assigns it strictly higher reward ($r_k(y^*) > r_k(y')$).

\textbf{Step 1: Existence of a Positive Reward Gap}

Using the unified formula, we justify why the gap is positive for $\sigma=-1$:

\begin{itemize}[topsep=2pt, itemsep=2pt, leftmargin=15pt]
    \item \textbf{Self-Reinforcing Anchors} (e.g., Probability): $r(y) = \psi(\log \pi(y))$. Since $\pi_k(y^*) > \pi_k(y')$ and $\psi$ is strictly increasing, $r_k(y^*) > r_k(y')$.
    
    \item \textbf{Answer-Level Anchors} (e.g., Majority Voting): $y^*$ belongs to the dominant answer class $a^*$, while $y'$ does not. By construction, $r(y^*) = 1$ and $r(y') = 0$.
\end{itemize}

In both cases, the intrinsic reward gap is strictly positive: $\Delta_r^{(k)} = r_k(y^*) - r_k(y') > 0$.

\textbf{Step 2: The Optimization Target}

We consider the optimal policy $\pi^*$ for the current fixed reward landscape $r_k$. The optimal solution implies a target ratio:
\begin{equation}
\frac{\pi^*(y^*)}{\pi^*(y')} = \frac{\pi_k(y^*)}{\pi_k(y')} \cdot \exp\left(\frac{\Delta_r^{(k)}}{\beta}\right)
\end{equation}

Since $\Delta_r^{(k)} > 0$, the target ratio is strictly larger than the current ratio.

\textbf{Gradient Assumption:} The gradient $\nabla_\theta J = \mathbb{E}_{\pi_k}[r_k(y) \nabla_\theta \log \pi_\theta(y)]$ assigns positive weight to high-reward trajectories. We assume that policy gradient updates with positive learning rate $\eta$ satisfy: if $r(y^*) > r(y')$ and both have positive probability, then the updated policy satisfies $\frac{\pi_{k+1}(y^*)}{\pi_{k+1}(y')} \geq \frac{\pi_k(y^*)}{\pi_k(y')}$. This aligns with standard policy gradient convergence properties.

\textbf{Step 3: The Reinforcement Loop}

The unified framework reveals why this process spirals into determinism. As the policy updates to increase the probability mass on the dominant trajectory:

\begin{itemize}[topsep=2pt, itemsep=2pt, leftmargin=15pt]
    \item For \textbf{Self-Reinforcing Anchors} (e.g., Probability), because $r(y) = \psi(\log \pi(y))$, increasing $\pi(y^*)$ directly increases its reward $r(y^*)$.
    
    \item For \textbf{Answer-Level Anchors} (e.g., Majority Voting), increasing the total probability mass on the dominant answer class $a^*$ increases the reward for all trajectories in that class (since $r \propto \log p(a^*)$).
\end{itemize}

This creates a positive feedback loop: the update increases the probability of the dominant path, which maintains or widens the reward gap $\Delta_r$, ensuring the pressure to sharpen ($\Delta_r > 0$) persists.

\noindent\textbf{Utility of the Framework:} This derivation demonstrates that the ``rich-get-richer'' dynamic is a structural inevitability for any method where the reward function is monotonically aligned with the model's own confidence ($\sigma=-1$). The framework allows us to identify this shared property and predict that all such methods will drive the policy toward deterministic outputs, regardless of whether this leads to success (when aligned with correctness) or failure (when misaligned).

\noindent\textbf{Remark on $\sigma = +1$ Methods:} Self-Certainty ($\sigma = +1$) rewards higher when away from uniform distribution. Therefore, $\pi(y_a) > \pi(y_b)$ does not imply $r(y_a) > r(y_b)$. A high-probability output and a very low-probability output could both have high KL-divergence from uniform, violating direct Reward-Confidence Monotonicity. Its sharpening mechanism requires separate analysis.

While methods with $\sigma=+1$ do not strictly align reward with raw confidence, they still induce sharpening by penalizing high-entropy distributions. By maximizing the distance from a uniform anchor, the optimization landscape naturally favors peaked, low-entropy policies, effectively driving the model toward determinism.

\noindent\textbf{Empirical Validation:} To substantiate the assumptions in this proof sketch, we provide empirical validation for different intrinsic reward methods in \Cref{fig:urlvr_compare} and Appendix~\ref{app:hyperparameter}, confirming that Reward-Confidence Monotonicity is not just a theoretical construct but the actual driver of the observed training dynamics.

\subsection{Optimal Policies Induced by other Intrinsic Rewards}
\label{app:optimal_policy_other_methods}

\noindent\textbf{Optimal Policy of the Reward Function $r_{\text{SC}}$.} For the Self-Certainty reward function \(r_{\text{SC}}\), it instantiates our unified framework with token-level granularity \(\mathcal{I}=\{1, 2, ..., |y|\}\), anchor distribution \(q=\{U_V\}^{|y|}_{t=1}\) (uniform distribution over vocabulary), model distribution \(\pi=\{\pi_{\theta}^t\}^{|y|}_{t=1}\), sign factor \(\sigma=+1\), and transformation \(\psi(z)=z\).  
As established previously, for any input \(x\), the token-level predictive distribution of the model is evaluated against the current policy \(\pi\).  
Due to \(\sigma=+1\), the farther this distribution deviates from the uniform distribution (i.e., the higher the model’s confidence), the larger the reward \(r_{\text{SC}}(x,y)\).  
Consequently, after a single step of policy update, the optimal probability \(\pi_{\theta}(y|x)\) increases for such high-confidence sequences, whereas it decreases when the per-token distribution is close to uniform (low confidence).  
Thus, \(r_{\text{SC}}\) encourages the model to generate answers that are already preferred by the prior policy.

A detailed derivation is provided below.  
The Self-Certainty based reward is defined as:

\begin{equation}
r_{\text{SC}}(x,y)=\frac{1}{|y|}\sum_{t=1}^{|y|}D_{\text{KL}}\!\bigl(U\parallel\pi_{\theta}(\cdot\mid x,y_{<t})\bigr)
=-\log|V|-\frac{1}{|y|\,|V|}\sum_{t=1}^{|y|}\sum_{v=1}^{|V|}\log\pi^t_{\theta}(y_t=v).
\end{equation}

Within the KL-regularized RL framework, dropping the constant term \(-\log|V|\), the one-step optimal policy becomes:

\begin{equation}
\pi_{\theta}(y|x)\propto\pi_{\text{ref}}(y|x)\exp\!\left(-\frac{1}{\beta\,|y|\,|V|}\sum_{t=1}^{|y|}\sum_{v=1}^{|V|}\log\pi^t_{\theta}(y_t=v)\right).
\end{equation}

Therefore, whenever the model assigns concentrated probabilities to every token of \(y\) (high confidence), the exponent grows, thus increasing the probability of the sequence \(\pi_{\theta}(y|x)\).  
In summary, the Self-Certainty based reward systematically enhances the model’s “self-confidence” with respect to its prior policy.

\noindent\textbf{Optimal Policy of the Reward Function $r_{H}$.} For the token-level entropy-based reward \( r_{H} \), it instantiates our unified framework with token-level granularity \(\mathcal{I}=\{1, 2, ..., |y|\}\), anchor distribution \( q=\{\pi_{\theta}^t\}^{|y|}_{t=1} \), model distribution $\pi=\{\pi_{\theta}^t\}^{|y|}_{t=1}$, sign factor \( \sigma=-1 \), and transformation \( \psi(z)=z \).  
Maximizing \( r_{H} \) is equivalent to minimizing the predictive entropy at every position, thereby discouraging the model from spreading its probability mass across multiple candidate tokens and hence increasing its decisiveness.  

A detailed derivation is provided below.
The entropy-based reward is defined as:

\begin{equation}
r_{H}(x,y)=-\frac{1}{|y|}\sum_{t=1}^{|y|}H\!\bigl(\pi_{\theta}(\cdot\mid x,y_{<t})\bigr)
=-\frac{1}{|y|}\sum_{t=1}^{|y|}\sum_{v=1}^{|V|}\pi^t_{\theta}(y_t=v) \log\pi^t_{\theta}(y_t=v).
\end{equation}

Within the KL-regularized RL framework, the one-step optimal policy becomes:

\begin{equation}
\pi_{\theta}(y|x)\propto\pi_{\text{ref}}(y|x)\exp\!\left(-\frac{1}{\beta\,|y|}\sum_{t=1}^{|y|}\sum_{v=1}^{|V|}\pi^t_{\theta}(y_t=v) \log\pi^t_{\theta}(y_t=v)\right).
\end{equation}

Consequently, if the predictive distribution of an output sequence \( y \) exhibits high entropy (i.e., the per-token distributions are close to uniform), the negative-entropy reward \( r_{H} \) is strongly negative, which suppresses the exponential weight and reduces \( \pi_{\theta}(y|x) \).  
Conversely, low entropy (highly peaked per-token distributions) yields \( r_{H}\approx 0 \), thus the sequence probability is enhanced after normalization.  
Therefore, the entropy-based reward \( r_{H} \) encourages the model to generate answers whose token-level distributions are sharply concentrated, effectively boosting its "self-confidence" under the prior policy.

\noindent\textbf{Optimal Policy of the Reward Function $r_{\text{Traj}}$.} For the trajectory-level entropy-based reward \( r_{\text{Traj}} \), it instantiates our unified framework with token-level granularity \(\mathcal{I}=\{1, 2, ..., |y|\}\), anchor distribution \( q=\{\delta^t\}^{|y|}_{t=1} \), model distribution $\pi=\{\pi_{\theta}^t\}^{|y|}_{t=1}$, sign factor \( \sigma=-1 \), and transformation \( \psi(z)=z \).  
For a given input \( x \), the model’s predictive distribution is evaluated at every token.  
With \( \sigma=-1 \), the closer the distribution is to the one-hot reference \( \delta^t \) (i.e., the higher the model’s confidence in each ground-truth token), the larger the reward \( r_{\text{Traj}}(x,y) \).  
Hence, after one policy-update step, the optimal probability \( \pi_{\theta}(y|x) \) increases for such high-confidence trajectories, and decreases otherwise.  
Thus, \( r_{\text{Traj}} \) encourages the model to generate sequences that already enjoy high probability under the prior policy.

The trajectory-level reward is defined as:

\begin{equation}
r_{\text{Traj}}(x,y)=\frac{1}{|y|}\sum_{t=1}^{|y|}\log\pi_{\theta}(y_t\mid x,y_{<t})
=\frac{1}{|y|}\log\pi_{\theta}(y\mid x).
\end{equation}

Within the KL-regularized RL framework, the one-step optimal policy becomes:

\begin{equation}
\pi_{\theta}(y|x)\propto\pi_{\text{ref}}(y|x)\exp\!\left(\frac{1}{\beta\,|y|}\log\pi_{\theta}(y\mid x)\right)
=\pi_{\text{ref}}(y|x)\!\cdot\!\bigl[\pi_{\theta}(y\mid x)\bigr]^{\frac{1}{\beta|y|}}.
\end{equation}

Consequently, whenever the model assigns a higher prior probability to a sequence \( y \), the weighted product term is amplified, thereby increasing its normalized probability \( \pi_{\theta}(y|x) \).  
Therefore, the trajectory-level entropy reward boosts the probability of sequences that are already likely under the current policy \( \pi_{\theta} \).

\noindent\textbf{Optimal Policy of the Reward Function $r_{\text{Prob}}$.} For the probability-based reward function \( r_{\text{Prob}} \), it instantiates our unified framework with token-level granularity \(\mathcal{I}=\{1, 2, ..., |y|\}\), anchor distribution \( q=\{\delta^t\}^{|y|}_{t=1} \), model distribution $\pi=\{\pi_{\theta}^t\}^{|y|}_{t=1}$, sign factor \( \sigma=-1 \), and transformation \( \psi(z)=\exp(|\mathcal{I}|\cdot z) \).  
For a given input \( x \), the model’s predictive distribution is evaluated at every token.  
With \( \sigma=-1 \), the closer the distribution is to the one-hot reference \( \delta^t \) (i.e., the higher the model’s confidence in each ground-truth token), the larger the reward \( r_{\text{Prob}}(x,y) \) will be.  
Hence, after one policy-update step, the optimal probability \( \pi_{\theta}(y|x) \) increases for such high-confidence trajectories, and decreases otherwise.  
Thus, \( r_{\text{Prob}} \) encourages the model to generate sequences that already enjoy high probability under the prior policy.

The probability-based reward is defined as:

\begin{equation}
r_{\text{Prob}}(x,y)=\prod_{t=1}^{|y|}\pi_{\theta}(y_t\mid x,y_{<t})=\pi_{\theta}(y\mid x).
\end{equation}

Within the KL-regularized RL framework, the one-step optimal policy becomes:

\begin{equation}
\pi_{\theta}(y|x)\propto\pi_{\text{ref}}(y|x)\exp\!\left(\frac{1}{\beta}\pi_{\theta}(y\mid x)\right).
\end{equation}

Consequently, whenever the model assigns a high joint probability to a sequence \( y \), the exponential weight is amplified, thereby increasing its normalized probability \( \pi_{\theta}(y|x) \).  
The probability-product reward thus directly reinforces sequences that are already likely under the current policy, enhancing the model’s preference for "high-likelihood" trajectories.

\noindent\textbf{Optimal Policy of the Reward Function $r_{\text{EMPO}}$.} For the answer-space probability-distribution reward \( r_{\text{EMPO}} \) employed by the EMPO algorithm, it instantiates our unified framework with answer-level granularity $\mathcal{I} = \{\mathcal{A}\}$, anchor distribution \( q=\delta^A \), model distribution \( \pi=\pi^A_{\theta} \), \( \sigma=-1 \), and transformation \( \psi(z)=\exp(z) \).  
For a given input \( x \), multiple roll-outs are used to estimate the current policy’s distribution over the answer space.  
With \( \sigma=-1 \), the closer this distribution is to the one-hot reference \( \delta^A \) (i.e., the more probability mass is assigned to the extracted answer), the larger the reward \( r_{\text{EMPO}}(x,y) \) will be.  
Hence, after one policy-update step, the optimal probability \( \pi_{\theta}(y|x) \) increases for sequences that endorse the high-probability answer, while it decreases for all others.  
Maximizing \( r_{\text{EMPO}} \) is therefore equivalent to driving the model to become more decisive at the answer level, thereby improving the consistency and determinism of the generated outputs.

Formally, the reward is defined as:

\begin{equation}
r_{\text{EMPO}}(x,y)=\pi_{\theta}(\text{ans}(y)\mid x),
\quad\text{where}\ \pi_{\theta}(\text{ans}(y)\mid x)=\sum_{\text{ans}(y')=\text{ans}(y)}\pi_{\theta}(y'\mid x).
\end{equation}

Within the KL-regularised RL framework, the one-step optimal policy is:

\begin{equation}
\pi_{\theta}(y\mid x)\propto\pi_{\text{ref}}(y\mid x)\,\exp\!\left(\dfrac{\pi_{\theta}(\text{ans}(y)\mid x)}{\beta}\right).
\label{eq:EMPO_optimal_policy}
\end{equation}

As evidenced by \Cref{eq:EMPO_optimal_policy}, a single EMPO update re-weights each sequence by a factor of \( \exp\!\bigl(\pi_{\theta}(\text{ans}(y)\mid x)/\beta\bigr) \) that depends on the current answer-level probability.  
After normalization, answers that already enjoy high probability under the prior policy gain additional mass, whereas low-probability answers suffer a decrease.  
Consequently, the optimal policy at each step systematically shifts the overall probability mass toward the high-probability region of the prior policy.

\section{Details for \Cref{sec:rise}}

\begin{figure*}[!t]
    % % \vspace{-30pt}
    \centering
    \includegraphics[width=.6\linewidth]{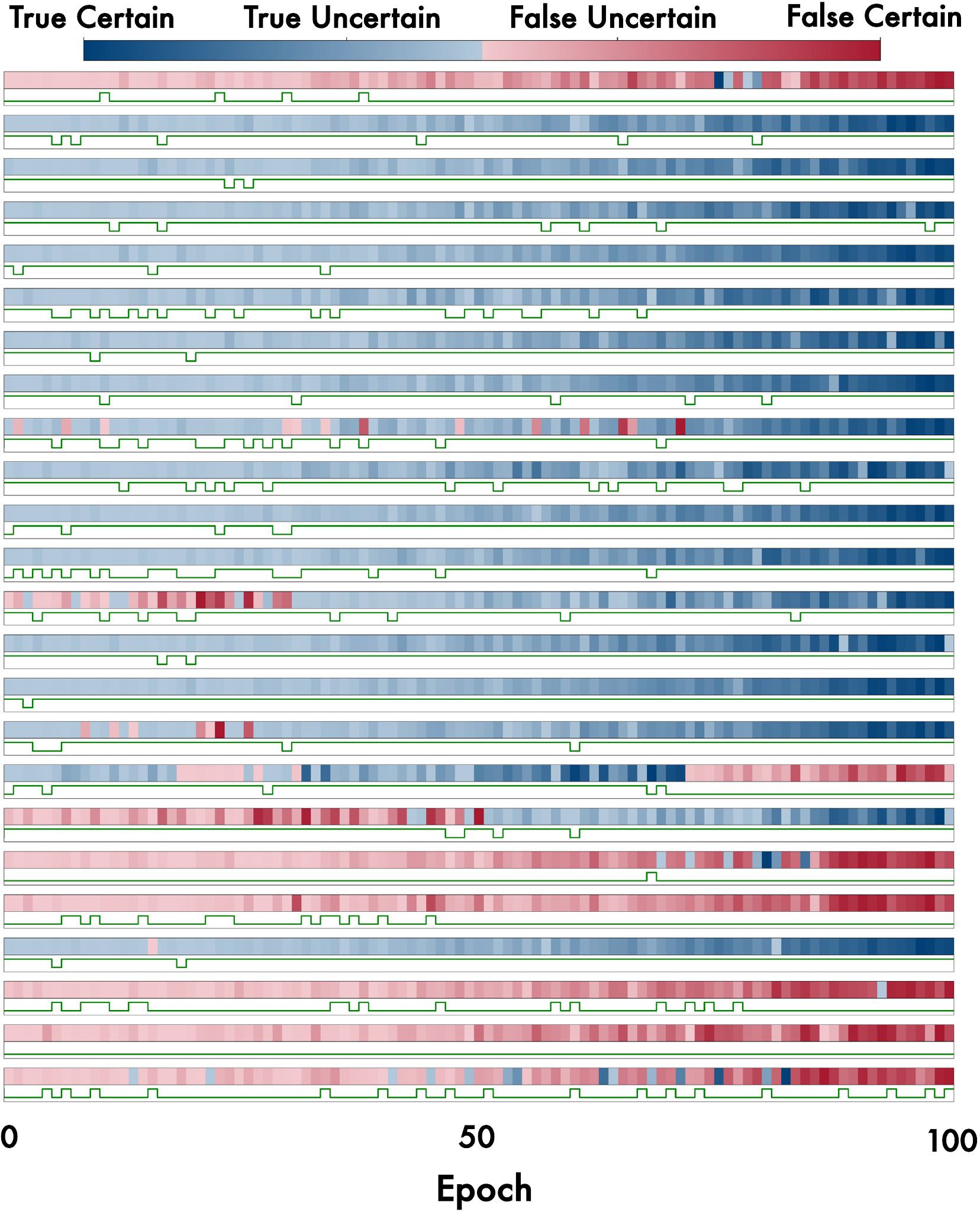}
    % % \vspace{-15pt}
    \caption{Examples of per-problem training dynamics from MATH-500.}
    \label{fig:heatmap_step_all_label}
    % % \vspace{-15pt}
\end{figure*}

\subsection{Experimental Setup}
\label{app:experimental_setup}

\begin{table}[!t]
    % % % \vspace{-30pt}
    \centering
    \renewcommand{\arraystretch}{1.3}
    \caption{Default hyperparameters for training.}
    \resizebox{\textwidth}{!}{
    \begin{tabular}{cccccccccc}
    % \begin{tabular}{@{}ccccccc@{}}
    \toprule
    \textbf{\makecell[c]{Advantage\\ Estimator}} &
    \textbf{\makecell[c]{Training\\ Temperature}} & \textbf{\makecell[c]{Global\\ Batch Size}} & \textbf{\makecell[c]{Mini\\ Batch Size}} & \textbf{\makecell[c]{Rollout\\ Number}} & \textbf{Regularization} & \textbf{\makecell[c]{Max Prompt\\ Length}} & \textbf{\makecell[c]{Max Response\\ Length}} & \textbf{\makecell[c]{Learning\\ Rate}} & \textbf{Epoch} \\ \midrule
    GRPO & 1.0 & 64 & 64 & 8 & \makecell[c]{w/o KL/Entropy} & 1024 & 7168 & 1e-6 & 1  \\
    \toprule
    
    \end{tabular}
    }
    % % \vspace{-10pt}
    \label{tab:default_hyper}
    % % \vspace{-10pt}
\end{table}

\noindent\textbf{Implementation Details.}
All experiments are conducted using the veRL framework~\citep{sheng2025hybridflow} with the GRPO algorithm. Unless stated otherwise, we utilize the default configuration outlined in \Cref{tab:default_hyper}. We implement five representative intrinsic rewards by customizing the \texttt{RewardManager} module of VeRL, following the reward formulations in \Cref{tab:certainty_based_formulas} and \Cref{tab:ensemble_based_formulas}:
\begin{itemize}[topsep=0pt, itemsep=0pt, leftmargin=18pt]
    \item \textbf{Ensemble-Based Reward Estimators:} Majority Voting
    \item \textbf{Certainty-Based Reward Estimators:} Self-Certainty, Token-Level Entropy, Trajectory-Level Entropy, and Probability
\end{itemize}
% \xiusi{We may want to briefly explain why we did not include heuristic-based approaches?}

% \noindent\textbf{Evaluation Protocol.} 
% We evaluate on three challenging mathematics benchmarks: AIME 2024~\citep{li2024numinamath}, AIME 2025~\citep{balunovic2025matharena}, and AMC 2023~\citep{li2024numinamath}.
% % MATH-500~\citep{hendrycks2021measuring}, Minerva Math~\citep{lewkowycz2022solving}, and OlympiadBench~\citep{he-etal-2024-olympiadbench}.
% Following standard practice, we generate 32 solutions per problem using a temperature of 0.6 and a top-p value of 0.95, and report the mean accuracy at 32 solutions (mean@32).

\noindent\textbf{Training Dynamics Monitoring.} 
To monitor reward hacking and validate our theoretical predictions from \Cref{sec:theory}, we implement specialized metrics to track the evolution of pseudo-rewards and their alignment with ground truth. These metrics help identify when and how these intrinsic methods transition from beneficial sharpening to pathological collapse.

\begin{itemize}[topsep=2pt, partopsep=0pt, leftmargin=12pt, itemsep=2pt]
\item \textbf{Ensemble-Based Metrics:} For methods using majority voting, we separately track the accuracy of the chosen label and the accuracy of the rewards it generates.
\begin{itemize}
    \item \textit{Label Accuracy}: Prompt-level accuracy of majority-voted answers against ground truth, measuring ensemble quality
    \item \textit{Reward Accuracy}: Sample-level agreement between pseudo rewards and oracle rewards, capturing ``lucky hits''~\citep{zuo2025ttrl} where individual rewards align despite incorrect majority votes
    \item \textit{Ground Truth Reward}: Average oracle reward (supervised baseline), computed using actual correctness
    \item \textit{Majority Voting Reward}: Average pseudo reward from majority voting, the divergence from \textit{Ground Truth Reward} indicates reward hacking
\end{itemize}
\item \textbf{Certainty-Based Metrics:} For certainty-based methods, we measure the correlation between this proxy reward and the actual correctness.
\begin{itemize}
\item \textit{Label Accuracy}: Ground-truth accuracy of the highest-confidence response per prompt, testing whether maximum certainty implies correctness
% \item \textit{Point-Biserial Correlation}: Point-biserial correlation between pseudo reward and binary correctness, quantifying the fundamental assumption that confidence predicts accuracy
\end{itemize}
\end{itemize}

These metrics collectively diagnose (1) pseudo-label quality degradation via \textit{Label Accuracy} and (2) reward signal corruption via the gap between \textit{Majority Voting Reward} and \textit{Ground Truth Reward}. Mathematical definitions and implementation details are provided in \Cref{app:metrics}.

% These metrics collectively diagnose three critical phenomena: (1) pseudo-label quality degradation via \textit{Label Accuracy}, (2) reward signal corruption via the gap between \textit{Majority Voting Reward} and \textit{Ground Truth Reward}, and (3) confidence miscalibration via \textit{Point-Biserial Correlation}. Mathematical definitions and implementation details are provided in \Cref{app:metrics}.

\subsection{Calculation of Training Dynamics}
\label{app:metrics}

We provide mathematical definitions for the metrics used to monitor training dynamics. These metrics diagnose reward hacking and validate theoretical predictions about distribution sharpening.

\subsubsection{Notation}

Let $\mathcal{D} = \{(x_i, a_i^*)\}_{i=1}^{M}$ denote the training dataset with $M$ prompts, where $x_i$ is the $i$-th prompt and $a_i^*$ is its ground-truth answer. For each prompt $x_i$, we generate $N$ rollout responses $\{y_{i,j}\}_{j=1}^{N}$ from the current policy $\pi_\theta$, where each response $y_{i,j}$ contains a trajectory and an extracted answer $\text{ans}(y_{i,j})$.

Define the following:
\begin{itemize}[topsep=0pt, itemsep=0pt, leftmargin=18pt]
    \item $\mathbf{1}[\cdot]$: Indicator function returning 1 if the condition is true, 0 otherwise
    \item $\text{maj}(x_i)$: Majority-voted answer for prompt $x_i$, computed as $\arg\max_{a} \sum_{j=1}^{N} \mathbf{1}[\text{ans}(y_{i,j}) = a]$
    \item $r_{\text{gt}}(y_{i,j})$: Ground-truth reward for response $y_{i,j}$, equals $\mathbf{1}[\text{ans}(y_{i,j}) = a_i^*]$
    \item $r_{\text{mv}}(y_{i,j})$: Majority-voting pseudo-reward, equals $\mathbf{1}[\text{ans}(y_{i,j}) = \text{maj}(x_i)]$
    \item $r_{\text{cert}}(y_{i,j})$: Certainty-based reward (e.g., self-certainty, entropy) for response $y_{i,j}$
\end{itemize}

\subsubsection{Ensemble-Based Metrics}

\paragraph{Label Accuracy}
Measures the prompt-level accuracy of majority-voted answers:
\begin{equation}
\text{Label Accuracy} = \frac{1}{M} \sum_{i=1}^{M} \mathbf{1}[\text{maj}(x_i) = a_i^*].
\end{equation}
This metric ranges from 0 to 1, where 1 indicates perfect pseudo-label generation.

\paragraph{Reward Accuracy}
Quantifies sample-level agreement between pseudo-rewards and oracle rewards:
\begin{equation}
\text{Reward Accuracy} = \frac{1}{M \cdot N} \sum_{i=1}^{M} \sum_{j=1}^{N} \mathbf{1}[r_{\text{mv}}(y_{i,j}) = r_{\text{gt}}(y_{i,j})].
\end{equation}
This captures ``lucky hits'' where individual rewards are correct even when the majority vote is wrong. For example, if the majority vote is incorrect but a minority response is correct, that response still receives the appropriate (zero) pseudo-reward.

\paragraph{Ground Truth Reward}
Average oracle reward across all generated responses:
\begin{equation}
\text{Ground Truth Reward} = \frac{1}{M \cdot N} \sum_{i=1}^{M} \sum_{j=1}^{N} r_{\text{gt}}(y_{i,j}).
\end{equation}
This represents the true quality of generated responses and serves as the supervised baseline.

\paragraph{Majority Voting Reward}
Average pseudo-reward from majority voting:
\begin{equation}
\text{Majority Voting Reward} = \frac{1}{M \cdot N} \sum_{i=1}^{M} \sum_{j=1}^{N} r_{\text{mv}}(y_{i,j}).
\end{equation}
The divergence between this metric and Ground Truth Reward indicates reward hacking: when the model learns to maximize pseudo-rewards at the expense of actual correctness.

\subsubsection{Certainty-Based Metrics}

\paragraph{Label Accuracy}
For certainty-based methods, we identify the highest-confidence response per prompt and measure its accuracy:
\begin{equation}
\text{Label Accuracy} = \frac{1}{M} \sum_{i=1}^{M} \mathbf{1}[\text{ans}(y_{i,j^*_i}) = a_i^*],
\end{equation}
where $j^*_i = \arg\max_{j \in \{1,\ldots,N\}} r_{\text{cert}}(y_{i,j})$ is the index of the highest-confidence response for prompt $x_i$.

% \paragraph{Point-Biserial Correlation}
% Measures the correlation between continuous certainty scores and binary correctness:
% \begin{equation}
% \rho_{pb} = \frac{\bar{r}_1 - \bar{r}_0}{s_r} \cdot \sqrt{\frac{n_1 n_0}{n^2}},
% \end{equation}
% where:
% \begin{itemize}[topsep=0pt, itemsep=0pt, leftmargin=18pt]
%     \item $n = M \cdot N$ is the total number of responses
%     \item $n_1 = \sum_{i,j} r_{\text{gt}}(y_{i,j})$ is the number of correct responses
%     \item $n_0 = n - n_1$ is the number of incorrect responses
%     \item $\bar{r}_1 = \frac{1}{n_1} \sum_{i,j: r_{\text{gt}}(y_{i,j})=1} r_{\text{cert}}(y_{i,j})$ is the mean certainty for correct responses
%     \item $\bar{r}_0 = \frac{1}{n_0} \sum_{i,j: r_{\text{gt}}(y_{i,j})=0} r_{\text{cert}}(y_{i,j})$ is the mean certainty for incorrect responses
%     \item $s_r = \sqrt{\frac{1}{n-1} \sum_{i,j} (r_{\text{cert}}(y_{i,j}) - \bar{r})^2}$ is the standard deviation of all certainty scores
%     \item $\bar{r} = \frac{1}{n} \sum_{i,j} r_{\text{cert}}(y_{i,j})$ is the mean of all certainty scores
% \end{itemize}

% The correlation $\rho_{pb} \in [-1, 1]$ quantifies the relationship between confidence and correctness. Positive values indicate that higher certainty correlates with correctness (desired behavior), while values near zero suggest certainty is uninformative, and negative values indicate miscalibration.

\subsection{Hyperparameter Tuning}
\label{app:hyperparameter}

\noindent\textbf{Setup.} We study four hyperparameters, including training temperature, mini-batch size, KL divergence regularization, and rollout count, that directly influence performance of intrinsic reward training. We vary one parameter at a time while keeping others fixed at baseline values (see \Cref{tab:default_hyper}).

\subsubsection{Majority Voting}

\begin{figure*}[!t]
    % % \vspace{-30pt}
    \centering
    \includegraphics[width=1\linewidth]{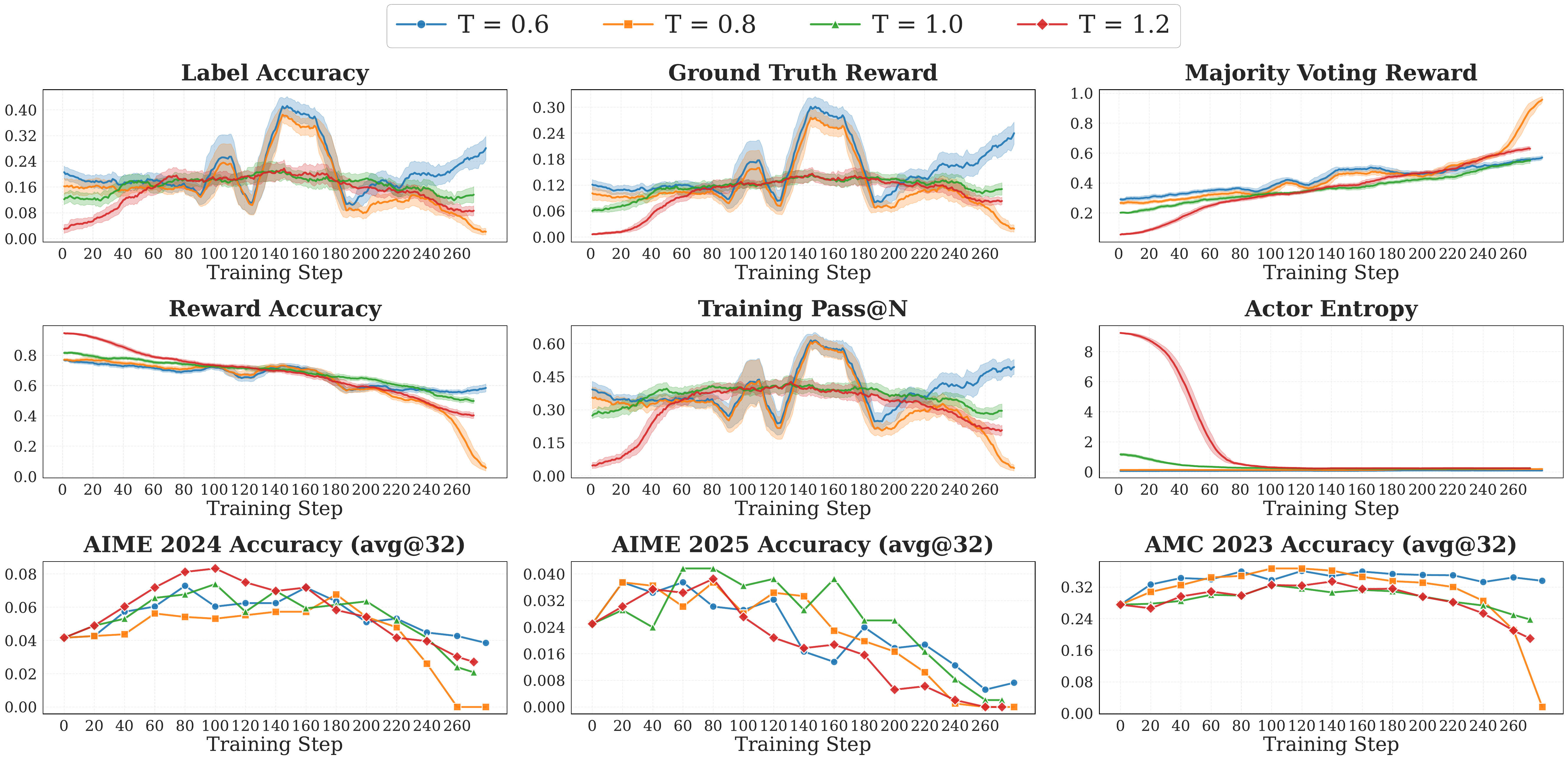}
    % \vspace{-20pt}
    \caption{Effect of training temperature for Majority Voting method.}
    % Higher temperatures increase exploration but may compromise stability, with $T=1.0$ achieving optimal balance across validation benchmarks.
    \label{fig:tune_temperature}
    % \vspace{-5pt}
\end{figure*}

\noindent\textbf{Training Temperature.} Temperature directly controls exploration during rollout generation and affects the quality of pseudo-labels via voting diversity. From our convergence analysis in \Cref{theorem:mv_convergence}, lower temperature reduces the effective $\beta$ in the KL regularization term, accelerating convergence. As shown in \Cref{fig:tune_temperature}, low $T\!\in\!\{0.6,0.8\}$ quickly sharpens logits, causing unstable \textit{Label Accuracy}, consistent with premature convergence to an early majority that may be incorrect. Higher temperature ($T=1.2$) maintains stability longer by preserving exploration, but the increased noise reduces peak performance. We find $T=1.0$ provides optimal balance, showing steady early gains with delayed degradation.

\begin{figure*}[!t]
    % % \vspace{-30pt}
    \centering
    \includegraphics[width=1\linewidth]{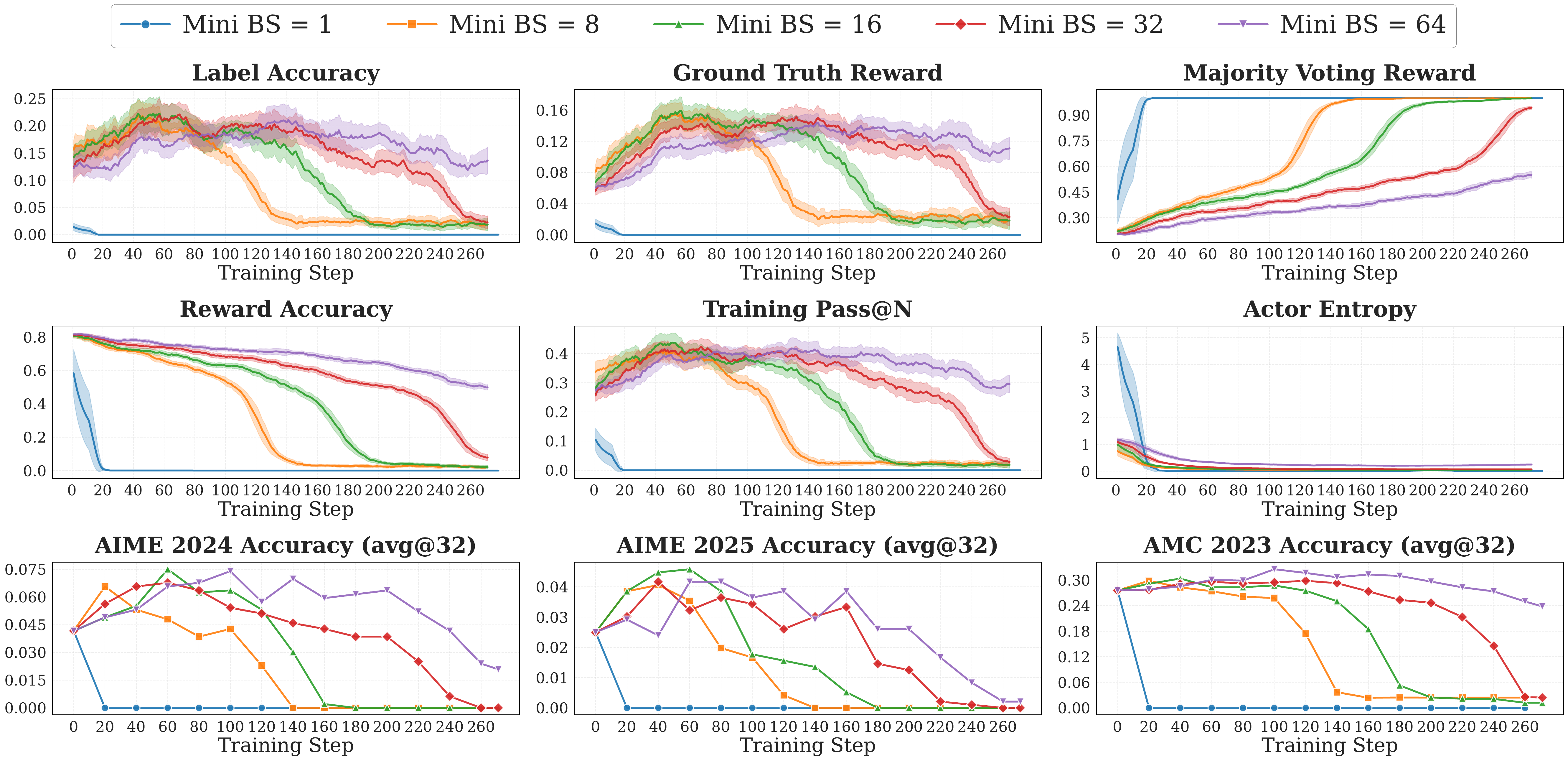}
    % \vspace{-20pt}
    \caption{Effect of mini-batch size for Majority Voting method.}
    % Larger mini-batch sizes provide more stable training and slower reward hacking, with pure on-policy training (mini-batch size = 64) achieving the best stability-performance trade-off.}
    \label{fig:tune_mbs}
    % \vspace{-5pt}
\end{figure*}

\noindent\textbf{Mini-batch Size.} This parameter controls the on-policy nature of updates, directly affecting the validity of our optimal policy assumptions. Our theoretical derivation in \Cref{eq:MV_optimal_detailed} assumes rewards are computed under the current policy $\pi_\theta$. Small mini-batches violate this assumption through reward staleness: pseudo-rewards computed under $\pi_\theta$ become misaligned when applied to samples from $\pi_{\theta_{\text{old}}}$. As shown in \Cref{fig:tune_mbs}, mini-batch size $1$ drives rapid collapse within $20$ steps, while pure on-policy training (mini-batch $= 64$, matching global batch size) provides maximum stability. The intermediate sizes ($16$–$32$) show gradual improvement, confirming that maintaining policy-reward alignment is crucial for stable convergence.

\begin{figure*}[!t]
    % % \vspace{-30pt}
    \centering
    \includegraphics[width=1\linewidth]{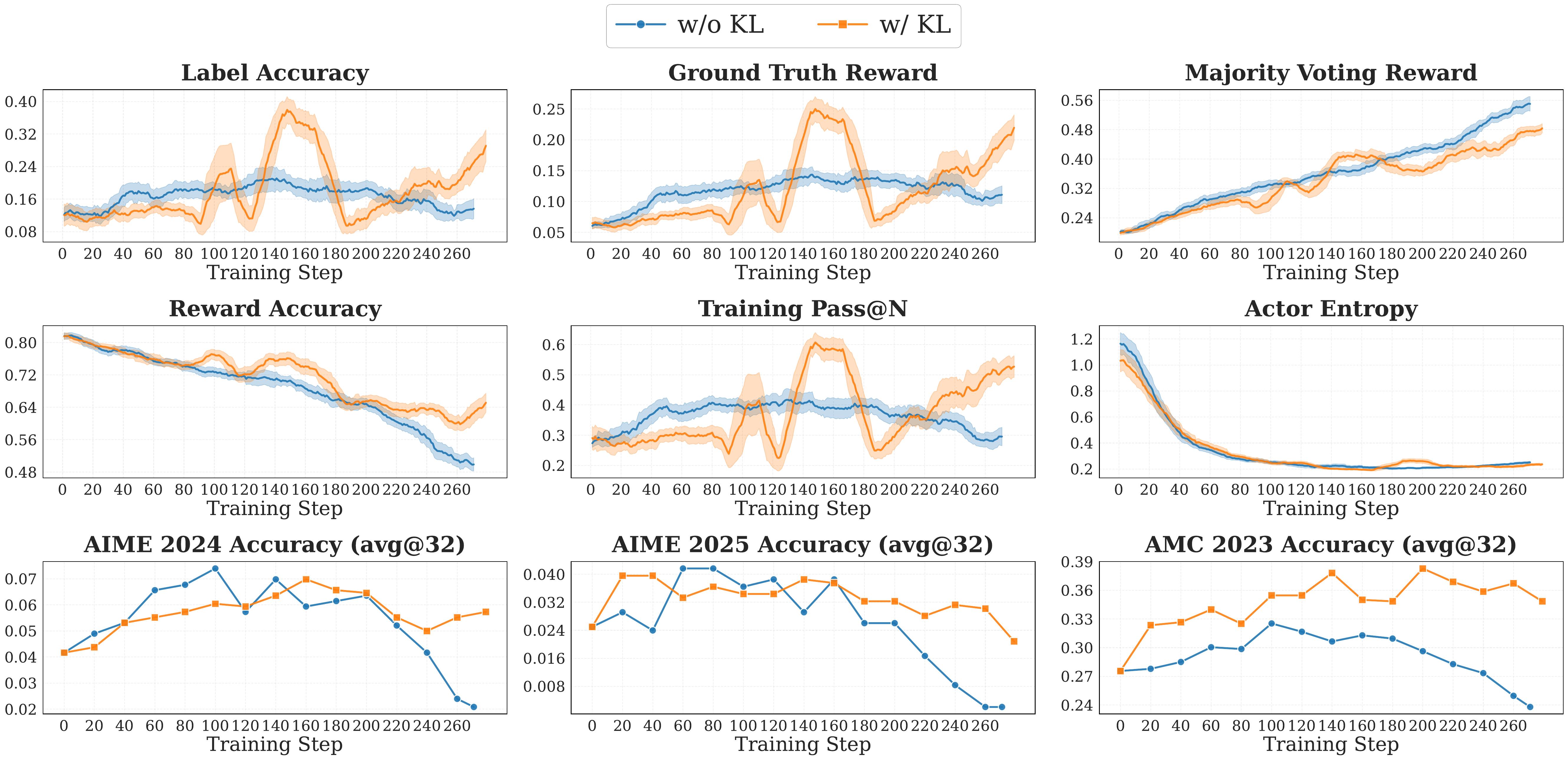}
    % \vspace{-20pt}
    \caption{Effect of KL divergence regularization for Majority Voting method.}
    % While KL regularization shows marginally higher validation scores, it introduces training instability without significant performance gains.}
    \label{fig:tune_kl}
    % \vspace{-5pt}
\end{figure*}

\noindent\textbf{KL Regularization.} Our theoretical analysis suggests that KL regularization should slow convergence by increasing the effective $\beta$ parameter in \Cref{eq:optimal_policy_general}. However, empirical results in \Cref{fig:tune_kl} show that adding KL regularization ($\beta=0.005$) yields only marginal benefits: small early gains but increased training variance and minimal delay in collapse ($\sim40$ steps). This discrepancy arises because intrinsic rewards create competing optimization pressures, where the intrinsic signal drives sharpening while KL pulls toward the reference policy. Rather than smoothly balancing these forces, the optimization oscillates between them, increasing variance without providing durable stability. The marginal gains do not justify the additional memory overhead and training instability.

\begin{figure*}[!t]
    % % \vspace{-30pt}
    \centering
    \includegraphics[width=1\linewidth]{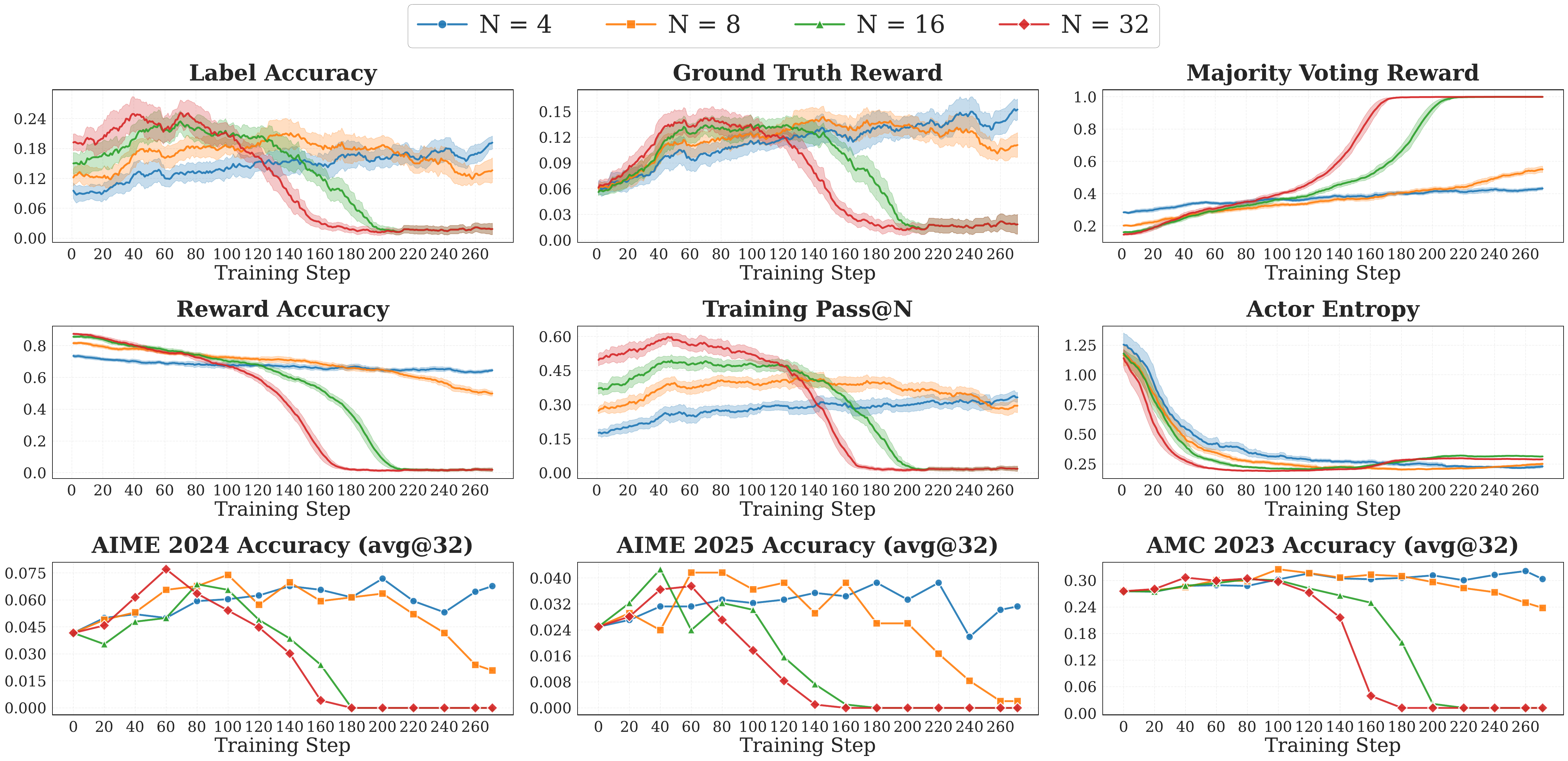}
    % \vspace{-20pt}
    \caption{Effect of rollout number for Majority Voting method.}
    % Higher rollout counts ($N \geq 16$) accelerate reward hacking and training collapse, while $N=8$ balances sample diversity with training stability.}
    \label{fig:tune_rollout_n}
    % \vspace{-5pt}
\end{figure*}

\noindent\textbf{Number of Rollouts.} The rollout count $N$ affects both vote reliability and signal strength. While more rollouts improve statistical reliability of the majority vote, they also amplify the majority signal strength. From \Cref{eq:MV_optimal_detailed}, each update amplifies majority probability by factor $e^{1/\beta}$. With more rollouts, this majority becomes more confident, accelerating convergence. \Cref{fig:tune_rollout_n} shows this effect: $N=32$ collapses within $180$ steps, $N=16$ within $220$ steps, while $N\leq8$ remains stable over the full epoch. Although $N=4$ shows competitive performance in some metrics, we recommend $N=8$ as it provides better statistical reliability for the voting mechanism while maintaining reasonable convergence control. The slight performance difference suggests that for this specific experimental setup, the trade-off between reliability and stability favors slightly smaller $N$, but $N=8$ offers more robust behavior across diverse problem types.

\subsubsection{Certainty-Based Methods}

% =================================================================

\begin{figure*}[!t]
    % % \vspace{-30pt}
    \centering
    \includegraphics[width=1\linewidth]{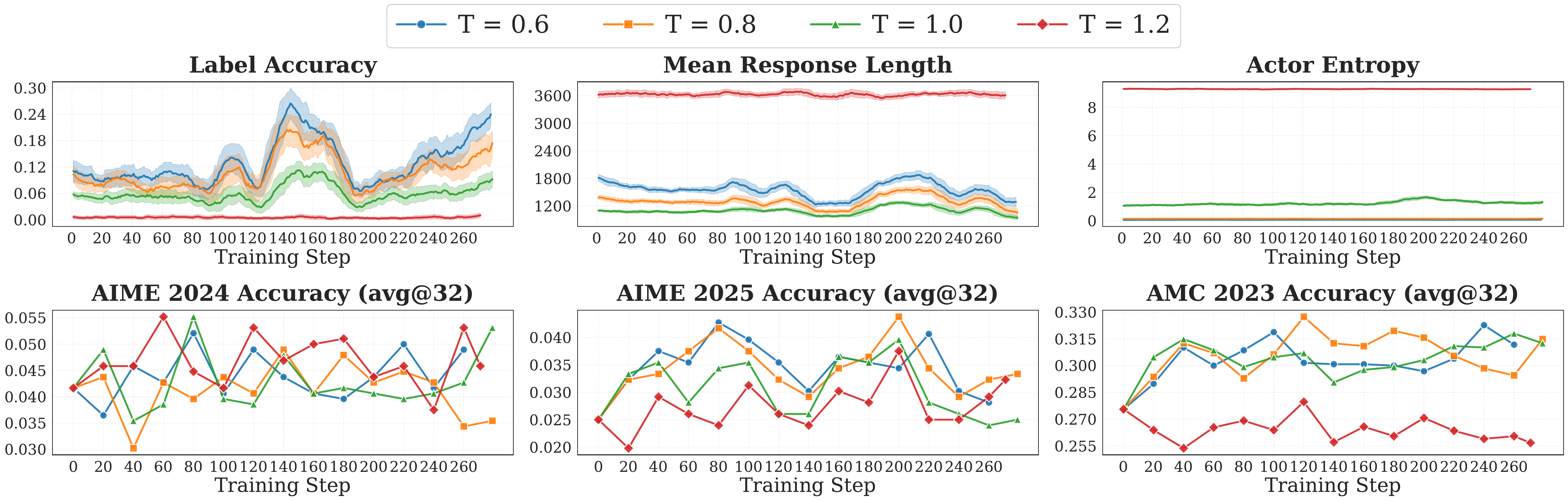}
    % \vspace{-20pt}
    \caption{Effect of training temperature on Self-Certainty performance. Note that Point-Biserial Correlation is replaced with Mean Response Length due to Self-Certainty's scoring characteristics.}
    \label{fig:tune_temperature_sc}
    % \vspace{-5pt}
\end{figure*}

\begin{figure*}[!t]
    % % \vspace{-30pt}
    \centering
    \includegraphics[width=1\linewidth]{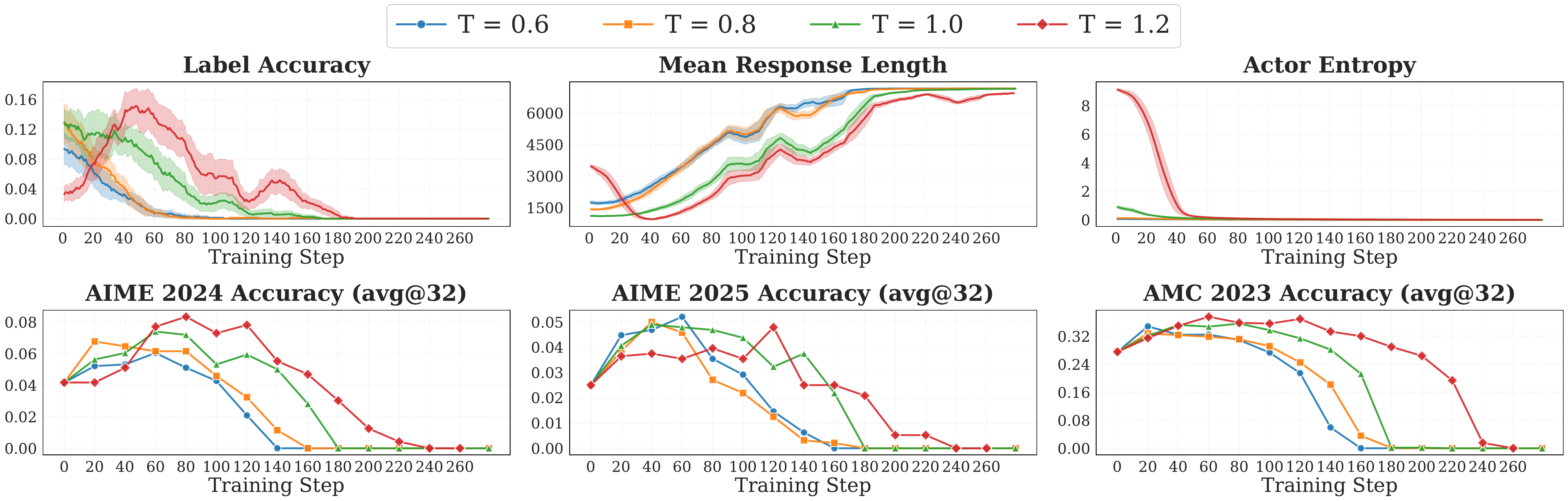}
    % \vspace{-20pt}
    \caption{Effect of training temperature on Token-Level Entropy performance.}
    \label{fig:tune_temperature_token_ent}
    % \vspace{-5pt}
\end{figure*}

\begin{figure*}[!t]
    % % \vspace{-30pt}
    \centering
    \includegraphics[width=1\linewidth]{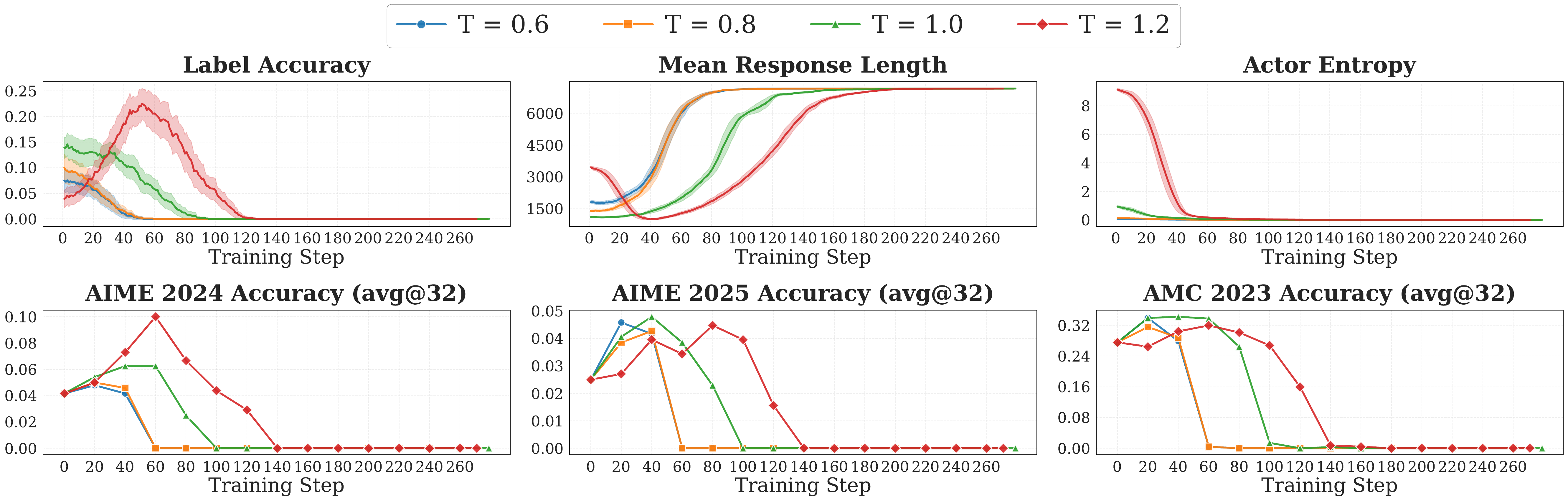}
    % \vspace{-20pt}
    \caption{Effect of training temperature on Trajectory-Level Entropy performance.}
    \label{fig:tune_temperature_traj_ent}
    % \vspace{-5pt}
\end{figure*}

\begin{figure*}[!t]
    % % \vspace{-30pt}
    \centering
    \includegraphics[width=1\linewidth]{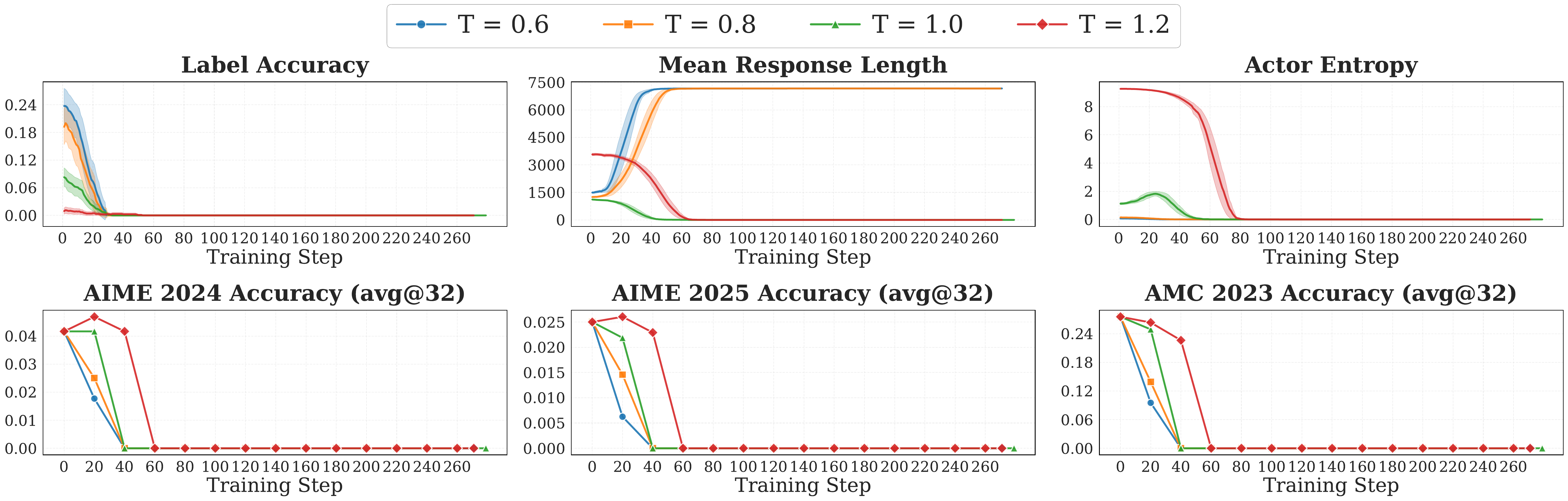}
    % \vspace{-20pt}
    \caption{Effect of training temperature on Probability-based certainty performance.}
    \label{fig:tune_temperature_prob}
    % \vspace{-5pt}
\end{figure*}

\noindent\textbf{Training Temperature.} Temperature effects on certainty-based methods reveal distinct behavioral patterns compared to ensemble-based approaches. Unlike Majority Voting, certainty-based methods generally benefit from higher exploration temperatures, with notable method-specific variations in optimal configurations and convergence characteristics.

Results in \Cref{fig:tune_temperature_token_ent,fig:tune_temperature_traj_ent,fig:tune_temperature_prob} demonstrate that higher temperature ($T=1.2$) significantly delays model collapse across Token-Level Entropy, Trajectory-Level Entropy, and Probability methods. Higher temperatures initially maintain elevated \textbf{Actor Entropy}, facilitating extended exploration phases with gradual improvements across validation benchmarks. Importantly, these methods also exhibit relatively higher \textbf{Point-Biserial Correlation} values at $T=1.2$, indicating stronger alignment between certainty estimates and actual correctness—a crucial property for effective uncertainty-based reward assignment.

However, \Cref{fig:tune_temperature_sc} reveals that Self-Certainty exhibits contrasting behavior. Higher temperature ($T=1.2$) leads to excessive exploration without convergence, maintaining persistently high \textbf{Actor Entropy} while achieving lower validation scores and \textbf{Label Accuracy}. The moderate temperature $T=1.0$ provides more stable and superior performance for Self-Certainty. This divergence suggests that while different certainty-based methods converge toward similar sharp distributions, they exhibit distinct convergence rates requiring method-specific temperature tuning. Among all certainty-based approaches, Token-Level and Trajectory-Level Entropy methods demonstrate the greatest benefits from higher temperature exploration, likely due to their more robust entropy-based uncertainty estimation mechanisms.

% =================================================================
\begin{figure*}[!t]
    % % \vspace{-30pt}
    \centering
    \includegraphics[width=1\linewidth]{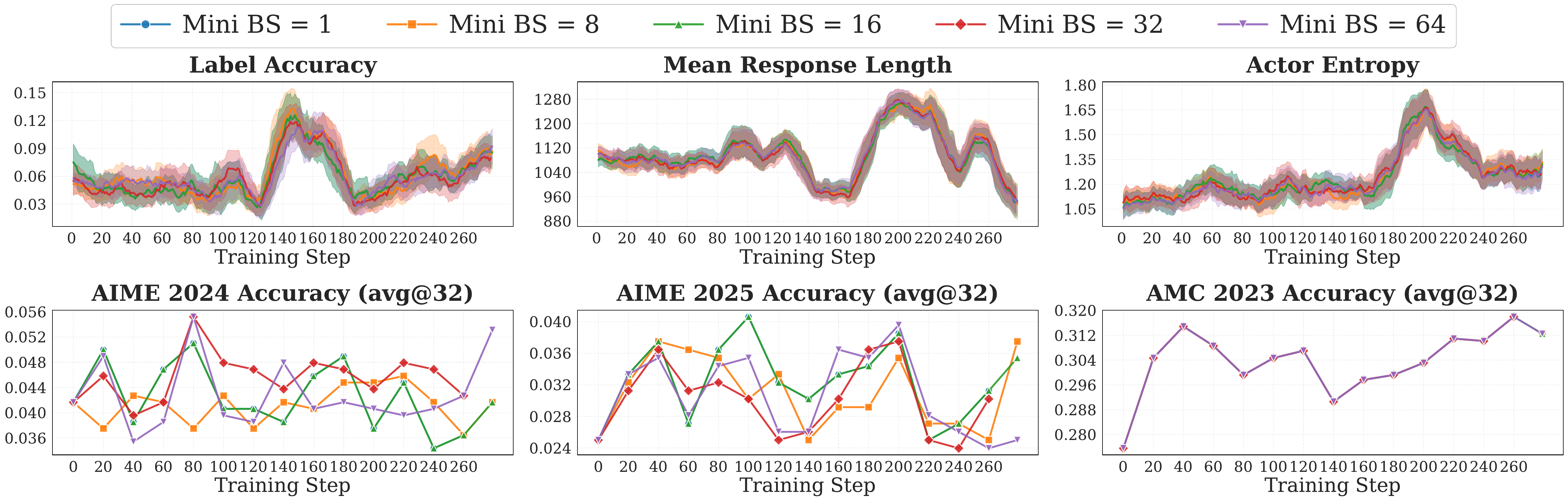}
    % \vspace{-20pt}
    \caption{Effect of mini-batch size on Self-Certainty performance.}
    \label{fig:tune_mbs_sc}
    % \vspace{-5pt}
\end{figure*}

\begin{figure*}[!t]
    % % \vspace{-30pt}
    \centering
    \includegraphics[width=1\linewidth]{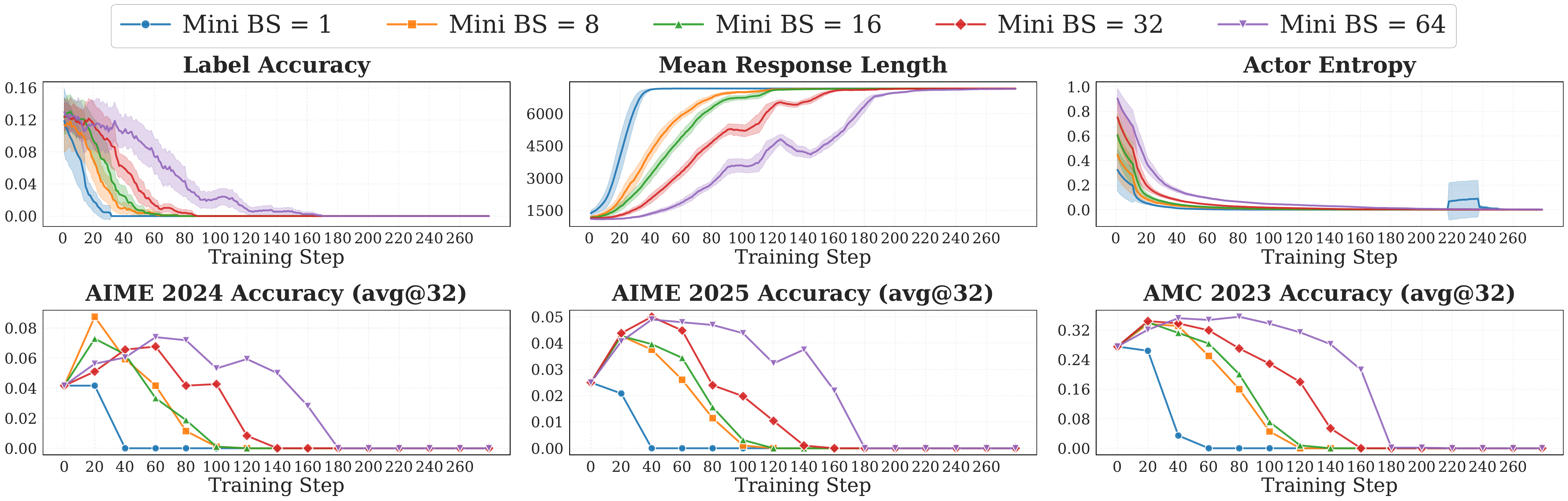}
    % \vspace{-20pt}
    \caption{Effect of mini-batch size on Token-Level Entropy performance.}
    \label{fig:tune_mbs_token_ent}
    % \vspace{-5pt}
\end{figure*}

\begin{figure*}[!t]
    % % \vspace{-30pt}
    \centering
    \includegraphics[width=1\linewidth]{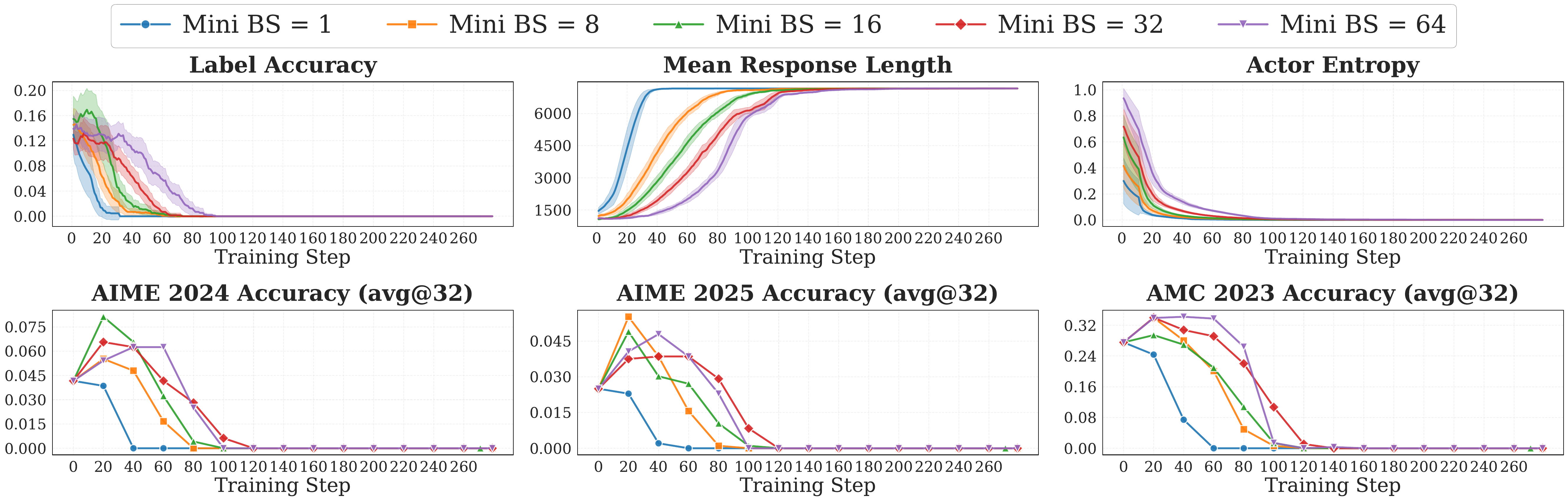}
    % \vspace{-20pt}
    \caption{Effect of mini-batch size on Trajectory-Level Entropy performance.}
    \label{fig:tune_mbs_traj_ent}
    % \vspace{-5pt}
\end{figure*}

\begin{figure*}[!t]
    % % \vspace{-30pt}
    \centering
    \includegraphics[width=1\linewidth]{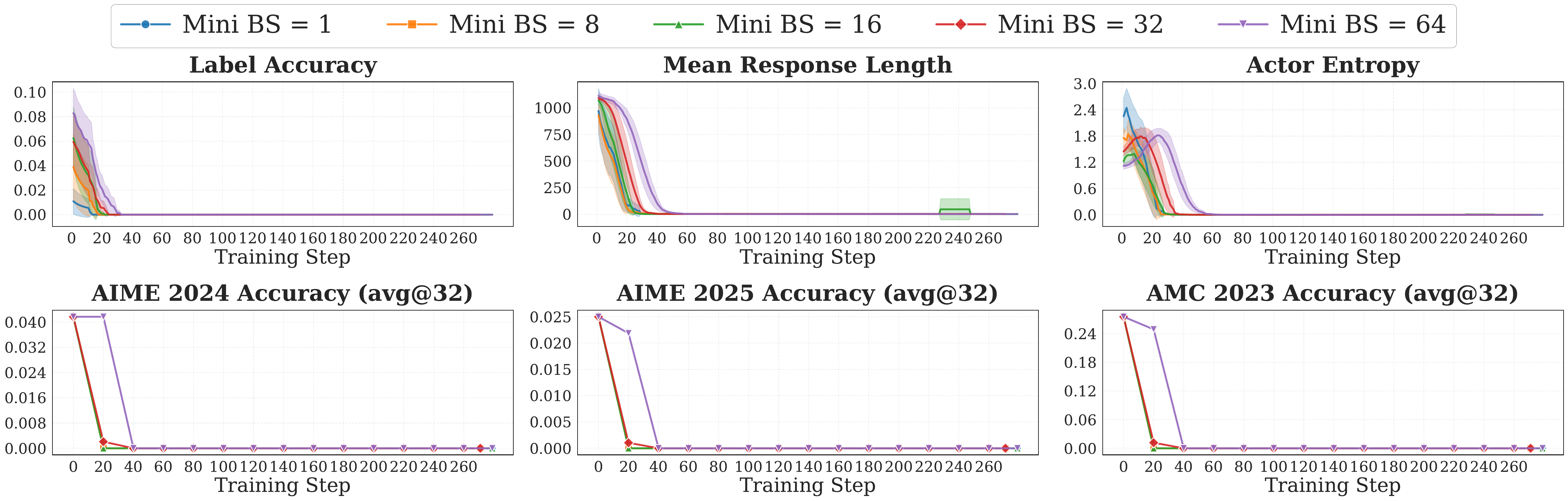}
    % \vspace{-20pt}
    \caption{Effect of mini-batch size on Probability performance.}
    \label{fig:tune_mbs_prob}
    % \vspace{-5pt}
\end{figure*}

\noindent\textbf{Mini-Batch Size.} Mini-batch size effects on certainty-based methods largely parallel those observed in Majority Voting, confirming that on-policy ratio critically affects training stability regardless of the underlying reward computation mechanism. However, method-specific sensitivities reveal important distinctions in robustness to off-policy learning.

\Cref{fig:tune_mbs_token_ent,fig:tune_mbs_traj_ent,fig:tune_mbs_prob} consistently demonstrate that larger mini-batch sizes prevent premature model collapse across Token-Level Entropy, Trajectory-Level Entropy, and Probability methods. This pattern mirrors Majority Voting behavior, where pure on-policy training (mini-batch size = 64) maintains optimal coupling between samples and their corresponding certainty-based rewards. The underlying mechanism remains consistent: certainty estimates computed from current policy states become unreliable when applied to samples generated from earlier policy iterations.

Notably, Self-Certainty exhibits exceptional robustness to mini-batch size variations, as shown in \Cref{fig:tune_mbs_sc}. This method demonstrates minimal sensitivity to on-policy ratio changes, suggesting that KL divergence-based certainty computation may be inherently more stable across different temporal policy alignments. This robustness likely stems from Self-Certainty's reliance on logit distribution comparisons rather than explicit probability estimates, making it less susceptible to the temporal inconsistencies that destabilize other certainty-based approaches. Among the certainty-based methods, Self-Certainty thus offers superior stability but at the cost of lower overall performance improvements.

% =================================================================

\begin{figure*}[!t]
    % % \vspace{-30pt}
    \centering
    \includegraphics[width=1\linewidth]{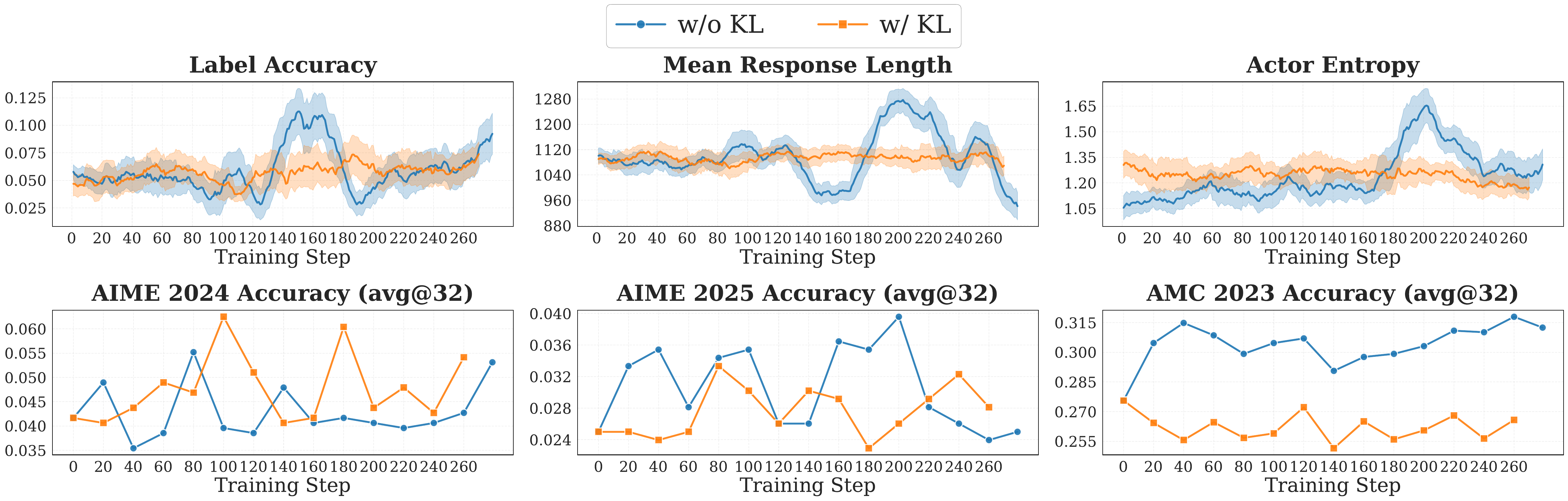}
    % \vspace{-20pt}
    \caption{Effect of KL divergence regularization on Self-Certainty performance.}
    \label{fig:tune_kl_sc}
    % \vspace{-5pt}
\end{figure*}

\begin{figure*}[!t]
    % % \vspace{-30pt}
    \centering
    \includegraphics[width=1\linewidth]{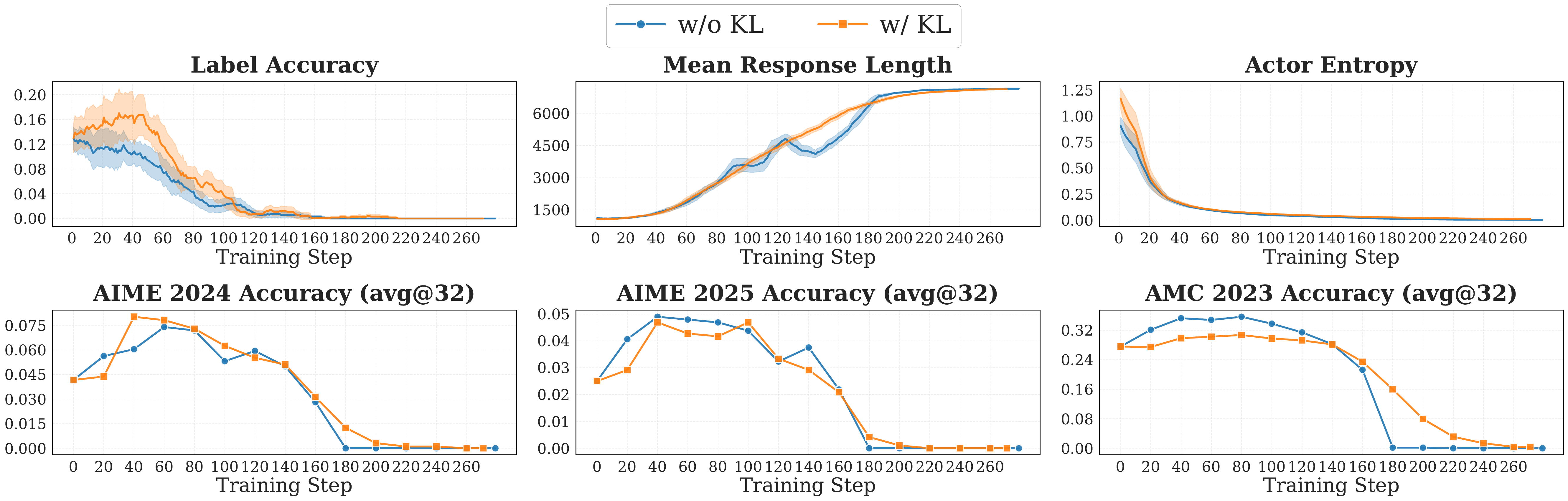}
    % \vspace{-20pt}
    \caption{Effect of KL divergence regularization on Token-Level Entropy performance.}
    \label{fig:tune_kl_token_ent}
    % \vspace{-5pt}
\end{figure*}

\begin{figure*}[!t]
    % % \vspace{-30pt}
    \centering
    \includegraphics[width=1\linewidth]{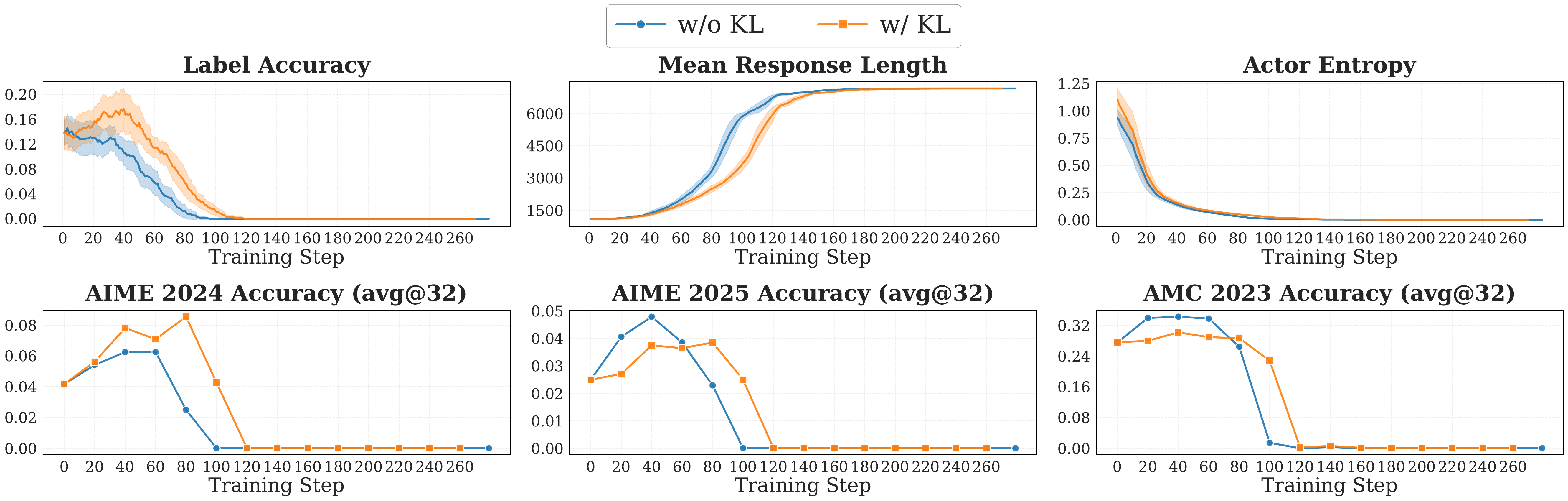}
    % \vspace{-20pt}
    \caption{Effect of KL divergence regularization on Trajectory-Level Entropy performance.}
    \label{fig:tune_kl_traj_ent}
    % \vspace{-5pt}
\end{figure*}

\begin{figure*}[!t]
    % % \vspace{-30pt}
    \centering
    \includegraphics[width=1\linewidth]{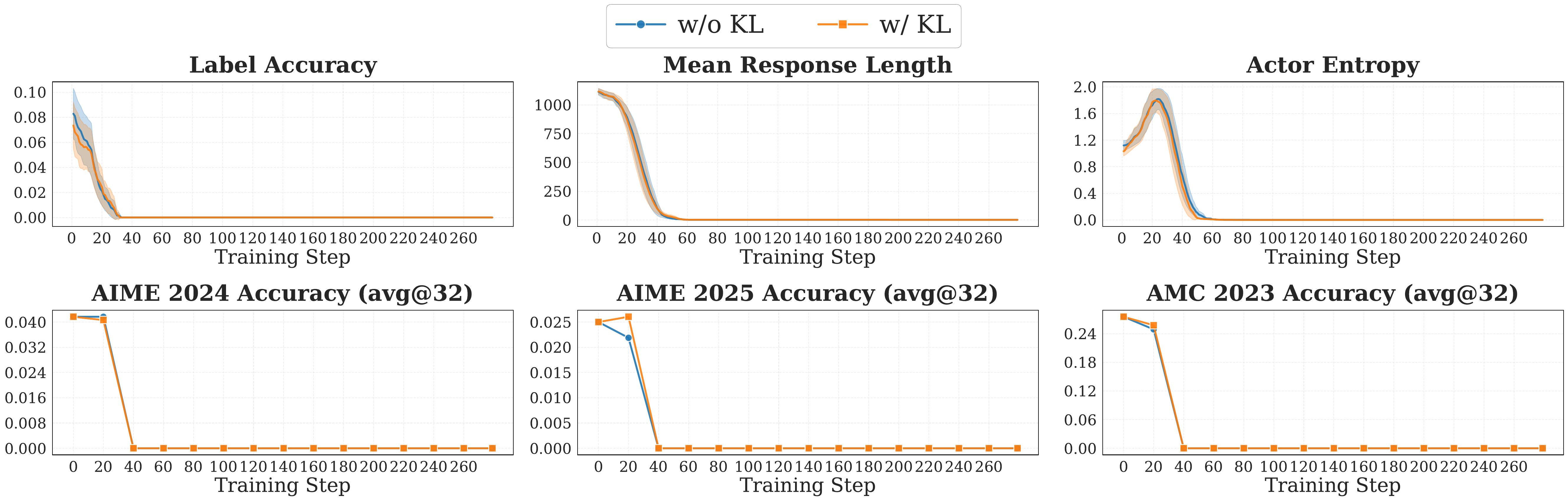}
    % \vspace{-20pt}
    \caption{Effect of KL divergence regularization on Probability performance.}
    \label{fig:tune_kl_prob}
    % \vspace{-5pt}
\end{figure*}

\noindent\textbf{KL Divergence Regularization.} KL regularization effects on certainty-based methods mirror the limited impact observed in Majority Voting, confirming that this regularization technique fails to address the fundamental instabilities inherent in training. However, subtle differences in method responses provide insights into the interaction between regularization and different uncertainty estimation approaches.

Results across all certainty-based methods (\Cref{fig:tune_kl_sc,fig:tune_kl_token_ent,fig:tune_kl_traj_ent,fig:tune_kl_prob}) show minimal impact on both training dynamics and downstream performance. KL regularization neither prevents eventual model collapse (except for Self-Certainty) nor significantly improves validation scores, consistent with our findings for Majority Voting. The underlying issue persists: regularization techniques designed for fixed reward signals cannot effectively stabilize systems where rewards themselves evolve with policy changes.

Interestingly, Token-Level and Trajectory-Level Entropy methods exhibit slightly more pronounced benefits from KL regularization, as evidenced by modest improvements in \textbf{Label Accuracy} curves. While these improvements remain insufficient to prevent collapse, they suggest that entropy-based certainty estimation may have marginally better compatibility with KL-based stabilization approaches. This observation aligns with the superior temperature robustness of these methods, indicating that entropy-based uncertainty measures may be inherently more amenable to regularization techniques than probability-based or KL-based certainty estimates.

% =================================================================

\begin{figure*}[!t]
    % % \vspace{-30pt}
    \centering
    \includegraphics[width=1\linewidth]{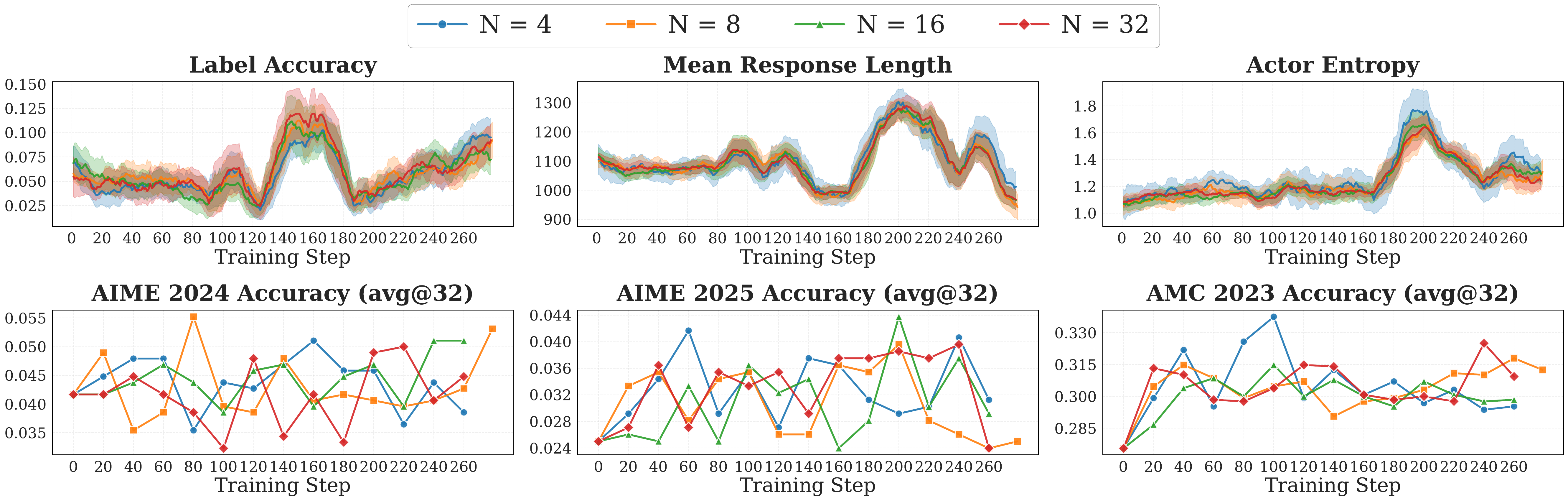}
    % \vspace{-20pt}
    \caption{Effect of rollout number on Self-Certainty performance.}
    \label{fig:tune_rollout_n_sc}
    % \vspace{-5pt}
\end{figure*}

\begin{figure*}[!t]
    % % \vspace{-30pt}
    \centering
    \includegraphics[width=1\linewidth]{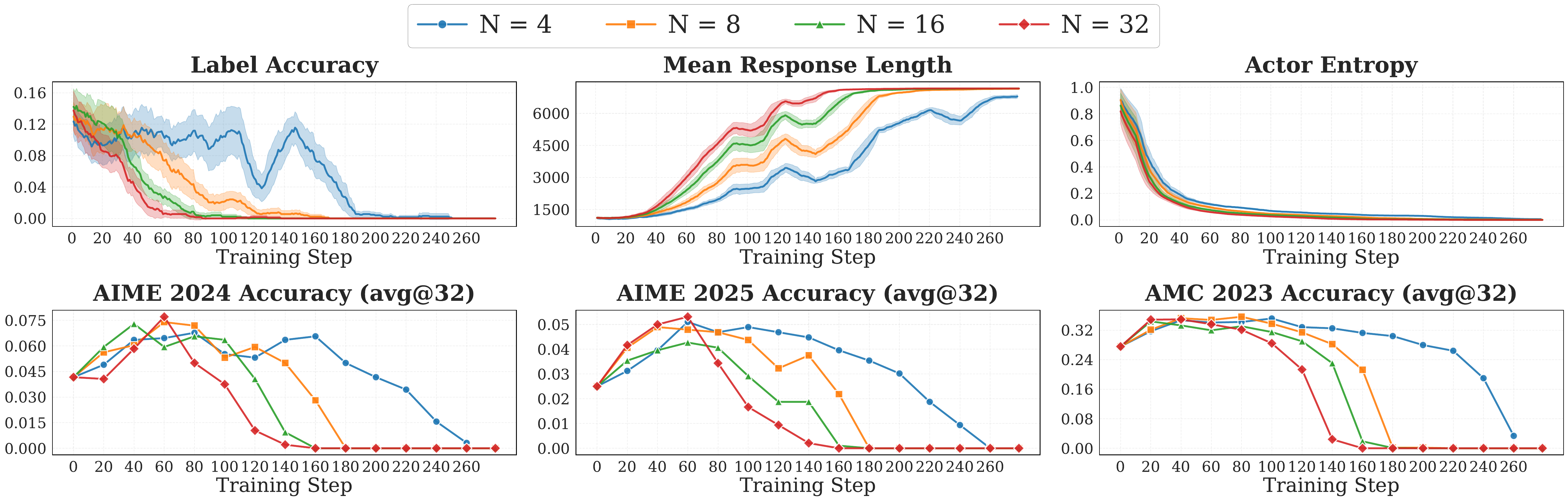}
    % \vspace{-20pt}
    \caption{Effect of rollout number on Token-Level Entropy performance.}
    \label{fig:tune_rollout_n_token_ent}
    % \vspace{-5pt}
\end{figure*}

\begin{figure*}[!t]
    % % \vspace{-30pt}
    \centering
    \includegraphics[width=1\linewidth]{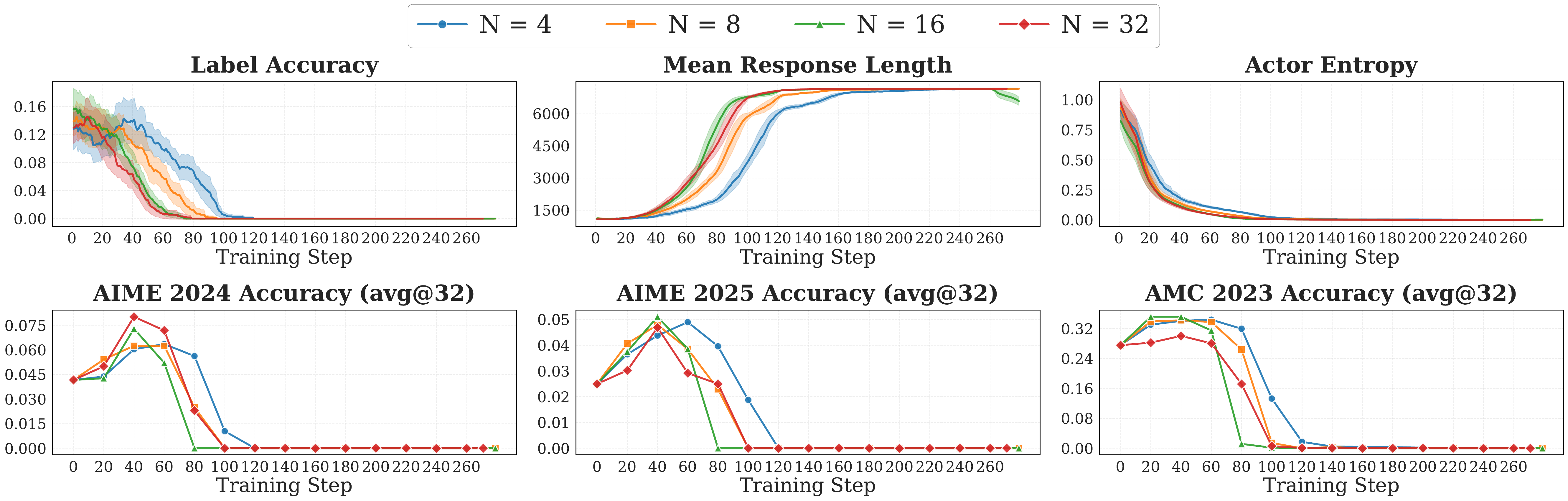}
    % \vspace{-20pt}
    \caption{Effect of rollout number on Trajectory-Level Entropy performance.}
    \label{fig:tune_rollout_n_traj_ent}
    % \vspace{-5pt}
\end{figure*}

\begin{figure*}[!t]
    % % \vspace{-30pt}
    \centering
    \includegraphics[width=1\linewidth]{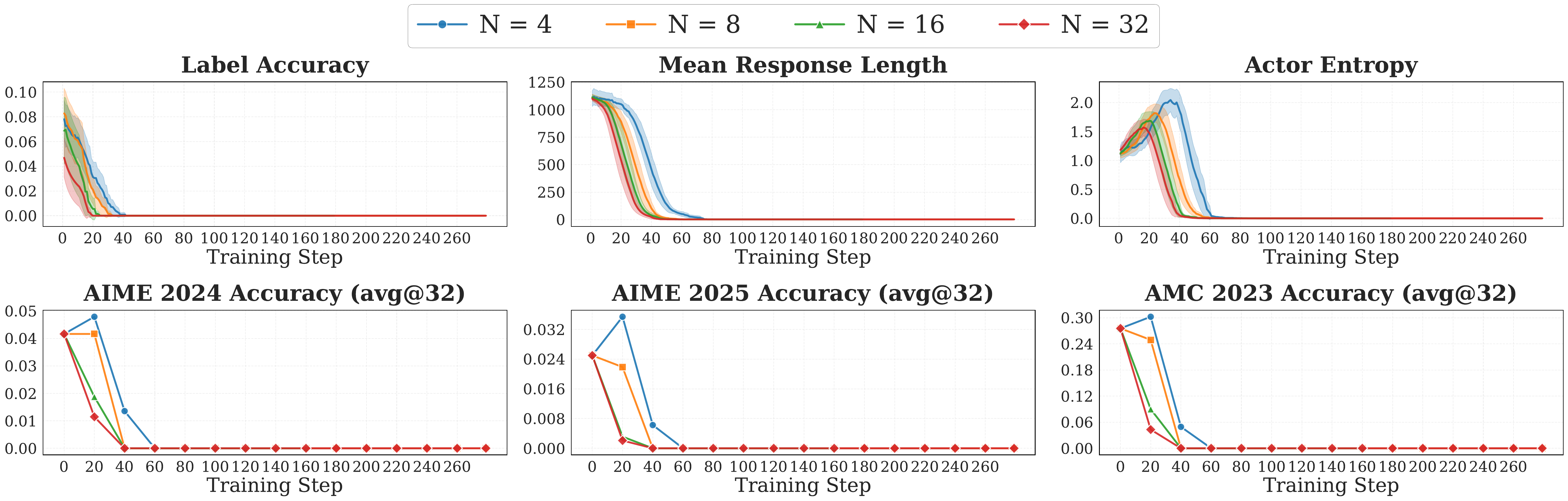}
    % \vspace{-20pt}
    \caption{Effect of rollout number on Probability performance.}
    \label{fig:tune_rollout_n_prob}
    % \vspace{-5pt}
\end{figure*}

\noindent\textbf{Number of Rollouts.} Rollout count effects reveal consistent patterns across most certainty-based methods, with one notable exception that highlights fundamental differences in underlying reward computation mechanisms. These findings provide crucial insights into the sample size requirements for reliable uncertainty estimation.

\Cref{fig:tune_rollout_n_token_ent,fig:tune_rollout_n_traj_ent,fig:tune_rollout_n_prob} demonstrate behavior parallel to Majority Voting: larger rollout counts ($N \geq 16$) accelerate model convergence and premature collapse, as evidenced by rapid degradation in validation benchmarks and \textbf{Label Accuracy}. This pattern suggests that the self-reinforcing dynamics observed in ensemble voting also manifest in certainty-based reward assignment, where higher sample sizes amplify confidence in potentially incorrect assessments, leading to faster convergence toward suboptimal solutions.

However, Self-Certainty exhibits markedly different behavior, as shown in \Cref{fig:tune_rollout_n_sc}. This method demonstrates remarkable stability across all rollout configurations, maintaining consistent performance without collapse or significant improvement. This unique characteristic stems from Self-Certainty's reliance on KL divergence between uniform and logit distribution. This fundamental difference in reward computation makes Self-Certainty inherently more robust to sample size variations, though at the cost of limited performance improvements throughout training.

\section{Other Experimental Details}

\subsection{Prompts for Self-Verification}
\label{app:verification_prompts}

\begin{tcolorbox}
[title=\textbf{Prompt 1: Adapted from RLSR~\citep{simonds2025rlsrreinforcementlearningself})}, colback=Salmon!20, colframe=Salmon!90!Black]
Verify: \\
1. "\textcolor{blue}{\{expr\}}" only contains numbers from \textcolor{blue}{\{nums\}}.\\
2. Every number from \textcolor{blue}{\{nums\}} is used in "\textcolor{blue}{\{expr\}}" and is used only once.\\
3. "\textcolor{blue}{\{expr\}}" is a valid arithmetic expression and not an equation.\\
4. "\textcolor{blue}{\{expr\}}" equals \textcolor{blue}{\{target\}}.\\
If \textbf{all} checks pass, return \texttt{\textbackslash boxed\{\textcolor{blue}{True}\}}; otherwise, return \texttt{\textbackslash boxed\{\textcolor{blue}{False}\}}.
\end{tcolorbox}

\begin{tcolorbox}[title=\textbf{Prompt 2}, colback=Salmon!20, colframe=Salmon!90!Black]
You are a strict mathematical verifier. Your task is to check whether the given expression correctly solves the arithmetic puzzle.\\[4pt]
Do NOT attempt to find or generate a new expression yourself. You must only analyze and evaluate the provided expression "\textcolor{blue}{\{expr\}}".\\[4pt]
Verification steps:\\
1. If "\textcolor{blue}{\{expr\}}" is missing, empty, or not a valid arithmetic expression (for example, if it contains words instead of numbers and operators), immediately output \texttt{\textbackslash boxed\{\textcolor{blue}{False}\}} and your task is over.\\
2. Check that "\textcolor{blue}{\{expr\}}" only uses numbers from \textcolor{blue}{\{nums\}}.\\
3. Each number from \textcolor{blue}{\{nums\}} must appear exactly once in "\textcolor{blue}{\{expr\}}".\\
4. The expression must contain only valid arithmetic operators: +, -, *, /, and parentheses.\\
5. Evaluate "\textcolor{blue}{\{expr\}}" numerically. If the computed result equals \textcolor{blue}{\{target\}} (within a tolerance of 1e-6), it passes this check.\\[4pt]
If and only if all checks pass, output \texttt{\textbackslash boxed\{\textcolor{blue}{True}\}}. Otherwise, output \texttt{\textbackslash boxed\{\textcolor{blue}{False}\}}.
\end{tcolorbox}

\begin{figure*}[!t]
    \centering
    \includegraphics[width=1\linewidth]{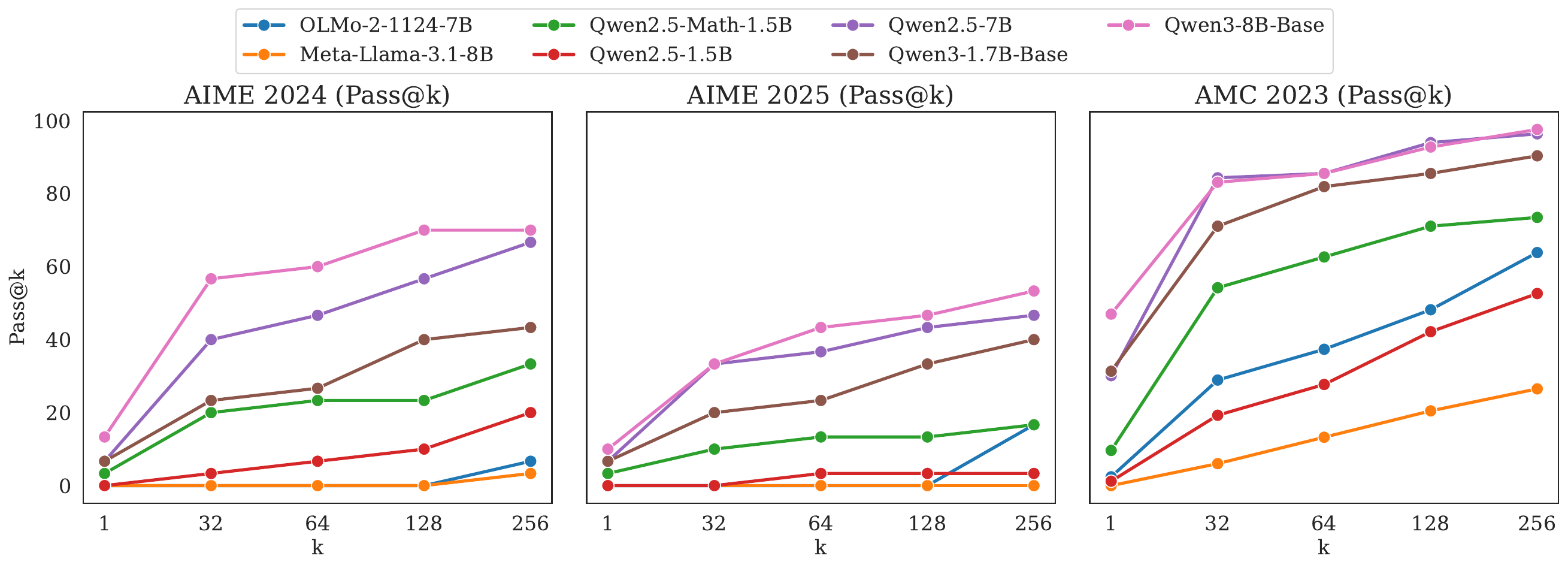}
    \caption{$Pass@k$ performance across different validation benchmarks and models, showing how trends vary as $k$ increases from 1 to 256.}
    \label{fig:5p5p1_apdx}
\end{figure*}

\subsection{Impact of Backbone Model}
\label{app:backbone}

\begin{table}[t]
\centering
\caption{Model configurations for backbone experiments. Models are categorized by family, training stage, and size.}
\resizebox{.8\textwidth}{!}{
\begin{tabular}{llccc}
\toprule
\textbf{Family} & \textbf{Model} & \textbf{Abbrev.} & \textbf{Stage} & \textbf{Size} \\
\midrule
\multirow{6}{*}{Qwen} 
 & Qwen2.5-1.5B & Q2.5-1.5B & Base & 1.5B \\
 & Qwen2.5-Math-1.5B & Q2.5-Math-1.5B & Math Base & 1.5B \\
 & DeepSeek-R1-Distill-Qwen-1.5B & DS-R1-1.5B & SFT & 1.5B \\
 & Qwen2.5-1.5B-Instruct & Q2.5-1.5B-Inst & Instruct & 1.5B \\
 & Qwen3-1.7B-Base & Q3-1.7B & Base & 1.7B \\
 & Qwen3-4B-Base & Q3-4B & Base & 4B \\
\midrule
\multirow{5}{*}{Llama} 
 & Meta-Llama-3.1-8B & L3.1-8B & Base & 8B \\
 & OctoThinker-8B-Short-Base & Octo-8B & Math Base & 8B \\
 & OctoThinker-3B-Short-Base & Octo-3B & Math Base & 3B \\
 & Llama-3.1-Tulu-3-8B-SFT & L3.1-8B-Tulu-SFT & SFT & 8B \\
 & Meta-Llama-3.1-8B-Instruct & L3.1-8B-Inst & Instruct & 8B \\
\bottomrule
\end{tabular}}
\label{tab:model_configs}
\end{table}

We investigate how backbone models influence training stability and performance across three key dimensions: training stage, model size, and architectural generation. Our analysis employs 11 models from Qwen and Llama families (detailed configurations in \Cref{tab:model_configs}), selected to provide systematic coverage of these factors. This selection is motivated by recent findings showing distinct architectural behaviors~\citep{gandhi2025cognitive} and potential data contamination concerns~\citep{wu2025reasoning}, making cross-architecture comparison essential. All models are trained on DAPO-17k using optimal hyperparameters from \Cref{app:hyperparameter} with Majority Voting as the representative intrinsic reward.

\subsubsection{Horizontal Analysis: Training Stage Impact}

\begin{figure*}[!t]
    % % \vspace{-30pt}
    \centering
    \includegraphics[width=1\linewidth]{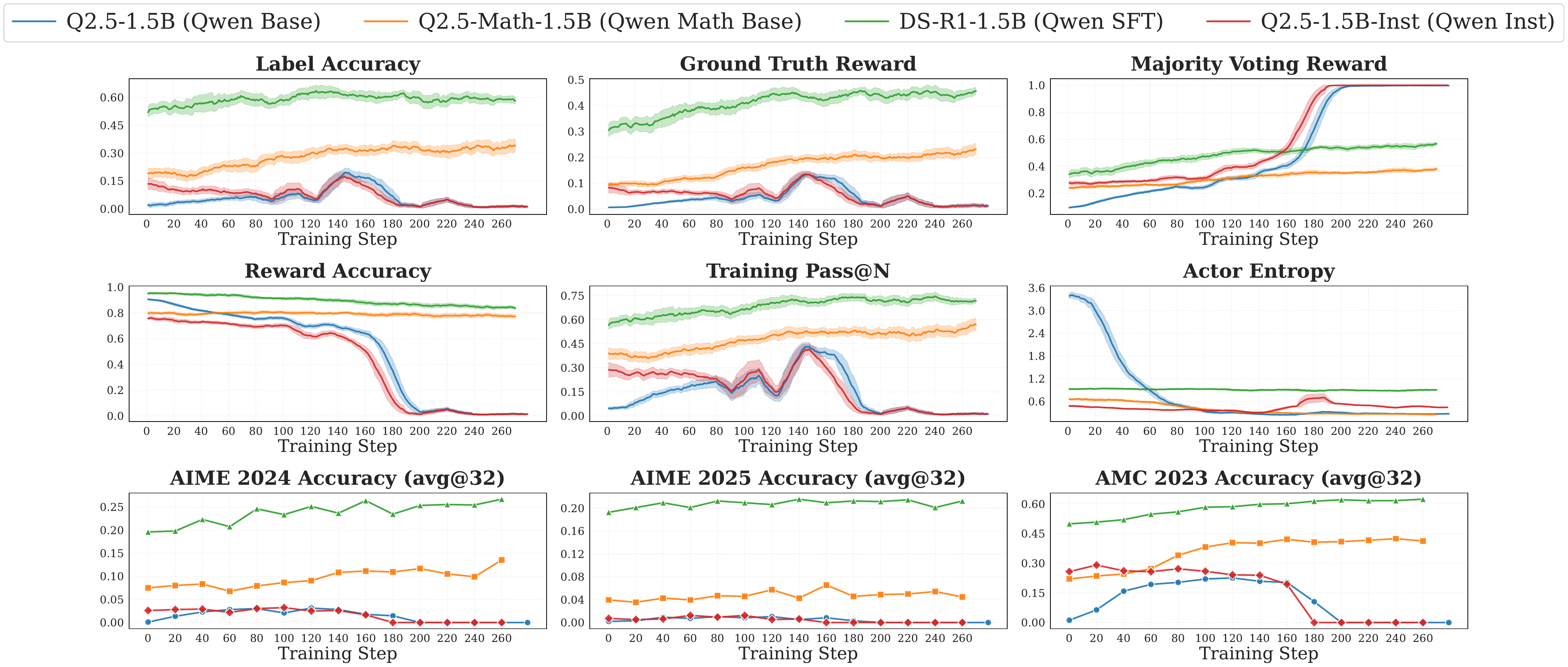}
    % \vspace{-20pt}
    \caption{Training dynamics across different training stages in Qwen family models.}
    \label{fig:qwen_stages}
    % \vspace{-10pt}
\end{figure*}

\begin{figure*}[!t]
    % % \vspace{-30pt}
    \centering
    \includegraphics[width=1\linewidth]{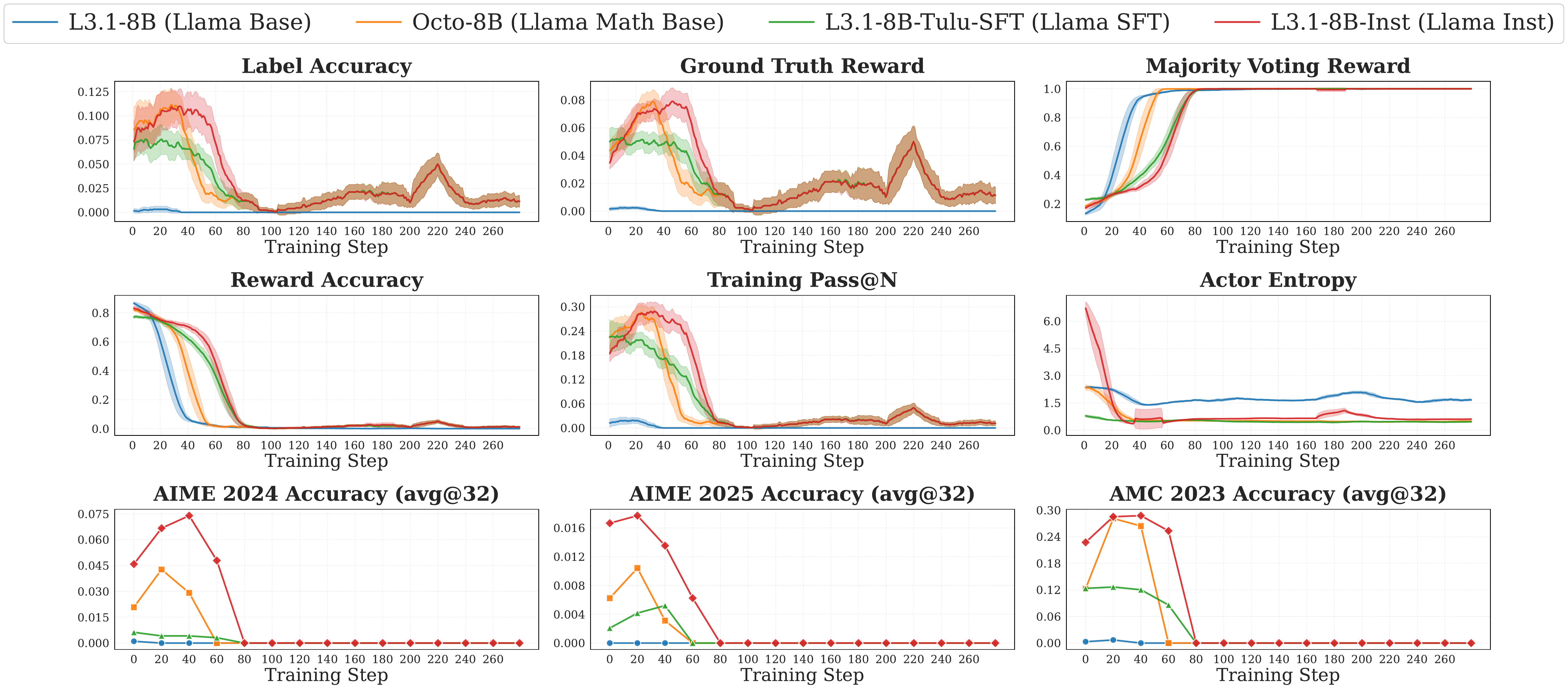}
    % \vspace{-20pt}
    \caption{Training dynamics across different training stages in Llama family models.}
    \label{fig:llama_stages}
    % \vspace{-5pt}
\end{figure*}

Training stage progression reveals distinct stability patterns between architectures. For the \textbf{Qwen family} (\Cref{fig:qwen_stages}), math-specialized and SFT models demonstrate superior stability, maintaining \textbf{Majority Voting Reward} within 0.3-0.6 while base and instruct variants reach saturation (1.0) by step 180. Math specialization and strong supervised fine-tuning (DS-R1-1.5B) create robust foundations for optimization compared to raw base models or non-math aligned instruct variants.

The \textbf{Llama family} exhibits contrasting behavior: all variants eventually succumb to reward hacking with different collapse timing, where base models fail earliest (step 40), followed by math-specialized, SFT, then instruct versions (detailed analysis in \Cref{fig:llama_stages}). This architectural difference highlights Qwen's fundamental advantage in providing genuine stability.

\subsubsection{Vertical Analysis: Scale and Generation Effects}

\begin{figure*}[!t]
    % % \vspace{-35pt}
    \centering
    \includegraphics[width=1\linewidth]{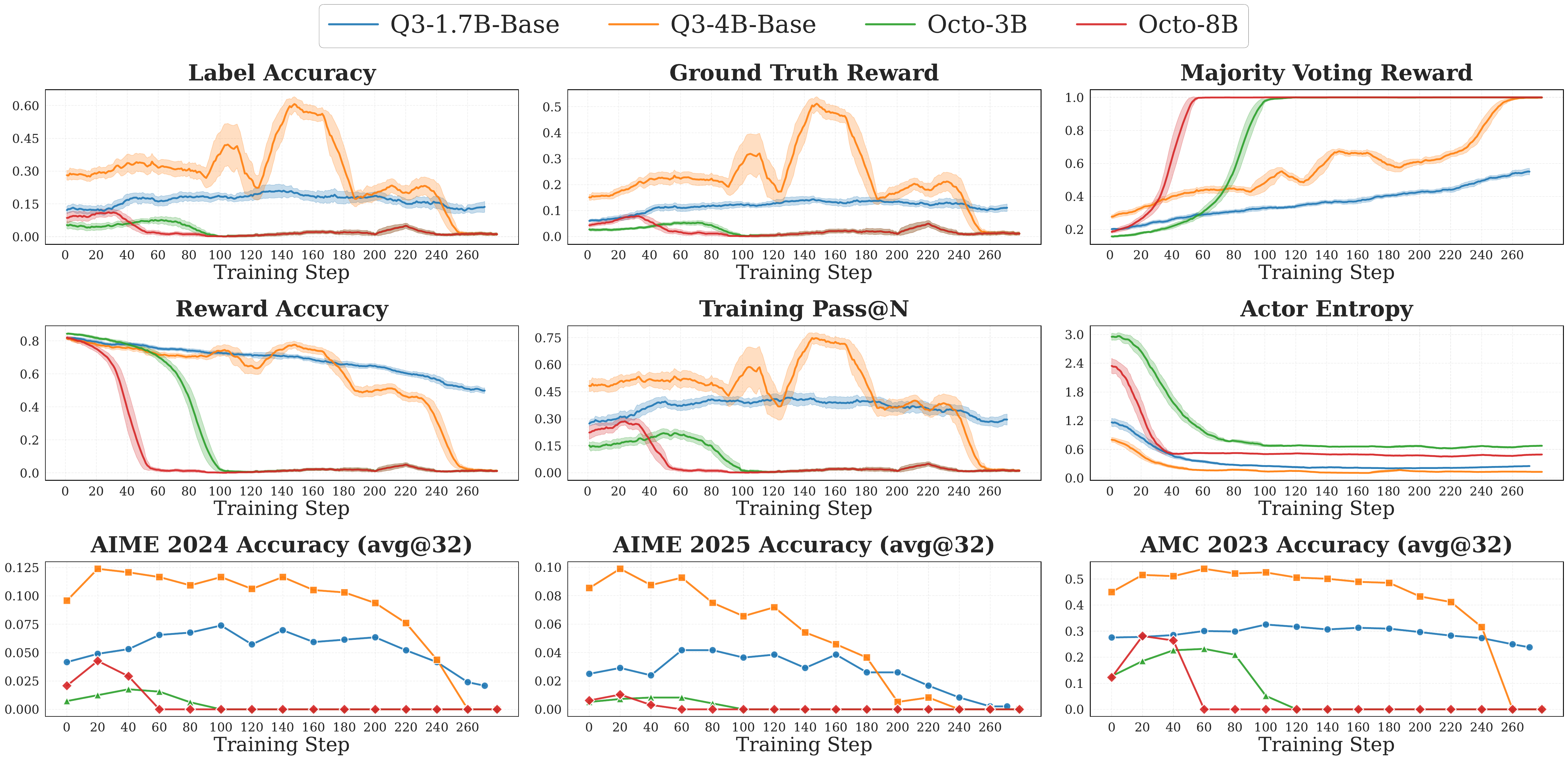}
    % \vspace{-20pt}
    \caption{Effect of model size on stability across both Qwen and Llama families.}
    \label{fig:model_size}
    % \vspace{-10pt}
\end{figure*}

\begin{figure*}[!t]
    % % \vspace{-30pt}
    \centering
    \includegraphics[width=1\linewidth]{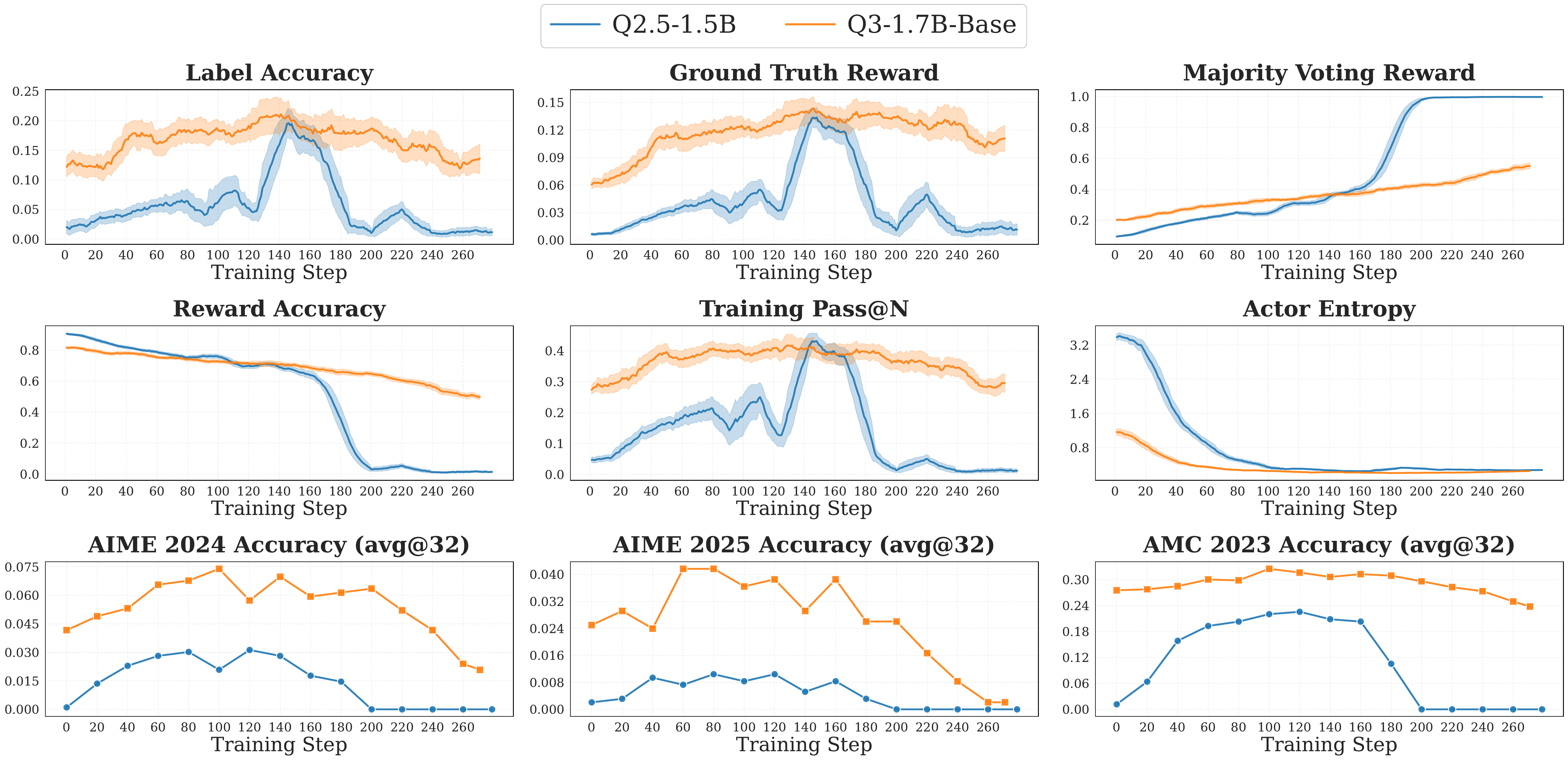}
    % \vspace{-20pt}
    \caption{Comparison of Qwen 2.5 and Qwen 3 generations across comprehensive training metrics. Results reveal improved stability in the newer generation, with Qwen3 models demonstrating more gradual and controlled training dynamics compared to Qwen2.5 counterparts.}
    \label{fig:qwen_series}
    % \vspace{-5pt}
\end{figure*}

\textbf{Model size analysis} (\Cref{fig:model_size}) reveals counterintuitive scaling effects: smaller models consistently outperform larger variants. Q3-1.7B maintains stability significantly longer than Q3-4B, while Octo-3B outlasts Octo-8B by about 40 steps. This suggests larger models' increased capacity amplifies sensitivity to noisy pseudo-rewards, accelerating convergence toward degenerate solutions and challenging conventional scaling assumptions.

\textbf{Architectural generation comparison} shows clear improvements in newer versions. Qwen3 models exhibit superior stability compared to Qwen2.5 counterparts, with Q3-1.7B-Base demonstrating more controlled \textbf{Majority Voting Reward} progression (comprehensive comparison in \Cref{fig:qwen_series}). These improvements likely stem from better-calibrated uncertainty estimates and enhanced representation learning supporting more reliable pseudo-reward computation.

\subsection{Impact of Training Dataset}
\label{app:train_dataset}

\begin{figure*}[!t]
    % \vspace{-30pt}
    \centering
    \includegraphics[width=1\linewidth]{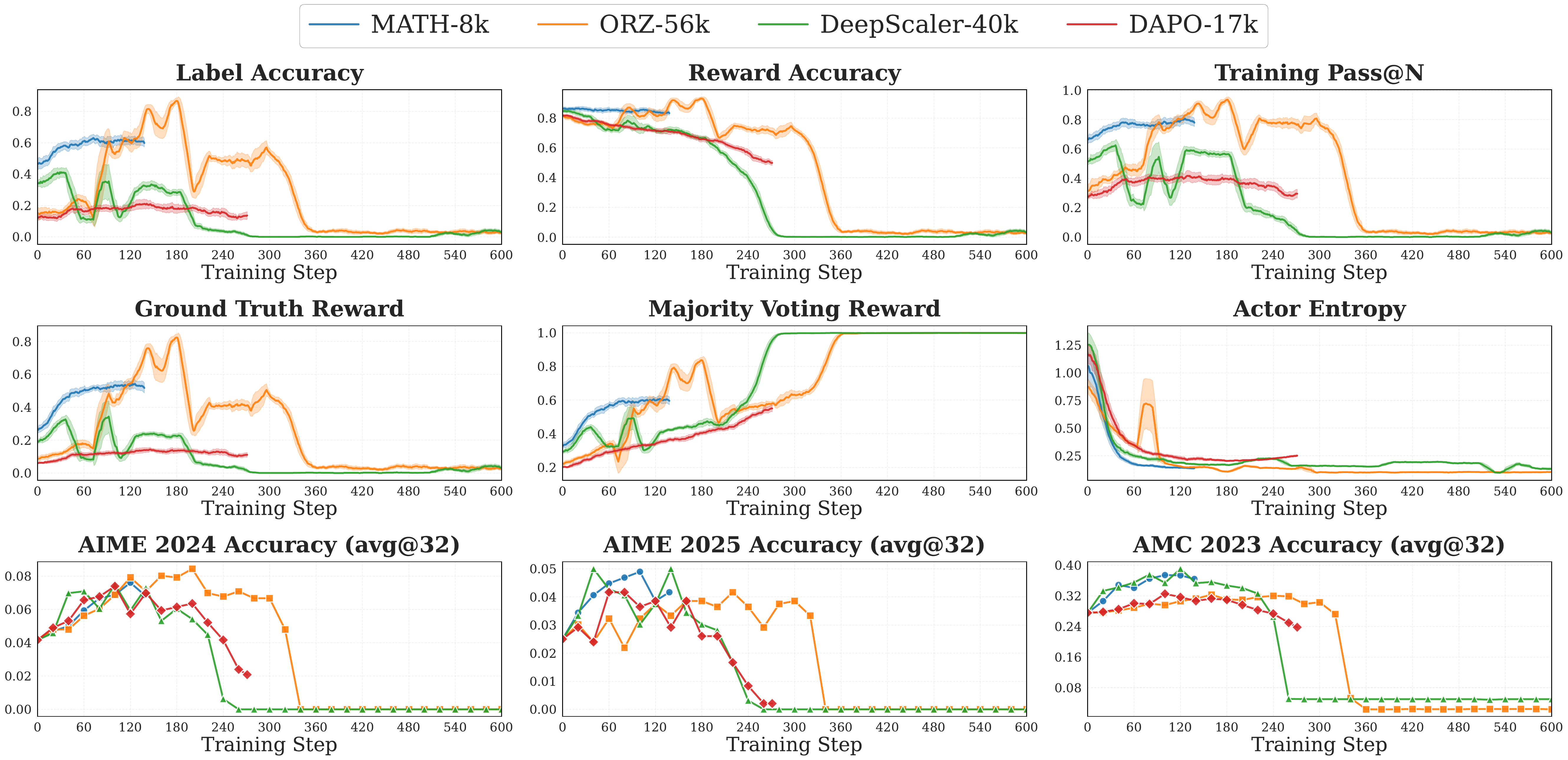}
    % \vspace{-20pt}
    \caption{Comparison of different training data sources.}
    \label{fig:datasets}
    % \vspace{-5pt}
\end{figure*}

\noindent\textbf{Setup.} We investigate how different training dataset influence training stability and performance, focusing on math reasoning, utilizing MATH-8k \citep{hendrycks2021measuring}, DeepScaleR-40k \citep{luo2025deepscaler}, DAPO-17k \citep{yu2025dapo} and ORZ-56k \citep{hu2025open}, all settings are trained on Qwen3-1.7B-Base with 1 epoch using optimal hyperparameters from \Cref{app:hyperparameter}, and also evaluated on three validation benchmarks.

\noindent\textbf{Results.} We can see from \Cref{fig:datasets}, much larger datasets (DeepScaler-40k and ORZ-56k) exhibits clear reward hacking trend, while smaller datasets settings are on its steady or rise stage, indicating that current intrinsic methods may see its short-sighted incremental improvements at the early stage, while extending it much larger training corpora, it inevitably encounter the reward hacking.

\subsection{Does Intrinsic Reward Methods Truly Improve Capabilities?}

\begin{wraptable}{r}{0.57\textwidth}
% \begin{table}[!t]
  \centering
    \caption{Comparisons before and after TTRL. The results show that TTRL-trained models significantly surpass the base models' majority-vote performance in accuracy.}
    \resizebox{.57\textwidth}{!}{
    \begin{tabular}{lcc}
    \toprule
    \textbf{Metric} & \textbf{Qwen2.5-Math-1.5B} & \textbf{Qwen2.5-Math-7B} \\ \midrule
    \textit{maj@2}   & $28.09$ & $33.23$ \\
    \textit{maj@4}   & $33.68$ & $41.20$ \\
    \textit{maj@8}   & $37.23$ & $45.73$ \\
    \textit{maj@16}  & $38.10$ & $47.98$ \\
    \textit{maj@32}  & $38.43$ & $49.19$ \\
    \textit{maj@64}  & $38.17$ & $49.87$ \\
    \textit{maj@128} & $37.86$ & $50.14$ \\
    \textit{maj@256} & $37.55$ & $50.40$ \\
    \textit{maj@512} & $37.41$ & $50.60$ \\
    \rowcolor{gray!20} \textit{maj@1024} & $37.30$ & $50.79$ \\ \midrule
    \rowcolor{lightblue} \textit{avg@32}~(w/ TTRL) & $48.90$ & $68.10$ \\
    $\Delta$ & $+11.60$ & $+17.31$ \\
    \bottomrule
    \end{tabular}
    }
    \label{tab:ttrl_maj1024}
% \end{table}
\end{wraptable}
\Cref{sec:rise} demonstrates that prevailing intrinsic reward approaches predominantly leverage uncertainty reduction as a mechanism for enhancing performance.
This observation motivates a critical question: do such approaches truly enhance a model’s capability, or do they merely improve the self-consistency of its outputs?
Take the TTRL method as an example. TTRL explicitly models the self-consistency across $m$ model outputs through majority voting and leverages it as a supervisory signal. This design seems to suggest that TTRL may simply push the model toward the performance upper bound implied by the base model under majority voting. In other words, while TTRL might steadily improve the $pass@1$ metric, it would be unlikely to surpass the base model’s $maj@m$ performance, thereby failing to deliver substantive gains beyond consistency alignment.

However, our experiments reveal the opposite, as shown in \Cref{tab:ttrl_maj1024}.
Specifically, we applied TTRL to Qwen2.5-Math-1.5B and Qwen2.5-Math-7B on AIME 2024 (30 samples) with a train batch size of 30 for 100 epochs, and directly compared the base models’ $maj@1024$ with the $pass@1$ ($avg@32$) of the TTRL-trained models. Since the majority voting performance converges rapidly once the sample size reaches $32$, $maj@1024$ can be reasonably regarded as a close approximation to $maj@\infty$. Strikingly, our results show that even the $pass@1$ metric of the TTRL-trained models significantly exceeds the $maj@\infty$ performance of the base models.
This finding demonstrates that TTRL does far more than enforce internal self-consistency: it genuinely enhances the model’s ability to generate accurate predictions. Put differently, TTRL enables the model to solve a broader range of problems than the base model, thereby delivering meaningful improvements in real-world performance.

% \section*{Author Contributions}
% If you'd like to, you may include a section for author contributions as is done
% in many journals. This is optional and at the discretion of the authors.

% \section*{Acknowledgments}
% Use unnumbered first level headings for the acknowledgments. All
% acknowledgments, including those to funding agencies, go at the end of the paper.

% \section*{Ethics Statement}
% Authors can add an optional ethics statement to the paper. 
% For papers that touch on ethical issues, this section will be evaluated as part of the review process. The ethics statement should come at the end of the paper. It does not count toward the page limit, but should not be more than 1 page. 

\end{document}